\newcommand{\randAE}{\emph{Adversarial Distribution}}
\newtheorem{thm}{Theorem}
\newtheorem{Lem}{Lemma}
\newtheorem{corollary}{Corollary}[thm] 
\begin{document}

\title{Certifiable Black-Box Attacks with Randomized Adversarial Examples: Breaking Defenses with Provable Confidence}


\author{Hanbin Hong}
\affiliation{%
  \institution{University of Connecticut}
  \city{Storrs}
  \state{Connecticut}
  \country{USA}
}

\author{Xinyu Zhang}
\affiliation{%
  \institution{Zhejiang University}
  \city{Hangzhou}
  \state{Zhejiang}
  \country{China}
}

\author{Binghui Wang}
\affiliation{%
  \institution{Illinois Institute of Technology}
  \city{Chicago}
  \state{Illinois}
  \country{USA}
}

\author{Zhongjie Ba}
\affiliation{%
  \institution{Zhejiang University}
  \city{Hangzhou}
  \state{Zhejiang}
  \country{China}
}

\author{Yuan Hong}
\affiliation{%
  \institution{University of Connecticut}
  \city{Storrs}
  \state{Connecticut}
  \country{USA}
}

\begin{abstract}
Black-box adversarial attacks have demonstrated strong potential to compromise machine learning models by iteratively querying the target model or leveraging transferability from a local surrogate model.
Recently, such attacks can be effectively mitigated by state-of-the-art (SOTA) defenses, e.g., detection via the pattern of sequential queries, or injecting noise into the model. To our best knowledge, we take the first step to study a new paradigm of black-box attacks with provable guarantees -- certifiable black-box attacks that can guarantee the attack success probability (ASP) of adversarial examples before querying over the target model. This new black-box attack unveils significant vulnerabilities of machine learning models, compared to traditional empirical black-box attacks, e.g., breaking strong SOTA defenses with provable confidence, constructing a space of (infinite) adversarial examples with high ASP, and the ASP of the generated adversarial examples is theoretically guaranteed without verification/queries over the target model. Specifically, we establish a novel theoretical foundation for ensuring the ASP of the black-box attack with randomized adversarial examples (AEs). Then, we propose several novel techniques to craft the randomized AEs while reducing the perturbation size for better imperceptibility. Finally, we have comprehensively evaluated the certifiable black-box attacks on the CIFAR10/100, ImageNet, and LibriSpeech datasets, while benchmarking with 16 SOTA black-box attacks, against various SOTA defenses in the domains of computer vision and speech recognition. Both theoretical and experimental results have validated the significance of the proposed attack.\footnote{The code and all the benchmarks are available at \url{https://github.com/datasec-lab/CertifiedAttack}, and the full version can be accessed at \url{https://arxiv.org/abs/2304.04343}.}
\end{abstract}


\begin{CCSXML}
<ccs2012>
   <concept>
       <concept_id>10002978.10002986.10002989</concept_id>
       <concept_desc>Security and privacy~Formal security models</concept_desc>
       <concept_significance>300</concept_significance>
       </concept>
 </ccs2012>
\end{CCSXML}

\ccsdesc[300]{Security and privacy~Formal security models}
\keywords{Adversarial Attack, Black-box Attack, Certifiable Robustness}


\maketitle

\noindent \textbf{ACM Reference Format:}\\
Hanbin Hong, Xinyu Zhang, Binghui Wang, Zhongjie Ba, and Yuan Hong. 2024. Certifiable Black-Box Attacks with Randomized Adversarial Examples: Breaking Defenses with Provable Confidence. In \textit{Proceedings of the 2024 ACM SIGSAC Conference on Computer and Communications Security (CCS '24)}. ACM, Salt Lake City, UT, USA, 15 pages. https://doi.org/10.1145/3658644.3690343

\section{Introduction}
\label{sec:Introduction}
Machine learning (ML) models have achieved unprecedented success and have been widely integrated into many practical applications. However, 
it is well known that minor perturbations injected into the input data are sufficient to induce model misclassification~\cite{DBLP:conf/iclr/MadryMSTV18}. 
Many state-of-the-art (SOTA) adversarial attacks \cite{DBLP:conf/iclr/MadryMSTV18,DBLP:conf/sp/Carlini017,DBLP:conf/sp/ChenJW20,xie2022universal,DBLP:conf/iclr/KurakinGB17,DBLP:journals/corr/Moosavi-Dezfooli16,DBLP:conf/ccs/ChenZSYH17,DBLP:conf/icml/IlyasEAL18,DBLP:conf/eccv/AndriushchenkoC20,XieYH23,DBLP:conf/iclr/BrendelRB18,codebackdoor} have been proposed to explore the vulnerabilities of a variety of ML models. Wherein, the stringent \emph{black-box} attack is believed to be closer to real-world security practice \cite{DBLP:conf/ccs/PapernotMGJCS17,DBLP:conf/sp/ChenJW20}. 

In black-box attacks, the adversary only has access to the target ML model's outputs (either prediction scores or hard labels). Through iteratively querying the target model, the adversary progressively updates the perturbation until convergence. Existing black-box attack methods primarily utilize gradient estimation \cite{DBLP:conf/ccs/ChenZSYH17,DBLP:conf/icml/IlyasEAL18,DBLP:conf/iclr/ChengLCZYH19,DBLP:conf/eccv/BhagojiHLS18,DBLP:conf/ccs/DuFYCT18}, surrogate models \cite{DBLP:conf/ccs/PapernotMGJCS17,DBLP:conf/cvpr/ShiWH19,DBLP:conf/cvpr/DongPSZ19,DBLP:conf/cvpr/NaseerKHKP20}, or heuristic algorithms \cite{DBLP:conf/iclr/BrendelRB18,DBLP:conf/iccv/BrunnerDTK19,DBLP:conf/icml/LiLWZG19,DBLP:conf/icml/GuoGYWW19,DBLP:conf/eccv/AndriushchenkoC20} to generate adversarial perturbations. Although these attack algorithms can empirically achieve relatively high attack success rates (e.g., on CIFAR-10 \cite{krizhevsky2009learning}), their query process is shown to be easy to detect or interrupt due to the minor perturbation changes and high reliance on the previous perturbation~\cite{li2022blacklight,qin2021RAND,chandrasekaran2020RANDpost,chen2022AAA}. For example, ``Blacklight''~\cite{li2022blacklight} can achieve $100\%$ detection rate on most of the existing black-box attacks by checking the similarity of queries; some ``randomized defense'' methods~\cite{qin2021RAND,chen2022AAA,he2019parametric,liu2018towards,chandrasekaran2020RANDpost} inject random noise to the inputs, outputs, intermediate features or model parameters such that the performance of existing black-box attacks can be significantly degraded (since the query results are obfuscated to be unpredictable). 

To break such types of SOTA defenses \cite{li2022blacklight,qin2021RAND,chen2022AAA,he2019parametric,liu2018towards}, it is challenging to design an effective attack equipped with both \emph{high degree of randomness to bypass the strong detection} (e.g., Blacklight~\cite{li2022blacklight}) and \emph{high robustness to resist randomized defense}. A feasible solution is to add random noise to the adversarial example by the adversary, but it will make the query intractable. Therefore, an innovative method is desirable to carefully craft the adversarial example based on feedback from queries using randomly generated inputs. 

To this end, we propose a novel attack paradigm, termed \emph{Certifiable Attack}, that ensures a provable attack success probability (ASP) on the randomized adversarial examples against the equipped defenses (or no defense). Specifically, our attack strategy integrates random noise into the queries while preserving the adversarial efficacy of these queries. In particular, we model the adversarial examples as a random variable in the input space following an underlying noise distribution $\varphi$, namely ``\randAE''. Then, we design a novel query strategy and establish the theoretical foundation to guarantee the ASP of the distribution throughout the crafting process. A novel framework is also developed to find the initial \randAE, optimize it, and use it to sample the adversarial examples.

\begin{figure*}[!h]
    \centering
    \includegraphics[width=0.95\linewidth]{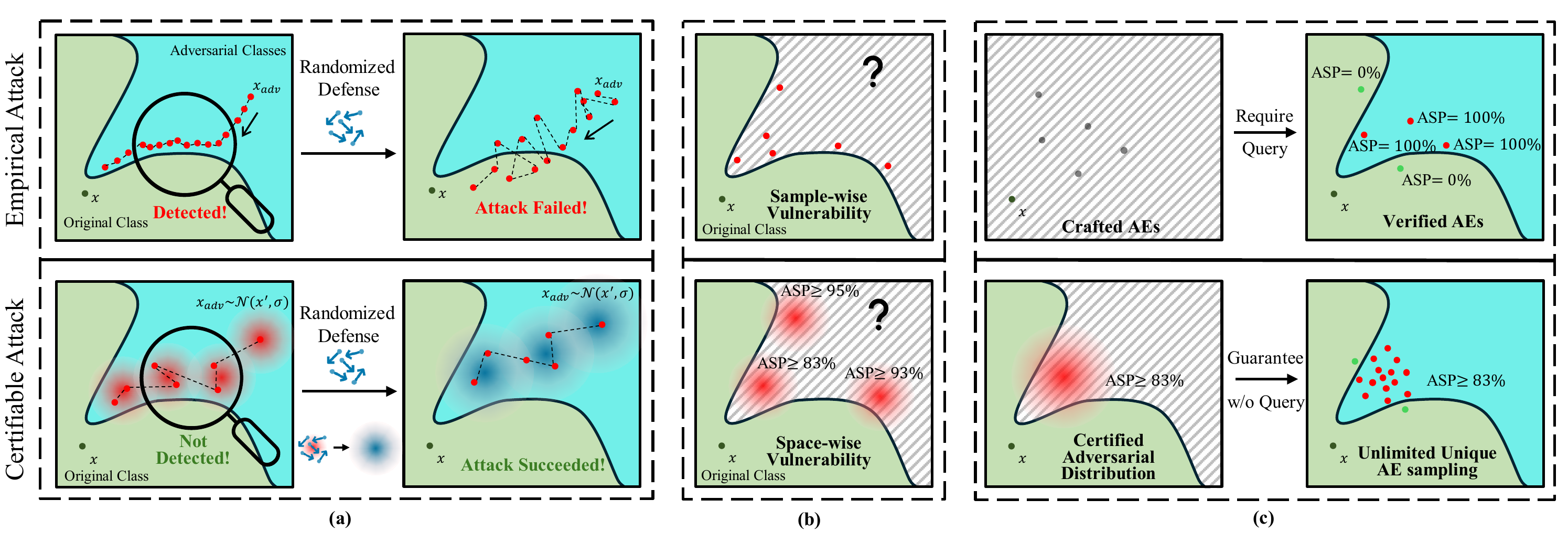}

    \caption{Empirical attacks vs. Certifiable attacks. (a) Certifiable attack can break the SOTA AE detection and randomized defenses. (b) Certifiable attack uncovers space-wise vulnerability rather than sample-wise vulnerability. (c) Once certified, Certifiable Attack can generate unlimited unique AEs with a guaranteed minimum ASP without querying the model for verification, while the empirical attack requires verifying the attack result of crafted AE by query. }
    \label{fig:certifiable attack}
\end{figure*}

\subsection{Certifiable Attacks vs. Empirical Attacks}

Compared with existing empirical black-box attacks, the
Certifiable Attack demonstrates multi-faceted advantages 
(also see Figure \ref{fig:certifiable attack}): 

\begin{enumerate}[label=(\alph*),leftmargin=*]
\setlength\itemsep{0.4em}

\item \textbf{Strong attack to break SOTA defenses.} The randomness in the certifiable attack allows it to effectively bypass detection methods that rely on the similarity between the attacker's sequence of queries (e.g., Blacklight \cite{li2022blacklight}), while traditional empirical attacks often create a suspicious trajectory of highly similar perturbations. The certifiable attack also provides a provable guarantee of success for attacks using randomized inputs, by taking into account the equipped defense and target model, enhancing its resistance to randomized defense \cite{qin2021RAND,chandrasekaran2020RANDpost}. 

\item {\bf Adversarial space vs. Adversarial example (AE).} Distinct from traditional empirical adversarial attacks, which uncover model vulnerabilities with sample-wise inputs, the Certifiable Attack seeks to explore an adversarial input space constructed by an \randAE. This continuous space facilitates the generation of numerous (potentially infinite) adversarial examples with a high ASP, thus revealing a more consistent and severe vulnerability of the target model.

\item \textbf{Adversarial Examples (AEs) sampled from the adversarial distribution are verification-free.} 
Empirical attacks search AEs by iteratively querying the target model and verifying the query outputs (\emph{the final successful AE is also used to query over the target model; then it will be verified and recorded by the defender/target model}). Instead, the certifiable attack crafts the \emph{adversarial distribution} with a guaranteed lower bound of the ASP. Due to the highly dimensional and continuous input space, AEs sampled from the adversarial distribution can be considered unique (with noise in all the dimensions) and have a negligible probability of being recorded by the defender/target model after verification. The ASP of such AEs are theoretically guaranteed (verification-free), and they are new to the defender, posing more challenges for mitigation. 
    
\end{enumerate}

\vspace{-0.1in}

\begin{table*}[]
\centering
\caption{Comparison of state-of-the-art empirical black-box attacks with certifiable attack}
\vspace{-0.1in}
\label{tab:black-box comparison}
\resizebox{\textwidth}{!}{%
\begin{tabular}{lcccccc}
\hline
Black-box Attacks &
  \begin{tabular}[c]{@{}c@{}}Query\\ Type\end{tabular} &
  \begin{tabular}[c]{@{}c@{}}Perturbation\\ Type\end{tabular} &
  \begin{tabular}[c]{@{}c@{}}ASP\\ Guarantee\end{tabular} &
  \begin{tabular}[c]{@{}c@{}}vs. Detection on\\ Attacker's Queries\end{tabular} &
  \begin{tabular}[c]{@{}c@{}}vs. Randomized\\ Pre-process. Defense\end{tabular} &
  \begin{tabular}[c]{@{}c@{}}vs. Randomized\\ Post-process. Defense\end{tabular} \\ \hline
Bandit \cite{Bandit}, NES \cite{DBLP:conf/icml/IlyasEAL18}, Parsimonious \cite{Parsimonious}, Sign \cite{Sign}, Square \cite{DBLP:conf/eccv/AndriushchenkoC20}, ZOSignSGD \cite{ZOSignSGD} &
  Score-based &
  $\ell_\infty$-bounded &
  $\times$ &
  $\times$ &
  $\times$ &
  $\checkmark$ \\
GeoDA \cite{rahmati2020geoda}, HSJ \cite{DBLP:conf/sp/ChenJW20}, Opt \cite{DBLP:conf/iclr/ChengLCZYH19}, RayS \cite{RayS}, SignFlip \cite{SignFlip}, SignOPT \cite{SignOpt} &
  Label-based &
  $\ell_\infty$-bounded &
  $\times$ &
  $\times$ &
  $\times$ &
  $\times$ \\
Bandit \cite{Bandit}, NES \cite{DBLP:conf/icml/IlyasEAL18}, Simple \cite{DBLP:conf/icml/GuoGYWW19}, Square \cite{DBLP:conf/eccv/AndriushchenkoC20}, ZOSignSGD \cite{ZOSignSGD} &
  Score-based &
  $\ell_2$-bounded &
  $\times$ &
  $\times$ &
  $\times$ &
  $\checkmark$ \\
Boundary \cite{DBLP:conf/iclr/BrendelRB18}, GeoDA \cite{rahmati2020geoda}, HSJ \cite{DBLP:conf/sp/ChenJW20}, Opt \cite{DBLP:conf/iclr/ChengLCZYH19}, SignOPT \cite{SignOpt} &
  Label-based &
  $\ell_2$-bounded &
  $\times$ &
  $\times$ &
  $\times$ &
  $\times$ \\
PointWise \cite{PointWise}, SparseEvo \cite{SparseEvo} &
  Label-based &
  Optimized &
  $\times$ &
  $\times$ &
  $\times$ &
  $\times$ \\ \hline
Certifiable Attack (ours) &
  Label-based &
  Optimized &
  $\checkmark$ &
  $\checkmark$ &
  $\checkmark$ &
  $\checkmark$ \\ \hline
\end{tabular}%
}\vspace{-0.1in}
\end{table*}

\subsection{Randomization for Certifiable Attacks}

To pursue certifiable attacks, we theoretically bound the ASP of \randAE~
based on a novel way of utilizing randomized smoothing \cite{DBLP:conf/icml/CohenRK19}, a technique achieving great success in the certified defenses with probabilistic guarantees.

The design for the randomization-based certifiable attack follows an intuitive goal, i.e., \emph{ensuring that the classification results are consistently wrong over the distribution}. However, many new significant challenges should be addressed. First, existing theories (randomization for certified defenses, e.g., \cite{DBLP:conf/icml/CohenRK19}) cannot be directly adapted to certifiable attacks since they have completely different goals and settings. Second, how to efficiently craft the \randAE~that can ensure the ASP is challenging since it requires maintaining the wrong prediction over a large number of randomized samples drawing from the distribution. Third, how to make the \randAE~ as imperceptible as possible is also challenging due to their randomness. By addressing these new challenges, in this paper, we make the following significant contributions:

\begin{enumerate}[label=\arabic*),leftmargin=*]
\setlength\itemsep{0.4em}
    
    \item To our best knowledge, we introduce the first \emph{certifiable attack theory} based on randomization for the black-box setting, which universally guarantees the attack success probability of AEs drawn from different noise distributions, e.g., Gaussian, Laplace, and Cauthy distributions, enabling a novel transition from deterministic to probabilistic adversarial attacks.
    
    \item We propose a novel \emph{certifiable attack framework} that can efficiently craft certifiable \randAE~ with provable ASP and imperceptibility. Specifically, we design a novel \emph{randomized parallel query method} to efficiently collect probabilistic query results 
    from any target model, which supports the certifiable attack theory. We propose a novel \emph{self-supervised localization} method as well as a binary-search localization method to efficiently generate certifiable \randAE. We design a novel \emph{geometric shifting} method to reduce the perturbation size for better imperceptibility while ensuring the ASP. Finally, we have validated that diffusion models \cite{ho2020denoising} can be used to further denoise the randomized AEs with guaranteed ASP.   
    
    \item We comprehensively evaluate the performance of the certifiable attack with different settings on 4 datasets, while benchmarking with 16 SOTA empirical black-box attacks, against various defenses. Experimental results consistently demonstrate that our certifiable attack effectively breaks the SOTA defenses, including adversarial detection, randomized pre-processing and post-processing defenses, as well as adversarial training defenses (Also, Table \ref{tab:black-box comparison} shows a summary of the certifiable attack vs. SOTA black-box attacks).
    
\end{enumerate}

\section{Problem Definition}

\label{sec:threat model}

\noindent {\bf Threat Model:} We consider designing a certifiable attack where the target model may or may not be protected by a defense mechanism. 

\begin{itemize}[leftmargin=*]
\setlength\itemsep{0.4em}

 \item \textbf{Adversary}: We focus on the hard-label black-box attack, where the adversary only knows the predicted label by querying the target ML model. The adversary's objective is to craft adversarial examples to fool the model based on the query results. 

 \item \textbf{Model Owner}: The model owner pursues the model utility. 
 We consider three different levels of the model owner's knowledge and capability: 1) The model owner has no awareness of the adversarial attacks and is not equipped with any defense; 2) The model owner is aware of the adversarial attack but has no knowledge of the attack method. The model owner can deploy general defense methods such as adversarial training \cite{DBLP:conf/iclr/MadryMSTV18}; 3) The model owner is aware of the adversarial attack and has knowledge about the attack method. The model owner can deploy adaptive defenses that are specifically designed for the attack.
 
\end{itemize}

\noindent {\bf Problem Formulation:} We first briefly review adversarial examples, and then formally define our problem.  Given an ML classifier $f$ and a testing data $x \in \mathbb{R}^d$ with label $y$ from a label set $\mathcal{Y}=[1, \cdots C]$ (where $C$ is the number of classes). An adversary carefully crafts a perturbation 
on the data $x$ 
such that the classifier $f$ misclassifies the perturbed data $x_{adv}$, i.e., $f(x_{adv})\neq y$ under $x_{adv}\in [\Pi_a,\Pi_b]^d$, 
where $[\Pi_a,\Pi_b]^d$ is the valid input space. The perturbed data $x_{adv}$ is called \emph{adversarial example}. Imperceptibility is usually achieved by restricting the $\ell_2$ or $\ell_\infty$ norm of the perturbation $x_{adv} - x$, or by minimizing the magnitude of this perturbation. 

In the black-box setting, an adversary can use \emph{empirical} black-box attack techniques (details in Section~\ref{sec:RelatedWork}) to iteratively query the classifier $f$ and progressively update the perturbation until finding a successful adversarial example for a testing example.  
However, such attack strategies have key limitations: 1) query inefficient, usually $>100$ queries per adversarial example; 2) easy to be detected by observing the query trajectory~\cite{li2022blacklight,qin2021RAND,chandrasekaran2020RANDpost,chen2022AAA}; and 3) lack of guaranteed attack performance, i.e., cannot provably guarantee a (un)successful adversarial example under a given budget.  

We aim to address all these limitations and design an efficient and effective certifiable black-box attack in the paper.
Particularly, instead of inefficiently searching adversarial \emph{examples} one-by-one, we want to certifiably find the underlying adversarial \emph{distribution} that the adversarial examples lie on. 

\begin{definition}[Certifiable black-box attack]
Given a classifier $f: \mathbb{R}^d \rightarrow \mathcal{Y}$, a clean input $x \in \mathbb{R}^d$ with label $y\in \mathcal{Y}$, and an Attack Success Probability Threshold $p$, the certifiable attack is to find an \randAE~ $\varphi(x', \boldsymbol{\kappa})$ with mean $x'$ and parameters $\boldsymbol{\kappa}$\footnote{If $\varphi$ is a Gaussian distribution, $\kappa$ is the standard deviation of $\varphi$. If $\varphi$ is a Generalized normal distribution, $\kappa=(a,b)$, with $a$ and $b$ the scale and shape parameters of $\varphi$, respectively. Notice that, the distribution will be applied to all the dimensions in the input, and \randAE~ is a noise distribution over the input space. }, such that data sampled from $\varphi$ have at least $p$ probability of being misclassified (i.e., adversarial examples). That is, 
\begin{align}
\label{eq:target}
   & \mathbb{P}_{x_{adv}\sim\varphi(x',\boldsymbol{\kappa})}[f(x_{adv})\neq y]\geq p \\
    &\text{s.t.}\, \, x_{adv} \in [\Pi_a,\Pi_b]^d. 
\end{align}

\end{definition}

\noindent {\bf Design Goals:} We expect our attack to achieve the below goals. 

\begin{enumerate}[label=\arabic*),leftmargin=*]

\item {\bf Certifiable:} It can provide provable guarantees on the minimum attack success probability of the crafted adversarial examples. 

\item {\bf Verification free:} 
 
It can not only verify examples to be adversarial \emph{after} querying the model, but also verify examples \emph{before} the query by giving its ASP. This significantly boosts the effectiveness of adversarial examples generation. 

\item {\bf Query efficient:} It needs as few number of queries as possible. Fewer queries can definitely save the adversary's cost. 

\item {\bf Bypass defenses:} It can generate imperceptible adversarial perturbations that can bypass the existing detection and pre/post-processing based defenses~\cite{li2022blacklight,qin2021RAND,chandrasekaran2020RANDpost,chen2022AAA}. 

\end{enumerate}

\begin{figure*}[!h]
    \centering
    \includegraphics[width=0.85\linewidth]{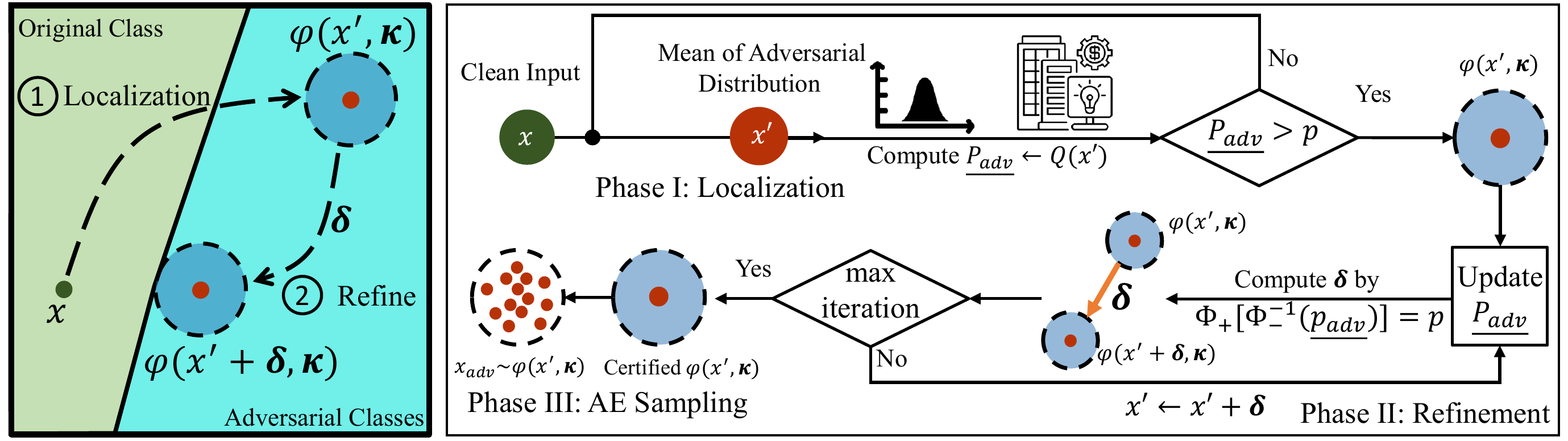}
    \caption{Overview of our certifiable black-box attack to generate certified adversarial distribution.  }
    \label{fig:framework}
\end{figure*}

\section{Attack Overview} 
At a high level, our certifiable black-box attack can be divided into three phases. 
The overview of our attack is depicted in Figure \ref{fig:framework}. 

\vspace{0.05in}

\noindent {\bf Phase I: Adversarial Distribution Localization.} This phrase initially locates a feasible \randAE~ $\varphi$ with guarantees on the lower bound of attack success probabilities (i.e., satisfying Eq. (\ref{eq:target})). There are a few challenges. 
First, computing the exact probability $\mathbb{P}[f(x_{adv})\neq y]$ is intractable due to the high-dimensional continuous input space. 
Second, due to the black-box nature, there exists no gradient information that can be used. 
To address the first challenge and ensure query efficiency, we propose a Randomized Parallel Query (RPQ) strategy that can approximate the probability and ensure multiple queries are implemented in parallel. 
To address the second challenge, we design two localization strategies to enable learning a feasible adversarial distribution. The first strategy adapts the existing self-supervised perturbation (SSP) technique \cite{DBLP:conf/cvpr/NaseerKHKP20}, which facilitates designing a classifier-unknown loss on a pretrained feature extractor such that the adversarial examples/perturbations can be optimized. 
The second one is based on binary search. It first randomly initializes a qualified \randAE~, and then reduces the perturbation size using the binary search algorithm. 
See Section~\ref{sec:initialization} for more details.  

\vspace{0.05in}

\noindent {\bf Phase II: Adversarial Distribution Refinement.} While successfully generating the adversarial distribution, the adversarial examples from it often induce relatively large perturbation sizes. This phrase further refines the adversarial distribution by reducing the perturbation size and maintains the guarantee of attack success probability as well. 
Particularly, we propose to shift the adversarial distribution close to the decision boundary of the classifier. This problem can be solved by two steps: \emph{the first step finds the shifting direction, and the second step derives the shifting distance and maintains the guarantee}. We design a novel shifting method to find the local-optimal \randAE~ by considering the geometric relationship between the decision boundary and \randAE. Deciding the shifting distance can then be converted to an optimization problem. We then propose a binary search algorithm to achieve the goal. See Section~\ref{sec:shifting} for more details.  

\vspace{0.05in}

\noindent {\bf Phase III: Adversarial Example Sampling.} Phases I and II craft an \randAE~ with guaranteed attack success probability, called ``certifiable attack''. To transform the \randAE~ into concrete AEs, we need to sample the AE from the \randAE. The sampled AEs naturally maintain the certified ASP without the need for additional model queries. Optionally, the adversary can verify the success of these sampled AEs to ensure a successful attack, turning the certifiable attack into an empirical attack. Specifically, the adversary can sequentially sample the adversarial examples from \randAE~ and query the target model until finding the successful adversarial example(s).

\section{Certifiable Black-box Attack}
\label{sec:Method}
In this section, we present our certifiable black-box attack in detail. We first introduce the Randomized Parallel Query strategy that estimates the lower bound probability of being the adversarial example (Section~\ref{sec: batch query}). We then  
develop two algorithms to locate the feasible \randAE~ (Section~\ref{sec: locate alg}). 
Next, we propose our refinement method to reduce the perturbation size, while maintaining the guarantees of attack success probability (Section~\ref{sec:shifting}).
We also provide the theoretical analysis of the convergence and confidence bound of the Shifting method. 

\begin{figure}[!h]
    \centering
    \includegraphics[width=0.9\linewidth]{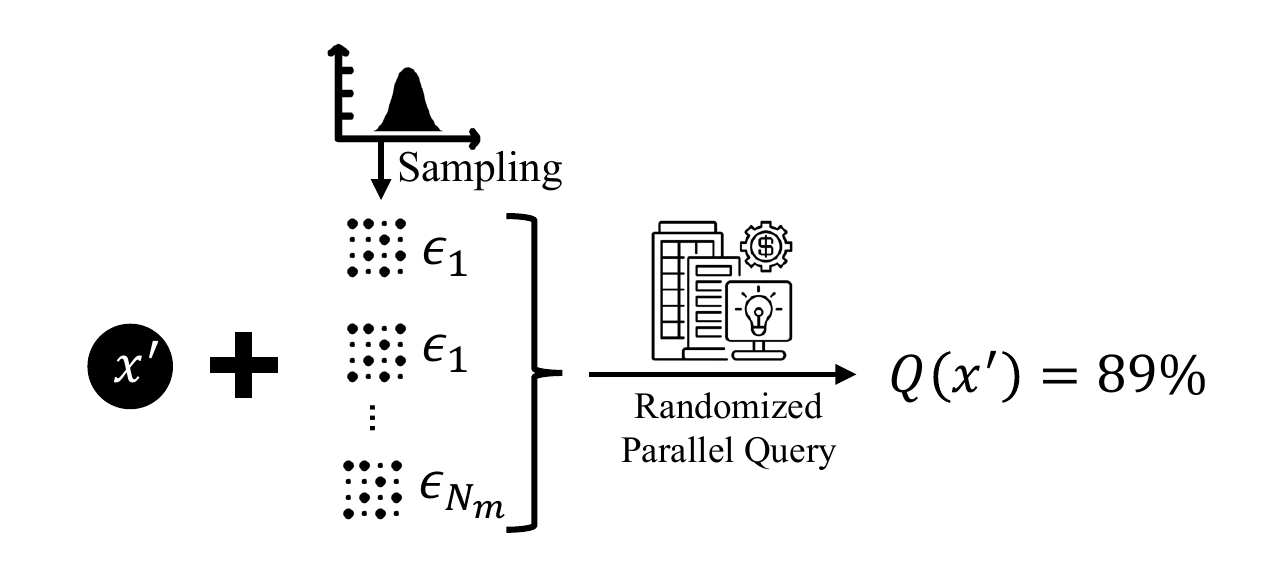}
    \caption{Illustration of randomized parallel query (returning the probability $Q(x')$ that $x'+\epsilon$ is an adversarial example). }
    \label{fig:RQB}
\end{figure}

\subsection{Adversarial Distribution Localization}
\label{sec:initialization}

\subsubsection{Randomized Parallel Query}
\label{sec: batch query}

As stated, computing the exact probability $\mathbb{P}[f(x_{adv})\neq y]$ with $x_{adv}\sim\varphi(x',\boldsymbol{\kappa})$ is intractable. Here, we propose to estimate its low bound probability by the Monte Carlo method. This requires the adversary to query the classifier with random instances sampled from an \randAE. By noting that random instances can be queried efficiently in parallel, we propose the Randomized Parallel Query (RPQ) to compute the lower bound of the attack success probability 
as below:
\begin{align}
 Q(x')=\underline{p_{adv}} & \leq \mathbb{P}_{x_{adv}\sim\varphi(x',\boldsymbol{\kappa})}[f(x_{adv})\neq y] \nonumber \\
 & = \mathbb{P}_{\epsilon \sim \varphi(0,\boldsymbol{\kappa})}[f(x'+\epsilon) \neq y]. 
\end{align}
With a given $x'$, the lower bound probability $\underline{p_{adv}}$ can be estimated via the Binomial testing on a zero-mean distribution $\varphi(0,\boldsymbol{\kappa})$ using Clopper-Pearson confidence interval \cite{lecuyer2019certified} following the Algorithm \ref{alg:Monte Carlo}, where the $\textsc{LowerConfBound}(k,N_m,1-\alpha)$ returns the one-sided $(1-\alpha)$ lower confidence interval.

\begin{algorithm}[!t]
\small
\caption{Lower Bound of Attack Success Probability}
\label{alg:Monte Carlo}
\begin{algorithmic}[1]
\Require Mean $x'$ of the \randAE~ $\varphi$, classifier $f$, confidence level $\alpha$, Monte Carlo samples 
$N_m$, ground truth label $y$.
\Ensure The lower bound of attack success probability $\underline{p_{adv}}$
\State $\epsilon_1, \epsilon_2,...,\epsilon_{N_m} \sim \varphi(0, \boldsymbol{\kappa})$
\State Incorrect prediction count $k \leftarrow \sum^{i=1}_{N_m}\mathbf{1} [f(x'+\epsilon_i)\neq y]$
\State \Return $\underline{p_{adv}} \leftarrow \textsc{LowerConfBound}(k,N_m,1-\alpha)$
\end{algorithmic}
\end{algorithm}

\setlength{\textfloatsep}{2mm}

Now we can estimate $\underline{p_{adv}}$  
given an \randAE~ with known mean/location $x'$. The next question is how to decide $x'$ to 
satisfy Eq. (\ref{eq:target}), i.e., locating the adversarial distribution that includes certifiable adversarial examples (with probability at least $p$). 

The simplest way is random localization, where the input $x$ is uniformly sampled from the input space $[\Pi_a,\Pi_b]^d$, e.g., $[0,\ 1]^d$, followed by the RPQ to check if $\underline{p_{adv}}$ is larger than $p$. However, random localization could not generate a good initial adversarial distribution due to the high-dimensional input space. Below we propose two practical localization methods to mitigate the issue. 

\begin{algorithm}[!t]
\small
\caption{Smoothed Self-Supervised Perturbation (SSSP)}
\label{alg:sssp}
\begin{algorithmic}[1]
\Require
Clean input $x$, feature extractor $\mathcal{F}$, 
noise distribution $\varphi(0,\boldsymbol{\kappa})$, maximum iterations $n_{max}$,  perturbation budget $\pi$, step size $\eta$, and noise sampling number $N_s$.
\Ensure Updated mean $x'$ of \randAE~ 
\State $x'=x$
\For {$n=1$ to $n_{max}$ }
\State $\mathcal{L}(x')\leftarrow \frac{1}{N_s}\sum_i^{N_s}[||\mathcal{F}(x'+\epsilon_i)-\mathcal{F}(x+\epsilon_i)||_2], \ \epsilon_i \sim \varphi$
\State $x' \leftarrow  x' + \eta\ sgn(\nabla_{x'}\mathcal{L})$
\State $x' \leftarrow Clip(x',\ x-\pi,\ x+\pi)$ 
\State $x' \leftarrow Clip(x',\ 0.0,\ 1.0)$ (if $x$ is an image)
\EndFor
\State \Return $x'$
\end{algorithmic}
\end{algorithm}

\setlength{\textfloatsep}{2mm}

\begin{algorithm}[!t]
\small
\caption{Smoothed SSP for Certifiable Attack Localization}
\label{alg:initialization}
\begin{algorithmic}[1]
\Require Clean input $x$, feature extractor $\mathcal{F}(\cdot)$, RPQ function $Q(\cdot)$,  smoothed SSP algorithm $SSSP(\cdot)$ (Algorithm \ref{alg:sssp}), initial perturbation budget $\pi_{init}$,  
step size $\gamma$, ASP Threshold $p$, maximum iterations $N_{max}$.
\Ensure Mean $x'$ of \randAE~ $\varphi$, number of RPQs $q$.
\State $x'=x$, $\pi=\pi_{init}$, $N=0$, $q=0$
\While {$Q(x')<p$ and $N<N_{max}$}
\State $N\leftarrow N+1$, $q\leftarrow q+1$, $\pi \leftarrow \pi+\gamma$
\State $x'\leftarrow SSSP(x',\ \mathcal{F},\ \pi)$
\EndWhile
\If{$Q(x')<p$}
\State \Return \textbf{Abstain}
\Else
\State \Return $x'$ and $q$ 
\EndIf
\end{algorithmic}
\end{algorithm}

\setlength{\textfloatsep}{2mm}

\subsubsection{Proposed Localization Algorithms}
\label{sec: locate alg}
We notice the adversarial distribution localization is similar to empirical black-box attacks on generating adversarial examples. Here, we propose to adapt these empirical attack algorithms and design two localization algorithms.

\vspace{0.05in}

\noindent {\bf Smoothed Self-Supervised Localization:}
To better locate the \randAE, we propose to adapt the self-supervised perturbation (SSP) technique \cite{DBLP:conf/cvpr/NaseerKHKP20}. Specifically, SSP generates generic adversarial examples by distorting the features extracted by a pre-trained feature extractor on a large-scale dataset in a self-supervised manner. The rationale is that the extracted (adversarial) features  
can be transferred to other classifiers as well.

As our attack uses RPQ, we compute the feature distortion over \emph{a set of random samples} from the \randAE. Formally, 

{
\vspace{-4mm}
\begin{align}
   & x'= \arg \max_{x'} ~ \mathbb{E}_{\epsilon \sim \varphi(0,\boldsymbol{\kappa})}[\| \mathcal{F}(x' +\epsilon) -\mathcal{F}(x+\epsilon) \|_2] \label{eqn:xprime} \nonumber\\
    & \text{s.t.} \ \|x'-x\|_\infty\leq \pi 
\end{align}
}
where $\mathcal{F}$ is a pre-trained feature extractor. The perturbation budget $\pi$ is initially set to a small value and later increased in multiple attempts of localization, ensuring that smaller perturbations are identified first. This optimization problem can be solved via the Projected Gradient Ascent method \cite{DBLP:conf/iclr/MadryMSTV18}. Let the adversarial loss be $\mathcal{L}(x')\equiv \mathbb{E}_{\epsilon \sim \varphi}[\|\mathcal{F}(x'+\epsilon)-\mathcal{F}(x+\epsilon)\|_2]$. Then we can locate the \randAE~ via iteratively update $x'$ with $x'=x'+\eta\ sgn(\nabla_{x'} \mathcal{L})$, where $sgn(\cdot)$ is the sign function, and $\eta$ denotes the step size. The details for localizing the \randAE~ are summarized in Algorithms \ref{alg:sssp} and \ref{alg:initialization}. 

\vspace{0.05in}

\noindent {\bf Binary Search Localization:} 
Another method is to randomly initialize the location of  \randAE~ such that $\underline{p_{adv}}\geq p$, and then reduce the gap between $\underline{p_{adv}}$ and $p$, as well as the perturbation via binary search. The algorithm is presented in Algorithm \ref{alg:bin search}. This method is efficient in reducing the perturbation size once the feasible \randAE~ is found by random search. Figure \ref{fig:visual_binary} and \ref{fig:visual_sssp} visualize some $x_{adv}$ during the crafting process for both Binary Search Localization and SSSP Localization.

\begin{algorithm}[!t]
\small
\caption{Binary Search for Certifiable Attack Localization}
\label{alg:bin search}
\begin{algorithmic}[1]
\Require Clean input $x$, RPQ function $Q(\cdot)$, ASP Threshold $p$, random search iterations $N_r$, and binary search iteration $N_b$, error tolerance $\Omega$.
\Ensure Mean of initial \randAE~ $x'$, number of RPQs $q$.
\State $n=0$, $m=0$, $q=0$, $x^*=x$
\While{$Q(x')<p$ and $n\leq N_r$}
\State $x' \sim \text{Uniform}([0,1]^d)$
\State $q\leftarrow q+1$, 
\EndWhile
\If{$n>N_r$} \Return \textbf{Abstain}
\EndIf
\While{$m<N_b$ and $\|x'-x^*\|_2\leq \Omega$}
\If{$Q(\frac{x^*+x'}{2})\geq p$}
\State $x'=\frac{x^*+x'}{2}$
\Else
\State $x^*=\frac{x^*+x'}{2}$
\EndIf
\EndWhile
\State \Return $x'$
\end{algorithmic}
\end{algorithm}

\subsection{Adversarial Distribution Refinement}
\label{sec:shifting}

Though our localization algorithms can find an effective \randAE, our empirical results found the perturbation size can be large (See Table~\ref{sec:ablation}). This occurs possibly because the pretrained feature extractor is too generic and the generated adversarial perturbation is suboptimal for our target classifier. To mitigate the issue, we propose to reduce the perturbation by refining the \randAE~ while still maintaining the condition Eq. (\ref{eq:target}).

Our key observation is that the optimal perturbation is achieved when the adversarial example is close to the decision boundary of the target classifier. Hence, we propose to shift the \randAE~ until intersecting the decision boundary, thereby locating the locally optimal point on that boundary.

\subsubsection{Certification for \randAE~ Shifting} 
We propose a theory on shifting the \randAE~ while maintaining the attack success probability. 
We denote $\varphi(x'+\delta,\boldsymbol{\kappa})$ as a shifted distribution for the \randAE~ $\varphi(x', \boldsymbol{\kappa})$ by a shifting vector $\delta$. Then, the shifted \randAE~ ensures the ASP if $\delta$ satisfies the condition presented in Theorem \ref{thm1}.

\begin{thm}{(\textbf{Certifiable Adversarial Distribution Shifting})} 
Let $f$ be a classifier, $\epsilon$ be the noise drawn from any continuous probability density function  $\varphi(0,\boldsymbol{\kappa})$. Let $p$ be the predefined attack success possibility threshold. Denote $\underline{p_{adv}}$ as the lower bound of the attack success probability. For any $x'$ satisfies 

\begin{equation}
    \mathbb{P}[f(x'+\epsilon)\neq y]\geq \underline{p_{adv}}=Q(x')\geq p, 
\label{thm1 condition}
\end{equation}
$\mathbb{P}[f(x'+\delta+\epsilon)\neq y]\geq p$ is guaranteed for any 
shifting vector $\delta$ when
\begin{equation}
    \Phi_+[\Phi_-^{-1}(\underline{p_{adv}})]\geq p
\label{thm1 condition2}
\end{equation}
where $\Phi_-^{-1}$ is the inverse cumulative density function (CDF) of the random variable $\frac{\varphi(\epsilon-\delta, \boldsymbol{\kappa})}{\varphi(\epsilon,\boldsymbol{\kappa})}$, and $\Phi_+$ the CDF of random variable $\frac{\varphi(\epsilon, \boldsymbol{\kappa})}{\varphi(\epsilon+\delta, \boldsymbol{\kappa})}$.

\label{thm1}
\end{thm}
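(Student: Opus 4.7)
My approach is a Neyman--Pearson style argument, analogous in spirit to Cohen et al.'s certified smoothing result but with the roles of "correct" and "incorrect" classification swapped, and with a shift of the query point rather than the mean of the noise. Define the adversarial region $A = \{z : f(z) \neq y\}$. Then the hypothesis reads $\int \mathbf{1}[z \in A]\, \varphi(z - x', \boldsymbol{\kappa})\, dz \geq \underline{p_{adv}}$ and the goal is to lower bound $\int \mathbf{1}[z \in A]\, \varphi(z - x' - \delta, \boldsymbol{\kappa})\, dz$. Substituting $u = z - x'$, this reduces to a purely distributional question: over all measurable sets $B \subseteq \mathbb{R}^d$ with $\int_B \varphi(u, \boldsymbol{\kappa})\, du \geq \underline{p_{adv}}$, what is the infimum of $\int_B \varphi(u - \delta, \boldsymbol{\kappa})\, du$?

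Next I would invoke the Neyman--Pearson lemma on the two densities $\varphi(u, \boldsymbol{\kappa})$ and $\varphi(u - \delta, \boldsymbol{\kappa})$ (both are positive since $\varphi$ is a continuous density, so the likelihood ratio is well-defined almost everywhere). The minimizer is characterized by a likelihood-ratio level set: the worst-case $B^\star$ puts its mass where $\varphi(u - \delta, \boldsymbol{\kappa}) / \varphi(u, \boldsymbol{\kappa})$ is as small as possible, i.e., $B^\star = \{u : \varphi(u - \delta, \boldsymbol{\kappa}) / \varphi(u, \boldsymbol{\kappa}) \leq t\}$, with $t$ chosen so the constraint binds. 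By definition of $\Phi_-$ as the CDF of the random variable $\varphi(\epsilon - \delta, \boldsymbol{\kappa}) / \varphi(\epsilon, \boldsymbol{\kappa})$ with $\epsilon \sim \varphi(0, \boldsymbol{\kappa})$, the binding threshold is exactly $t = \Phi_-^{-1}(\underline{p_{adv}})$.

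Then I would compute the value attained by $B^\star$ under the shifted density. A change of variables $u = v + \delta$ transforms $\int_{B^\star} \varphi(u - \delta, \boldsymbol{\kappa})\, du$ into $\mathbb{P}_{v \sim \varphi(0, \boldsymbol{\kappa})}[v + \delta \in B^\star]$, and the event $v + \delta \in B^\star$ is precisely $\varphi(v, \boldsymbol{\kappa}) / \varphi(v + \delta, \boldsymbol{\kappa}) \leq t$. By the definition of $\Phi_+$ in the statement, this probability equals $\Phi_+(t) = \Phi_+(\Phi_-^{-1}(\underline{p_{adv}}))$. Chaining these inequalities, $\mathbb{P}[f(x' + \delta + \epsilon) \neq y] \geq \Phi_+(\Phi_-^{-1}(\underline{p_{adv}})) \geq p$ under the theorem's hypothesis, which is the desired conclusion.

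\textbf{Main obstacle.} The computational steps are routine, but the subtle part is the Neyman--Pearson invocation: one must verify that the infimum is actually attained on a level set of the likelihood ratio when the constraint is an inequality rather than an equality, and handle potential ties in the likelihood ratio (a measure-zero issue when $\varphi$ is continuous, but it deserves a line). I would also want to be careful that $\Phi_-^{-1}$ is interpreted as the generalized inverse (largest $t$ with $\Phi_-(t) \leq \underline{p_{adv}}$ or the analogous quantile convention), since $\Phi_-$ need not be strictly increasing, and to note that the argument is agnostic to the specific form of $\varphi$, which is precisely what makes the bound universal across Gaussian, Laplace, and Cauchy noise.
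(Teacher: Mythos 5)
Your proposal is correct and follows essentially the same route as the paper's proof: both apply the Neyman--Pearson lemma to the indicator of the adversarial region, identify the worst-case set as a level set of the likelihood ratio $\varphi(u-\delta,\boldsymbol{\kappa})/\varphi(u,\boldsymbol{\kappa})$ with threshold $\Phi_-^{-1}(\underline{p_{adv}})$, and evaluate the shifted measure of that set via the same change of variables to obtain $\Phi_+[\Phi_-^{-1}(\underline{p_{adv}})]$. Your remarks on generalized inverses and ties in the likelihood ratio address a point the paper's proof glosses over, but they refine rather than alter the argument.
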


\begin{proof}
See detailed proof in Appendix \ref{apd:thm1 proof}.
\end{proof}

Theorem \ref{thm1} ensures the minimum attack success probability if Eq. (\ref{thm1 condition}) and Eq. (\ref{thm1 condition2}) hold while without querying $\varphi(x'+\delta,\boldsymbol{\kappa})$. Eq. (\ref{thm1 condition}) requires finding a $x'$ such that the RPQ on samples of $\varphi(x',\boldsymbol{\kappa})$ returns a $\underline{p_{adv}} \geq p$, and Eq. (\ref{thm1 condition2}) ensures any $\delta$ meeting this condition will not reduce the attack success probability of the shifted \randAE~ below $p$. Further, Theorem \ref{thm1} works for any continuous noise distributions, e.g., Gaussian, Laplace, Exponential, and mixture PDFs. 
We also present the case when the noise is Gaussian in Corollary \ref{thm2} in Appendix \ref{apd:thm2}. It shows the shifting perturbation $\delta$ should satisfy $||\delta||_2 \le \sigma [\Phi^{-1}(\underline{p_{adv}})-\Phi^{-1}(p)]$, where $\Phi^{-1}$ is the inverse of Gaussian CDF.

\begin{figure}[!t]
    \centering
    \includegraphics[width=\linewidth]{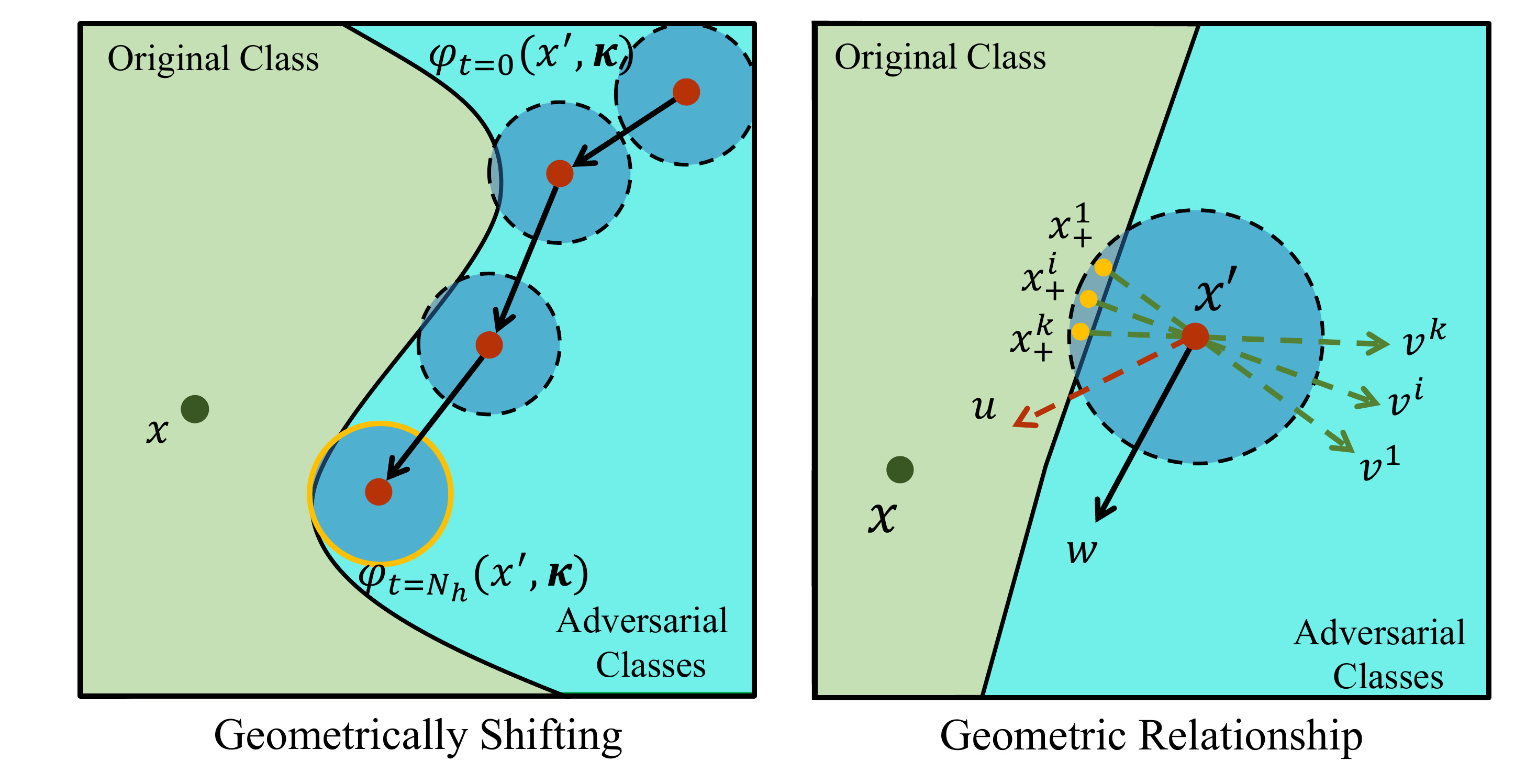}
    \caption{Illustration of geometrically shifting.}
    \label{fig:shifting method}
\end{figure}

\subsubsection{Obtaining Refined \randAE}

Since the \randAE~ can be shifted by any $\delta$ satisfying Eq. (\ref{thm1 condition2}), we propose to shift it toward the clean input with the maximum $\delta$ that does not break the guarantee. By iteratively executing the RPQ and applying the Theorem \ref{thm1}, the \randAE~ can be repeatedly shifted with a guarantee until approaching the decision boundary (where $\underline{p_{adv}}=p$). 

The problem of shifting the \randAE~ to reduce the perturbation can be solved by two steps: \emph{first finding the shifting direction, and then deriving the shifting distance while maintaining the guarantee}. Here, we design a novel shifting method to find the locally optimal \randAE~ by considering the geometric relationship between the decision boundary and the \randAE, which is called ``Geometrical Shifting''. Specifically, through using the noisy samples of \randAE~ to ``probe'' the decision boundary, we shift the RandAE~ along the decision boundary and towards the clean input until finding the local optimal point on the decision boundary (see Figure \ref{fig:shifting method} for the illustration). If none of the noisy samples can approach the decision boundary, we simply shift the \randAE~ directly toward the clean input without considering the decision boundary.  

\begin{algorithm}[!t]
\small
\caption{Shifting Direction}
\label{alg:shifting direction}
\begin{algorithmic}[1]
\Require Mean of the \randAE~ $x'$, clean input $x$,  vectors $\{v^i\}$, a vector $u$, maximum iteration $M$, updating step size $\eta'$.
\Ensure The shifting direction $w$
\State Initialize $w$ with random noise
\If{ $\{v^i\}$ is empty}
\State $w=x-x'$
\Else
\For {$j=1$ to $M$}
\State $w\leftarrow w+\eta'\ sgn[\nabla_w({\sum_{i=1}^k \sin(v^i,w) + \cos(u,w)})]$
\State $w\leftarrow \frac{w}{||w||_2}$
\EndFor
\EndIf
\State \Return $w$
\end{algorithmic}
\end{algorithm}

\setlength{\textfloatsep}{2mm}

\begin{algorithm}[!t]
\small
\caption{Shifting Distance}
\label{alg:shifting distance}
\begin{algorithmic}[1]
\Require Mean of \randAE~ $x'$,  noise distribution $\varphi$, randomized query function $Q(\cdot)$, the shifting direction algorithm $SD(\cdot)$ (Algorithm \ref{alg:shifting direction}), error threshold $e$, ASP Threshold $p$.
\Ensure The shifting perturbation $\delta$
\State $w \leftarrow SD(x')$, $\underline{p_{adv}} \leftarrow Q(x')$
\State find a scalar $a$ such that $\delta=a w$ and $ \Phi_+[\Phi_-^{-1}(\underline{p_{adv}})] > p$
\State find a scalar $b$ such that $\delta=b w$ and $ \Phi_+[\Phi_-^{-1}(\underline{p_{adv}})] < p$
\While {$\Phi_+[\Phi_-^{-1}(\underline{p_{adv}})]<p$ or $ > p+e$ and $n\leq N_k$}
\If {$\Phi_+[\Phi_-^{-1}(\underline{p_{adv}})] >p$}
\State $a\leftarrow \frac{(a+b)}{2}$
\Else
\State $b\leftarrow \frac{(a+b)}{2}$
\EndIf
\State $\delta\leftarrow \frac{(a+b)}{2}w$, $n\leftarrow n+1$
\EndWhile
\State \Return $\delta$
\end{algorithmic}
\end{algorithm}

\begin{algorithm}[!t]
\small
\caption{Certifiable Attack Shifting}
\label{alg:shifting}
\begin{algorithmic}[1]
\Require Mean of \randAE~ $x'$,  noise distribution $\varphi$, randomized query function $Q(\cdot)$, shifting distance algorithm $\textsc{Shift}(\cdot)$ (Algorithm \ref{alg:shifting distance}), distance threshold $e_s$, ASP Threshold $p$, max iteration $N_h$.
\Ensure The shifted mean $x'$
\State $\underline{p_{adv}}\leftarrow Q(x')$, $\delta\leftarrow\textsc{Shift}(x')$, $n=0$
\While{$\underline{p_{adv}}>p$ and $\|\delta\|_2\geq e_s$ and $n\leq N_h$}

\State $x'\leftarrow x'+\delta$, $\underline{p_{adv}}\leftarrow Q(x')$, $\delta\leftarrow\textsc{Shift}(x')$, $n\leftarrow n+1$
\If{$\|x'-x\|_2\leq \|\delta\|_2$}
\Return $x$
\EndIf
\EndWhile
\State \Return $x'$
\end{algorithmic}
\end{algorithm}

\vspace{+0.05in}
\noindent {\bf Finding the Shifting Direction:} 
The geometrical relationship is presented on the right-hand side of Figure \ref{fig:shifting method}.  
Denote $x'$ as the mean of the current \randAE. When sampling the adversarial examples from the \randAE, we mark the failed adversarial examples as $x^1_+, x^2_+,...,x^i_+, ..., x^k_+$, aka., ``samples fell into the original class''. The normalized vector from $x_+^i$ to $x'$ is denoted as $v^i$. The normalized vector from $x'$ to $x$ is denoted as $u$. If the \randAE~ has no samples crossing the decision boundary, then we can shift the \randAE~ straight toward the clean input (along the direction of $u$) until it intersects the decision boundary, otherwise, the shifting should be along the decision boundary but not cross it
(without changing the certifiable attack guarantee). Note that the input space is high-dimensional, thus there could be many directions along the decision boundary. To reduce the perturbation, the direction should be similar to the vector $u$ as much as possible. Based on these geometric analyses, the goals of the geometrical shifting can be summarized as: \emph{The shifting direction should lie relatively parallel to the direction of $u$; and be relatively vertical to the vectors $v^i$.}

Formally, denoting the shifting direction as $w$, then the goal of finding the shifting direction can be formulated as:
\begin{equation}
    w =\arg \max \sum\nolimits_{i=1}^k \sin(v^i,w) + \cos(u,w)
\label{eq:geo eq}
\end{equation}
where $\sin(\cdot)$ and $\cos(\cdot)$ denote the sine and cosine function, and Eq. (\ref{eq:geo eq}) can be solved via the gradient ascent algorithm. 

\vspace{0.05in}

\noindent {\bf Calculating the Shifting Distance:}  The shifting distance can be determined by maximizing $||\delta||_2$ that satisfies the constraint of Eq. (\ref{thm1 condition2}), i.e., when the equality holds. We use binary search to approach the equality and the Monte Carlo method to estimate the CDF of random variable $\frac{\varphi(\epsilon-\delta, \boldsymbol{\kappa})}{\varphi(\epsilon, \boldsymbol{\kappa})}$ and $\frac{\varphi(\epsilon, \boldsymbol{\kappa})}{\varphi(\epsilon+\delta, \boldsymbol{\kappa})}$, similar to \cite{hong2022unicr}. Algorithm \ref{alg:shifting direction}, \ref{alg:shifting distance}, and \ref{alg:shifting} show the details of finding the shifting direction, shifting distance, and the shifting process, respectively.

\vspace{0.05in}

\noindent\textbf{Convergence Guarantee and Confidence Bound:}
Any $\delta$ computed by Algorithm \ref{alg:shifting distance} will satisfy the certifiable attack guarantee since it strictly ensures $\Phi_+[\Phi^{-1}_-(\underline{p_{adv}})] \ge p$. Further, with a centralized noise distribution, the shifting algorithm is guaranteed to converge once the located \randAE~ is feasible.

\begin{thm}
\label{prop:bound}
If the PDF of noise distribution $\varphi(x)$ decreases as the $|x|$ increases, with the satisfaction of Eq. (\ref{thm1 condition}), given any  direction vector $w$, the Shifting Distance algorithm guarantees to find $\delta$ such that $\Phi_+[\Phi_-^{-1}(\underline{p_{adv}})]= p$ with confidence $(1-\alpha)(1-2e^{-2N_m\Delta^2})^2$, where $(1-\alpha)$ is the confidence for of estimating $\underline{p_{adv}}$, $N_m$ is the Monte Carlo samples, and $\Delta$ is the error bound for the CDF estimation.

\end{thm}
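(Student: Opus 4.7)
The plan is to decompose the claim into two pieces: (i) a deterministic convergence argument for the binary search inside Algorithm \ref{alg:shifting distance}, and (ii) a probabilistic argument bounding the confidence via the three independent estimation sources (one Clopper--Pearson lower bound and two empirical CDFs).

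For convergence, I would first parametrize the candidate perturbation as $\delta(t) = t w$ for $t \geq 0$ along the fixed direction vector $w$, and define the target function $h(t) = \Phi_+\!\bigl[\Phi_-^{-1}(\underline{p_{adv}})\bigr]$ evaluated at $\delta = tw$. The key step is a monotonicity lemma: under the assumption that $\varphi(x)$ is non-increasing in $|x|$, as $t$ grows the random variable $\tfrac{\varphi(\epsilon-tw,\boldsymbol{\kappa})}{\varphi(\epsilon,\boldsymbol{\kappa})}$ becomes stochastically smaller (its mass shifts toward $0$) and, symmetrically, $\tfrac{\varphi(\epsilon,\boldsymbol{\kappa})}{\varphi(\epsilon+tw,\boldsymbol{\kappa})}$ becomes stochastically larger. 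This gives $\Phi_-^{-1}(\underline{p_{adv}})$ non-increasing and $\Phi_+$ evaluated at that argument non-increasing in $t$, so $h(t)$ is monotone non-increasing and continuous. Combining this monotonicity with $h(0) = \underline{p_{adv}} \geq p$ (from Eq. (\ref{thm1 condition})) and $h(t) \to 0$ as $t \to \infty$, the Intermediate Value Theorem supplies a unique $t^\star$ with $h(t^\star)=p$; the algorithm's bracket $[a,b]$ therefore straddles $t^\star$, and each iteration halves its length, so binary search converges to $t^\star$ and hence finds $\delta$ satisfying the equality.

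For the confidence bound, I would trace where probabilistic error enters the pipeline. Three sources must be jointly correct for the output $\delta$ to genuinely satisfy $\Phi_+[\Phi_-^{-1}(\underline{p_{adv}})]=p$: first, the Clopper--Pearson lower bound $\underline{p_{adv}}$ must be a valid one-sided $(1-\alpha)$ confidence lower bound for the true attack success probability, which by construction holds with probability at least $1-\alpha$; second and third, the Monte Carlo estimates of the CDFs $\Phi_-$ and $\Phi_+$ of the two likelihood ratios must each be within $\Delta$ of the true CDFs in sup-norm. Applying the Dvoretzky--Kiefer--Wolfowitz inequality to each empirical CDF built from $N_m$ i.i.d. samples from $\varphi(0,\boldsymbol{\kappa})$ yields $\mathbb{P}[\sup_x |\hat\Phi(x) - \Phi(x)| \leq \Delta] \geq 1 - 2e^{-2N_m\Delta^2}$ for each of the two CDFs. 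Since the three samples (Clopper--Pearson binomial draws, $\Phi_-$ MC draws, $\Phi_+$ MC draws) are drawn independently, the joint event has probability at least $(1-\alpha)(1-2e^{-2N_m\Delta^2})^2$.

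The main obstacle I anticipate is the monotonicity lemma for $h(t)$: the composition $\Phi_+\!\circ\Phi_-^{-1}$ mixes two different likelihood-ratio distributions, so monotonicity in $t$ is not immediate from $\varphi$ being radially unimodal. I would establish it by writing both ratios as functions of $\langle \epsilon,w\rangle$ along the shifting direction, showing that the log-likelihood-ratio statistics $\log\tfrac{\varphi(\epsilon-tw)}{\varphi(\epsilon)}$ and $\log\tfrac{\varphi(\epsilon)}{\varphi(\epsilon+tw)}$ admit a monotone likelihood-ratio ordering in $t$ (which follows from log-concavity or, more generally, from the unimodality-plus-symmetry implied by the assumption on $\varphi$), and then invoking the equivalence between MLR ordering and the required stochastic dominance of the induced CDFs. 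Once that structural lemma is in hand, the remaining steps are routine: continuity of $\Phi_-^{-1}$ at an interior quantile, continuity of $\Phi_+$, and the standard binary-search contraction argument together with the DKW union bound complete the proof.
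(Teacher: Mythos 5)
Your proposal matches the paper's proof in its overall skeleton: both establish that the target quantity equals $\underline{p_{adv}}\geq p$ at $\|\delta\|_2=0$, tends to $0$ as $\|\delta\|_2\to\infty$ (because $\frac{\varphi(\epsilon-\delta,\boldsymbol{\kappa})}{\varphi(\epsilon,\boldsymbol{\kappa})}\to 0$ and $\frac{\varphi(\epsilon,\boldsymbol{\kappa})}{\varphi(\epsilon+\delta,\boldsymbol{\kappa})}\to\infty$ under the radial-decrease assumption), invoke continuity and the intermediate value theorem to get a root, and then assemble the confidence bound from one Clopper--Pearson event at level $1-\alpha$ and two Dvoretzky--Kiefer--Wolfowitz events at level $1-2e^{-2N_m\Delta^2}$ each, multiplied under independence. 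The one genuine divergence is that you insert a monotonicity lemma for $h(t)=\Phi_+[\Phi_-^{-1}(\underline{p_{adv}})]$ along $\delta=tw$, which you correctly identify as the hardest step --- but the paper dispenses with it entirely, and so can you. Bisection on a continuous function does not require monotonicity to converge to a root: the invariant $h(a)\geq p\geq h(b)$ is preserved at every iteration, the bracket length halves, and continuity forces the limit point to satisfy $h=p$. Monotonicity would only buy you \emph{uniqueness} of $t^\star$, which the theorem does not claim. This matters because your proposed route to the lemma (reducing both likelihood ratios to functions of $\langle\epsilon,w\rangle$ and invoking an MLR ordering) is not available for a general radially decreasing $\varphi$ in $\mathbb{R}^d$ --- the ratio $\varphi(\epsilon-tw)/\varphi(\epsilon)$ is generally a function of all of $\epsilon$, not just its projection onto $w$, outside special cases like the Gaussian --- so you would be taking on a real and possibly unresolvable obligation for no gain. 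Dropping the lemma and arguing bracketing plus continuity directly, as the paper does, gives a shorter and strictly more general proof; everything else in your write-up, including the independence decomposition for the confidence product, is the same as the paper's.
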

\begin{proof}
See detailed proof in Appendix \ref{apd:theoretical bound}.
\end{proof}

\subsection{Discussions on Our Attack}
\label{sec:Discussion_ATK}

\noindent\textbf{Realizing Our Certifiable Attack:} Our certifiable attack does not have extra requirements on realization compared to empirical black-box attacks. 
To implement our attack, we only need to predefine a continuous noise distribution and a threshold of certified attack success probability. The adversary then adds the noise sampled from the distribution to the inputs and queries the target model. Then, \randAE~ can be crafted by RPQ and our theory. 

\vspace{0.05in}

\noindent\textbf{Randomized Query vs. Deterministic Query:} The proposed randomized query returns a \emph{probability} over a batch of inputs with injected random noises, while the traditional query returns a \emph{deterministic} output (score or hard label)  from the target model. This probability return may provide more information that better guides the attack. In addition, the randomized queries can be executed in parallel for query acceleration. 
See results in Section~\ref{section:exp against defense}. 

\vspace{0.05in}

\noindent {\bf Imperceptibility with Diffusion Denoiser:}
The certifiable adversarial examples sampled from \randAE~ are noise-injected inputs that still might be perceptible when the noise is large. We can further leverage the recent innovation for image synthesis, i.e., diffusion model \cite{ho2020denoising}, to denoise the adversarial examples for better imperceptibility. The key idea is to consider the noise-perturbed adversarial examples as the middle sample in the forward process of the diffusion model \cite{carlini2022certified,zhang2023diffsmooth}. This is shown to improve the imperceptibility and the diversity of the adversarial examples. 
More technical details are shown in Appendix~\ref{sec:DM} and results in Table~\ref{tab:denoise} in Appendix~\ref{sec:exp diffusion}. 

\vspace{0.05in}

\noindent\textbf{Extension to Certifiable White-Box Attack:} 
Our certifiable attack can be readily extended to the white-box setting by adapting/designing a white-box localization method. Specifically, the Smoothed SSP localization method can directly compute the gradients of the noise-perturbed examples rather than leveraging the feature extractor, which may significantly improve the certified accuracy of the certifiable attack. In our experiments, when leveraging the PGD-like white-box attacks as the localization method, the certified accuracy can be increased to $100\%$ for ResNet and CIFAR10, compared to the $92.54\%$ certified accuracy in the black-box setting.

\vspace{0.05in}

\noindent\textbf{Extension to Targeted Certifiable Attack:} Our attack design focuses on the untargeted certifiable attack. It can also be generalized to the targeted attack setting, where we require the majority of the noise-perturbed inputs to be \emph{certifiably} misclassified to a specific \emph{target} label. However, we admit it would be more challenging to find a 
successful \randAE~ in this scenario.  

\vspace{0.05in}

\noindent\textbf{Attacks under Adaptive Blacklight:} The defender might design an adaptive countermeasure, such as an adaptive blacklight defense, to mitigate certified attacks. For instance, the defender could attempt to eliminate randomness by assuming the noise distribution is known. However, this approach presents several challenges: 1) The defender would need detailed knowledge about the attack’s design, including the noise distribution, which is often an impractical assumption. 2) Even if the noise distribution were known, the sampled adversarial examples would remain random, making it difficult to accurately estimate the center of the noise distribution.

\section{Evaluations}
\label{sec:Experiment}

\begin{table}[]
\centering
\caption{Summary of Experiments}
\vspace{-2mm}
\label{tab:exp summarize}
\resizebox{0.44\textwidth}{!}{%
\begin{tabular}{|clll|}
\hline
Experiments                                    & Dataset     & Model                     & Reference                                   \\ \hline
\multicolumn{1}{|c|}{\multirow{9}{*}{\begin{tabular}[c]{@{}c@{}}Comparison with \\ empirical attacks \\ against Blacklight detection\end{tabular}}} &
  CIFAR10 &
  VGG16 &
  Table \ref{tab:blacklight_cifar10_vgg} \\
\multicolumn{1}{|c|}{}                         & CIFAR10     & ResNet110                 & Table \ref{tab:blacklight_cifar10_resnet}   \\
\multicolumn{1}{|c|}{}                         & CIFAR10     & ResNext29                 & Table \ref{tab:blacklight_cifar10_resnext}  \\
\multicolumn{1}{|c|}{}                         & CIFAR10     & WRN28                     & Table \ref{tab:blacklight_cifar10_wrn}      \\
\multicolumn{1}{|c|}{}                         & CIFAR100    & VGG16                     & Table \ref{tab:blacklight_cifar100_vgg}     \\
\multicolumn{1}{|c|}{}                         & CIFAR100    & ResNet110                 & Table \ref{tab:blacklight_cifar100_resnet}  \\
\multicolumn{1}{|c|}{}                         & CIFAR100    & ResNext29                 & Table \ref{tab:blacklight_cifar100_resnext} \\
\multicolumn{1}{|c|}{}                         & CIFAR100    & WRN28                     & Table \ref{tab:blacklight_cifar100_wrn}     \\
\multicolumn{1}{|c|}{}                         & ImageNet    & ResNet18                  & Table \ref{tab:blacklight_imagenet_resnet}  \\ \hline
\multicolumn{1}{|c|}{\multirow{9}{*}{\begin{tabular}[c]{@{}c@{}}Comparison with\\ empirical attacks against \\ RAND pre-processing defense\end{tabular}}} &
  CIFAR10 &
  VGG16 &
  Table \ref{tab:cifar10_RAND_vgg} \\
\multicolumn{1}{|c|}{}                         & CIFAR10     & ResNet110                 & Table \ref{tab:cifar10_RAND_resnet}         \\
\multicolumn{1}{|c|}{}                         & CIFAR10     & ResNext29                 & Table \ref{tab:cifar10_RAND_resnext}        \\
\multicolumn{1}{|c|}{}                         & CIFAR10     & WRN28                     & Table \ref{tab:cifar10_RAND_wrn}            \\
\multicolumn{1}{|c|}{}                         & CIFAR100    & VGG16                     & Table \ref{tab:cifar100_RAND_vgg}           \\
\multicolumn{1}{|c|}{}                         & CIFAR100    & ResNet110                 & Table \ref{tab:cifar100_RAND_resnet}        \\
\multicolumn{1}{|c|}{}                         & CIFAR100    & ResNext29                 & Table \ref{tab:cifar100_RAND_resnext}       \\
\multicolumn{1}{|c|}{}                         & CIFAR100    & WRN28                     & Table \ref{tab:cifar100_RAND_wrn}           \\
\multicolumn{1}{|c|}{}                         & ImageNet    & ResNet18                  & Table \ref{tab:imagenet_RAND_resnet}        \\ \hline
\multicolumn{1}{|c|}{\multirow{9}{*}{\begin{tabular}[c]{@{}c@{}}Comparison with \\ empirical attacks against \\ RAND post-processing defense\end{tabular}}} &
  CIFAR10 &
  VGG16 &
  Table \ref{tab:cifar10_post_RAND_vgg} \\
\multicolumn{1}{|c|}{}                         & CIFAR10     & ResNet110                 & Table \ref{tab:cifar10_post_RAND_resnet}    \\
\multicolumn{1}{|c|}{}                         & CIFAR10     & ResNext29                 & Table \ref{tab:cifar10_post_RAND_resnext}   \\
\multicolumn{1}{|c|}{}                         & CIFAR10     & WRN28                     & Table \ref{tab:cifar10_post_RAND_wrn}       \\
\multicolumn{1}{|c|}{}                         & CIFAR100    & VGG16                     & Table \ref{tab:cifar100_post_RAND_vgg}      \\
\multicolumn{1}{|c|}{}                         & CIFAR100    & ResNet110                 & Table \ref{tab:cifar100_post_RAND_resnet}   \\
\multicolumn{1}{|c|}{}                         & CIFAR100    & ResNext29                 & Table \ref{tab:cifar100_post_RAND_resnext}  \\
\multicolumn{1}{|c|}{}                         & CIFAR100    & WRN28                     & Table \ref{tab:cifar100_post_RAND_wrn}      \\
\multicolumn{1}{|c|}{}                         & ImageNet    & ResNet18                  & Table \ref{tab:imagenet_post_RAND_resnet}   \\ \hline
\multicolumn{1}{|c|}{\multirow{2}{*}{\begin{tabular}[c]{@{}c@{}}Comparison with empirical attack \\ against adversarial training\end{tabular}}} &
  CIFAR10 &
  ResNet110 ($\ell_2$) &
  \multirow{2}{*}{Table \ref{tab:AT_cifar10_resnet}} \\
\multicolumn{1}{|c|}{}                         & CIFAR10     & ResNet110 ($\ell_\infty$) &                                             \\ \hline
\multicolumn{1}{|c|}{\multirow{3}{*}{\begin{tabular}[c]{@{}c@{}}Ablation: CA vs. \\ different noise variance\end{tabular}}} &
  CIFAR10 &
  ResNet110 &
  \multirow{2}{*}{Table \ref{tab:diff variance}} \\
\multicolumn{1}{|c|}{}                         & ImageNet    & ResNet50                  &                                             \\ \cline{2-4} 
\multicolumn{1}{|c|}{}                         & LibriSpeech & ECAPA-TDNN                & Table \ref{tab:diff variance audio}         \\ \hline
\multicolumn{1}{|c|}{\multirow{3}{*}{\begin{tabular}[c]{@{}c@{}}Ablation: CA vs. \\ different  p\end{tabular}}} &
  CIFAR10 &
  ResNet110 &
  \multirow{2}{*}{Table \ref{tab:diff p}} \\
\multicolumn{1}{|c|}{}                         & ImageNet    & ResNet50                  &                                             \\ \cline{2-4} 
\multicolumn{1}{|c|}{}                         & LibriSpeech & ECAPA-TDNN                & Table \ref{tab:diff p audio}                \\ \hline
\multicolumn{1}{|c|}{\multirow{3}{*}{\begin{tabular}[c]{@{}c@{}}Ablation: CA vs. different \\ Localization/Shifting\end{tabular}}} &
  CIFAR10 &
  ResNet110 &
  \multirow{2}{*}{Table \ref{tab:ablation study}} \\
\multicolumn{1}{|c|}{}                         & ImageNet    & ResNet50                  &                                             \\ \cline{2-4} 
\multicolumn{1}{|c|}{}                         & LibriSpeech & ECAPA-TDNN                & Table \ref{tab:ablation study audio}        \\ \hline
\multicolumn{1}{|c|}{\multirow{2}{*}{\begin{tabular}[c]{@{}c@{}}Ablation: CA\\ vs. different noise PDF\end{tabular}}} &
  CIFAR10 &
  ResNet110 &
  \multirow{2}{*}{Table \ref{tab:diff pdf}} \\
\multicolumn{1}{|c|}{}                         & ImageNet    & ResNet50                  &                                             \\ \hline
\multicolumn{1}{|c|}{\multirow{2}{*}{\begin{tabular}[c]{@{}c@{}}Ablation: CA w/ and w/o \\ Diffusion Denoise\end{tabular}}} &
  CIFAR10 &
  ResNet110 &
  \multirow{2}{*}{Table \ref{tab:denoise}} \\
\multicolumn{1}{|c|}{}                         & ImageNet    & ResNet50                  &                                             \\ \hline
\multicolumn{1}{|c|}{CA vs. Feature Squeezing} &
  CIFAR10 &
  ResNet110 &
  Figure \ref{fig:ablation and ROC curve} \\ \hline
\multicolumn{1}{|c|}{CA vs. Adaptive Denoiser} & CIFAR10     & ResNet110                 & Table \ref{tab:WB defense}                  \\ \hline
\multicolumn{1}{|c|}{CA vs. Rand. Smoothing}   & CIFAR10     & ResNet110                 & Table \ref{tab:RS defense}                  \\ \hline
\end{tabular}%
}
\vspace{+0.05in}
\end{table}

We comprehensively evaluate our certifiable black-box attack in various experimental settings. Particularly, we would like to study the following research questions: 

\vspace{-0.1in}

\begin{itemize}[leftmargin=*]
\item {\bf RQ1:} 
How effective is the learnt \randAE? Particularly, how large is the probability of samples from it being successful adversarial examples?   
\vspace{0.05in}

\item {\bf RQ2:} Can our certifiable attack outperform empirical attacks in terms of attack effectiveness and query efficiency? 
\vspace{0.05in}

\item {\bf RQ3:} How effective is our attack to break SOTA defenses? 
\vspace{0.05in}

\item {\bf RQ4:} What is the impact of the design components and their hyperparameters on our attack? 
\end{itemize}

\vspace{-0.1in}

Accordingly, we first assess the empirical attack success possibility of the \randAE~ in Section \ref{sec: exp verification free}. Then, we evaluate our certifiable attack on various models with defenses while benchmarking with empirical black-box attacks in Section \ref{section:exp against defense}. In Section \ref{section:exp ablation}, we conduct ablation studies to explore in-depth our certifiable attack. 
\emph{All sets of experiments are summarized in Table \ref{tab:exp summarize} for reference.}

\vspace{-0.05in}
\subsection{Experimental Setup}

\noindent\textbf{Datasets and Models}. We use three benchmark datasets for image classification: CIFAR10/CIFAR100 \cite{krizhevsky2009learning} and ImageNet \cite{ILSVRC15}.
CIFAR10 and CIFAR100, both consisting of $60,000$ $32x32$ color images split into $10$ and $100$ classes, respectively. ImageNet is a large-scale dataset with $1,000$ classes. The training set contains $1,281,167$ images and the validation set contains $50,000$ images (resized to $3\times224\times 224$). 
we use VGG \cite{VGG}, ResNet \cite{he2016deep}, ResNext \cite{ResNext}, and WRN \cite{WRN} as the target model. We use a pre-trained ResNet34 on ImageNet as the feature extractor (in the Smoothed SSP localization). 
We also test our attacks on the audio dataset LibriSpeech~\cite{korvas_2014} for the speaker verification task, and results are shown in Appendix~\ref{apd:audio}.  

\vspace{0.05in}

\noindent {\bf Baseline Attacks.}
We compare our certifiable (hard label-based) black-box attack with SOTA black-box attacks including 7 \emph{hard label-based} black-box attacks: GeoDA \cite{rahmati2020geoda}, HSJ \cite{DBLP:conf/sp/ChenJW20}, Opt \cite{DBLP:conf/iclr/ChengLCZYH19}, RayS \cite{RayS}, SignFlip \cite{SignFlip}, SignOPT \cite{SignOpt}, and Boundary \cite{DBLP:conf/iclr/BrendelRB18}; and 7 \emph{score-based} black-box attacks: Bandit \cite{Bandit}, NES \cite{DBLP:conf/icml/IlyasEAL18}, Parsimonious \cite{Parsimonious}, Sign \cite{Sign}, Square \cite{DBLP:conf/eccv/AndriushchenkoC20}, ZOSignSGD \cite{ZOSignSGD}, Simple attack \cite{DBLP:conf/icml/GuoGYWW19}. As our method does not constrain the perturbation budget but minimizing the perturbation, 
we also compare with two similar attacks: SparseEvo \cite{SparseEvo} and PointWise \cite{PointWise}. We evaluate our attack with both SSSP localization and binary-search localization. For optimized-based attacks, we limit the AEs in the valid image space. For a fair comparison with optimization-based attacks, the perturbation budget for $\ell_p$-bounded attacks are set to $0.1$ for $\ell_\infty$ and $5$ for $\ell_2$ on CIFAR10 and CIFAR100, while on ImageNet, they are $\ell_\infty = 0.1$ and $\ell_2 = 40$. The maximum query limits are $10,000$ for CIFAR10 and CIFAR100 and $1,000$ for ImageNet. We evaluate $1,000$ randomly selected images for each dataset.

\vspace{0.05in}

\noindent {\bf Defenses.}
We select 4 SOTA defenses against black-box attacks for evaluation: Blacklight detection \cite{li2022blacklight}, Randomized pre-processing defense (RAND-Pre) \cite{qin2021RAND}, Randomized post-processing defense (RAND-Post) \cite{chandrasekaran2020RANDpost}, and Adversarial Training based TRADES \cite{TRADES}. Blacklight has recently proposed to mitigate query-based black-box attacks by utilizing the similarity among queries. It has been shown to detect $100\%$ adversarial examples generated in multiple attacks. RAND-Pre and RAND-Post respectively add noise to the inputs and prediction logits to obfuscate the gradient estimation or local search. TRADES has demonstrated SOTA robustness performance against adversarial attacks by training on adversarial examples.

\vspace{0.05in}

\noindent\textbf{Metrics}. We use the below metrics to evaluate all compared attacks. 
\begin{itemize}[leftmargin=*]
\vspace{-0.03in}
\item {\bf Model Accuracy:} the model accuracy under attack and defense.

\item {\bf Number of RPQ (\# RPQ):} the number of the randomized parallel query for certifiable attack.

\item {\bf Number of Query (\# Q):}  the total number of queries for empirical attack. For our method, it is equal to Monte Carlo Sampling Number $\times$ \# RPQ + additional queries for sampling from the \randAE.

\item {\bf Certified Accuracy@p:} the certified accuracy at the ASP Threshold $p$. 
It is the percentage of the testing samples that have the certified ASP at least $p$, e.g., a $95\%$ certified accuracy with ASP Threshold $p=90\%$ means the adversary can guarantee to have $90\%$ probability to attack successfully for $95\%$ testing samples.

\item {\bf $\ell_2$ Perturbation Size (Dist. $\ell_2$):} $\ell_2$ distance between the adversarial example $x_{adv}$ and the clean input $x$, i.e., $\|x_{adv}-x\|_2$.

\item {\bf $\ell_2$ Mean Distance (Mean Dist. $\ell_2$):} $\ell_2$ distance between the mean $x'$ of \randAE~  and clean input $x$, i.e., $\|x'-x\|_2$.

\item {\bf Detection Success Rate (Det. Rate):} the detection success rate of Blacklight detection.

\item {\bf Average \# Queries for Detection (\# Q to Det.):} the average number of queries before Blacklight detects an AE.

\item {\bf Detection Coverage (Det. Cov.):} the percent of queries in an attack's query sequence that Blacklight identified as attack queries. 

\end{itemize}

\noindent\textbf{Parameters Settings}. There exist many parameters that may affect the performance of our certifiable attack. For instance, the Monte Carlo sampling number, the attack success probability $p$, and the family of the adversarial distribution and its parameters. If not specified, we set Monte Carlo sampling number to be $50$, $p=10\%$, and use Gaussian distribution with variance $\sigma=0.025$. We will also study the impact of these parameters in Section \ref{section:exp against defense}. All the parameter details are summarized in Table \ref{tab:parameters} in Appendix \ref{apd:exp settings}. 

\noindent\textbf{Experimental Environment}. We implemented a PyTorch library\footnote{The codes are available at \url{https://github.com/datasec-lab/CertifiedAttack}
} including 16 black-box attacks, 4 defenses, 6 datasets, and 9 models by integrating several open-source libraries\footnote{\href{https://github.com/SCLBD/BlackboxBench.git}{BlackboxBench}, \href{https://github.com/hysts/pytorch_image_classification}{pytorch image classification}, \href{https://github.com/huiying-li/blacklight}{Blacklight}, \href{https://github.com/SparseEvoAttack/SparseEvoAttack.github.io.git}{SparseEvo}, and \href{https://github.com/yaodongyu/TRADES}{TRADES}
}. The experiments were run on a server with AMD EPYC Genoa 9354 CPUs (32 Core, 3.3GHz), and NVIDIA H100 Hopper GPUs (80GB each).

\vspace{-0.05in}
\subsection{Verifying the Adversarial Distribution}
\label{sec: exp verification free}

\begin{figure}
    \centering
    \centering
    \includegraphics[width=0.23\textwidth]{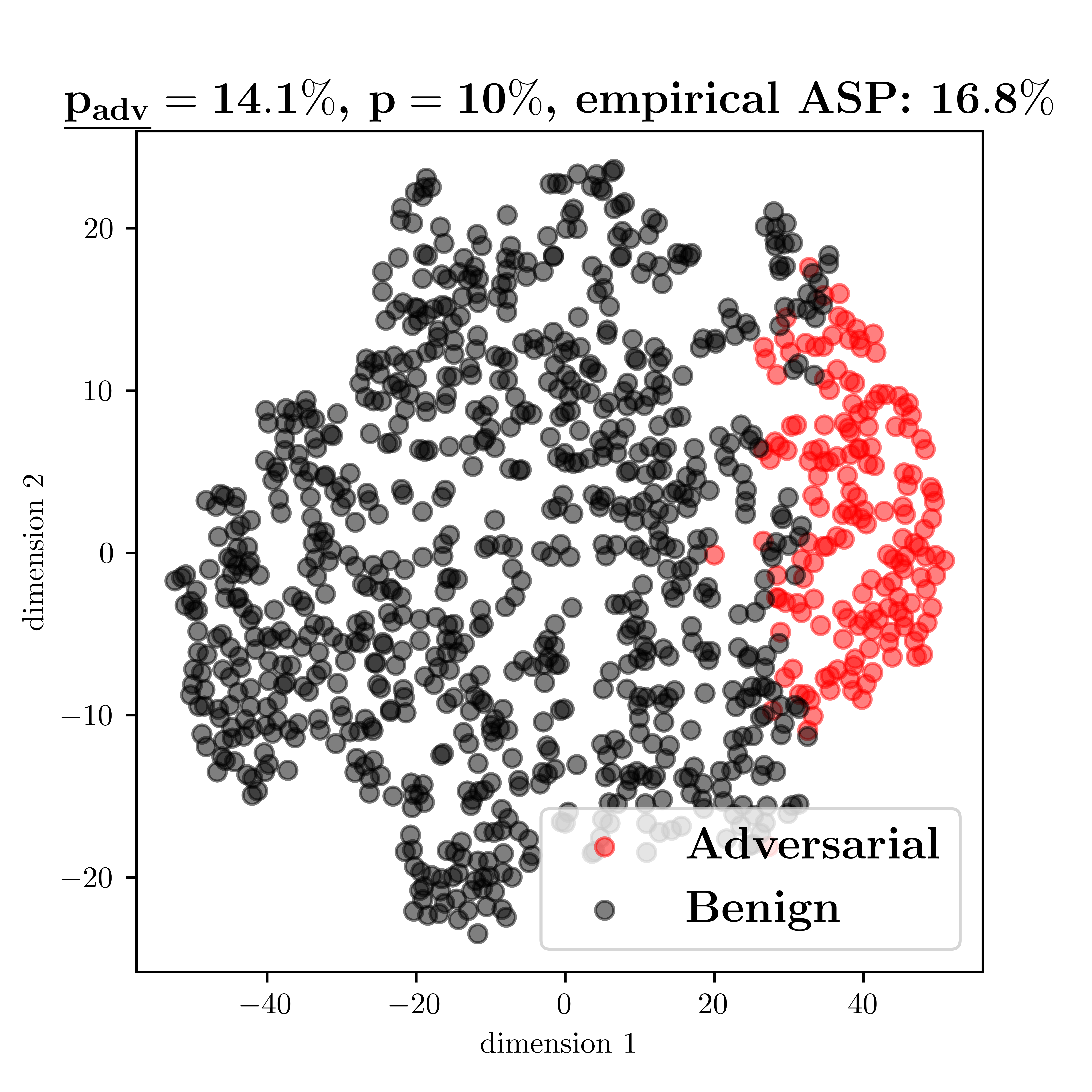}  
    \label{fig:mean and std of net14}
    \vspace{-4mm}
    \hfill
    \centering 
    \includegraphics[width=0.23\textwidth]{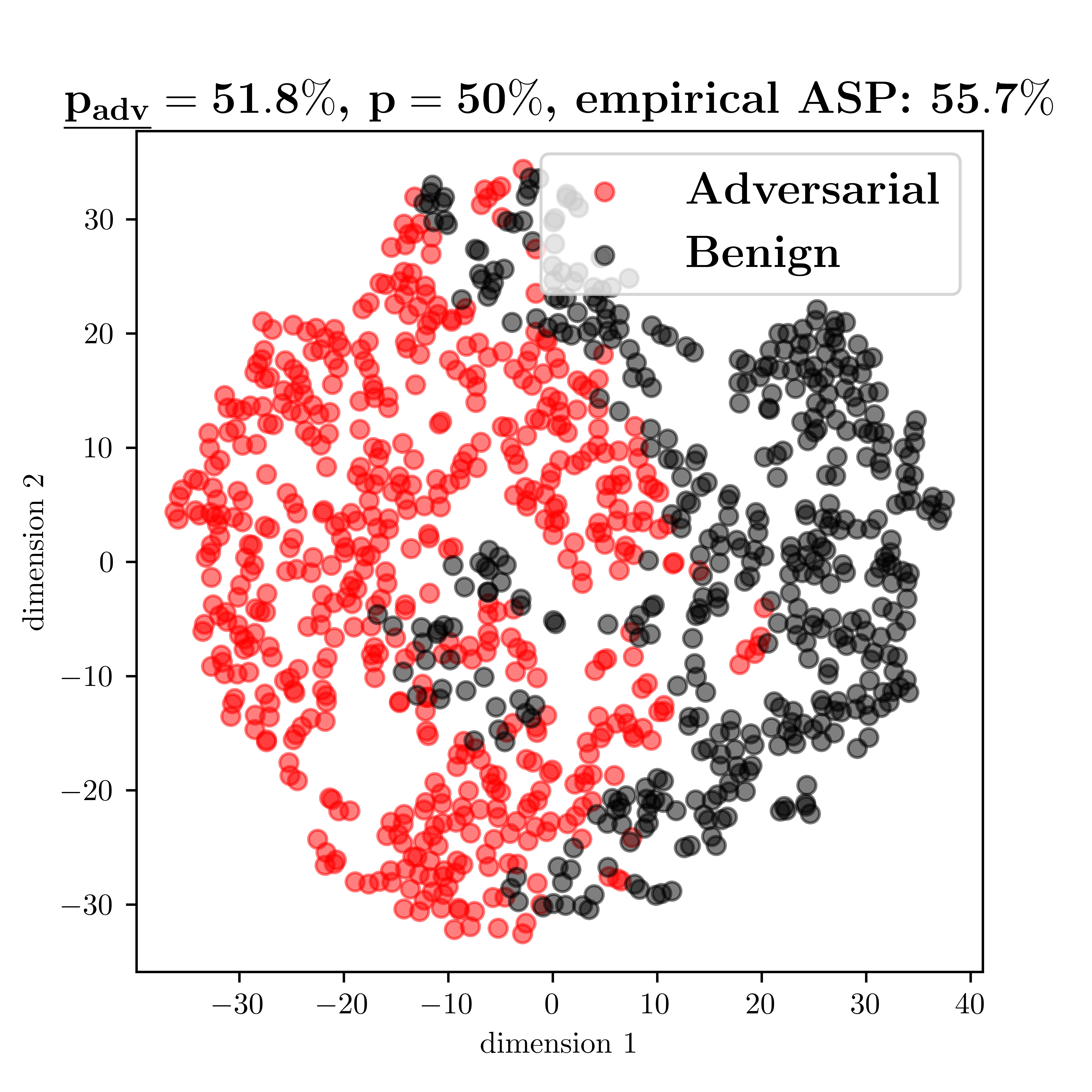}
    \label{fig:mean and std of net24}
    \vspace{-4mm}
    
    \vskip\baselineskip
    \centering 
    \includegraphics[width=0.23\textwidth]{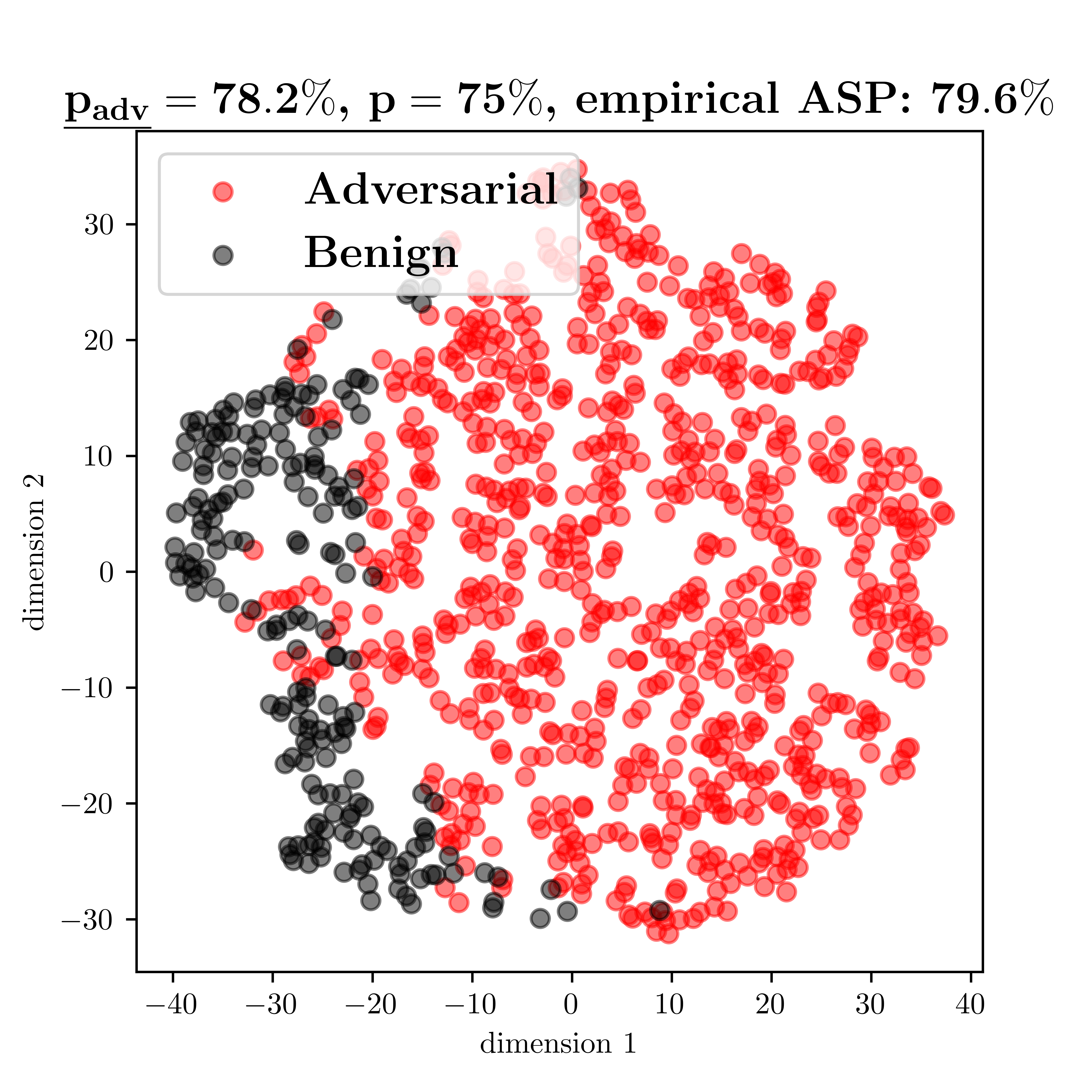}
    \label{fig:mean and std of net34}
    \hfill
    \centering 
    \includegraphics[width=0.23\textwidth]{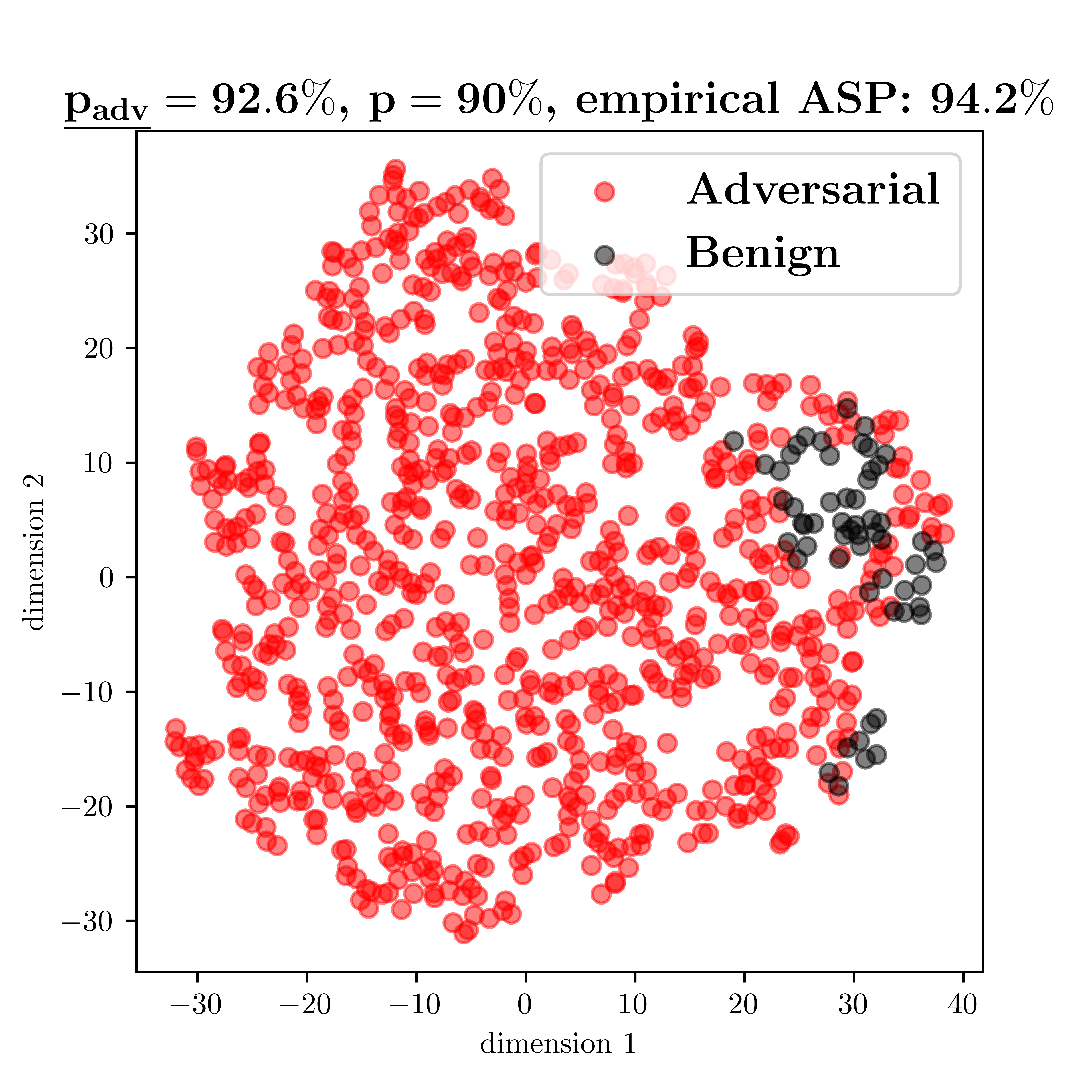}   
    \label{fig:mean and std of net44}
    \vspace{-4mm}
    \caption[]
    {t-SNE visualization of adversarial example sampling from the adversarial distribution.} 
    \label{fig:t-sne}
\end{figure}

We first assess the ASP of the crafted \randAE. Specifically, given an ASP Threshold $p$ and the certified \randAE~ $\varphi(x',\boldsymbol{\kappa})$, we randomly sample $1,000$ examples $x_{adv}\sim \varphi(x',\boldsymbol{\kappa})$, and query the model. We visualize the query results for 4 certified \randAE s with different $p$ using 2D t-SNE\footnote{t-SNE reduces the prediction logits of the random samples to 2-dimension.} \cite{t-SNE}. We also report the provable lower bound of ASP $\underline{p_{adv}}$ and the empirical ASP in Figure \ref{fig:t-sne}. It validates that the sampled AEs ensure the minimum ASP via the \randAE, and the \randAE~lies on the decision boundary.

\subsection{Attack Performance against SOTA Defenses}
\label{section:exp against defense}

In this section, we evaluate our certifiable attack and empirical attacks against the 4 studied SOTA defenses.

\subsubsection{Attack Performance under Blacklight Detection \cite{li2022blacklight}}
\label{sec: exp blacklight}
\begin{table}[]
\centering
\caption{Attack performance under Blacklight detection on ResNet and ImageNet (Clean Accuracy: $67.9\%$)}\vspace{-0.1in}

\label{tab:blacklight_imagenet_resnet}
\resizebox{0.47\textwidth}{!}{%
\begin{tabular}{lcccccccc}
\hline
Attack &
  \begin{tabular}[c]{@{}c@{}}Query\\ Type\end{tabular} &
  \begin{tabular}[c]{@{}c@{}}Pert.\\ Type\end{tabular} &
  \begin{tabular}[c]{@{}c@{}}Det. \\ Rate \%\end{tabular} &
  \begin{tabular}[c]{@{}c@{}}\# Q\\ to Det.\end{tabular} &
  \begin{tabular}[c]{@{}c@{}}Det.\\ Cov. \%\end{tabular} &
  \begin{tabular}[c]{@{}c@{}}Model\\ Acc.\end{tabular} &
  \# Q &
  \begin{tabular}[c]{@{}c@{}}Dist.\\ $\ell_2$\end{tabular} \\ \hline
Bandit                  & Score & \multicolumn{1}{c|}{$\ell_\infty$} & 100.0 & 1.0      & \multicolumn{1}{c|}{64.2} & 1.9  & 25   & 25.42 \\
NES                     & Score & \multicolumn{1}{c|}{$\ell_\infty$} & 100.0 & 10.3     & \multicolumn{1}{c|}{17.3} & 7.0  & 337  & 8.28  \\
Parsimonious            & Score & \multicolumn{1}{c|}{$\ell_\infty$} & 100.0 & 2.0      & \multicolumn{1}{c|}{96.7} & 3.8  & 282  & 25.24 \\
Sign                    & Score & \multicolumn{1}{c|}{$\ell_\infty$} & 100.0 & 2.0      & \multicolumn{1}{c|}{91.5} & 0.5  & 126  & 25.50 \\
Square                  & Score & \multicolumn{1}{c|}{$\ell_\infty$} & 100.0 & 2.0      & \multicolumn{1}{c|}{66.9} & 0.0  & 14   & 25.54 \\
ZOSignSGD               & Score & \multicolumn{1}{c|}{$\ell_\infty$} & 100.0 & 2.0      & \multicolumn{1}{c|}{50.2} & 12.5 & 322  & 8.53  \\
GeoDA                   & Label & \multicolumn{1}{c|}{$\ell_\infty$} & 100.0 & 1.0      & \multicolumn{1}{c|}{88.9} & 5.1  & 151  & 17.99 \\
HSJ                     & Label & \multicolumn{1}{c|}{$\ell_\infty$} & 100.0 & 7.3      & \multicolumn{1}{c|}{94.9} & 35.6 & 212  & 9.82  \\
Opt                     & Label & \multicolumn{1}{c|}{$\ell_\infty$} & 99.9  & 8.4      & \multicolumn{1}{c|}{81.4} & 61.2 & 646  & 0.98  \\
RayS                    & Label & \multicolumn{1}{c|}{$\ell_\infty$} & 100.0 & 4.4      & \multicolumn{1}{c|}{83.5} & 4.2  & 260  & 29.63 \\
SignFlip                & Label & \multicolumn{1}{c|}{$\ell_\infty$} & 100.0 & 8.5      & \multicolumn{1}{c|}{70.3} & 4.4  & 148  & 27.64 \\
SignOPT                 & Label & \multicolumn{1}{c|}{$\ell_\infty$} & 99.9  & 8.4      & \multicolumn{1}{c|}{69.8} & 55.9 & 570  & 1.32  \\
Bandit                  & Score & \multicolumn{1}{c|}{$\ell_2$}      & 100.0 & 1.0      & \multicolumn{1}{c|}{99.5} & 1.7  & 431  & 9.60  \\
NES                     & Score & \multicolumn{1}{c|}{$\ell_2$}      & 100.0 & 10.2     & \multicolumn{1}{c|}{32.8} & 61.2 & 571  & 0.45  \\
Simple                  & Score & \multicolumn{1}{c|}{$\ell_2$}      & 100.0 & 1.0      & \multicolumn{1}{c|}{99.9} & 53.6 & 883  & 0.88  \\
Square                  & Score & \multicolumn{1}{c|}{$\ell_2$}      & 100.0 & 2.0      & \multicolumn{1}{c|}{68.8} & 0.0  & 16   & 26.30 \\
ZOSignSGD               & Score & \multicolumn{1}{c|}{$\ell_2$}      & 100.0 & 2.0      & \multicolumn{1}{c|}{52.4} & 65.1 & 531  & 0.30  \\
Boundary                & Label & \multicolumn{1}{c|}{$\ell_2$}      & 100.0 & 7.2      & \multicolumn{1}{c|}{76.3} & 37.9 & 60   & 11.63 \\
GeoDA                   & Label & \multicolumn{1}{c|}{$\ell_2$}      & 100.0 & 1.0      & \multicolumn{1}{c|}{89.3} & 3.9  & 181  & 19.14 \\
HSJ                     & Label & \multicolumn{1}{c|}{$\ell_2$}      & 100.0 & 7.3      & \multicolumn{1}{c|}{93.4} & 11.4 & 255  & 22.21 \\
Opt                     & Label & \multicolumn{1}{c|}{$\ell_2$}      & 100.0 & 8.5      & \multicolumn{1}{c|}{67.9} & 41.2 & 610  & 16.71 \\
SignOPT                 & Label & \multicolumn{1}{c|}{$\ell_2$}      & 99.9  & 8.4      & \multicolumn{1}{c|}{62.9} & 36.7 & 485  & 17.54 \\
PointWise               & Label & \multicolumn{1}{c|}{Opt.}          & 100.0 & 1.0      & \multicolumn{1}{c|}{99.8} & 0.0  & 920  & 13.53 \\
SparseEvo               & Label & \multicolumn{1}{c|}{Opt.}          & 100.0 & 1.0      & \multicolumn{1}{c|}{99.9} & 0.0  & 1000 & 7.68  \\ \hline
\textbf{CA (sssp)}       & Label & \multicolumn{1}{c|}{Opt.}          & 0.0   & $\infty$ & \multicolumn{1}{c|}{0.0}  & 1.4  & 148  & 13.74 \\
\textbf{CA (bin search)} & Label & \multicolumn{1}{c|}{Opt.}          & 0.0   & $\infty$ & \multicolumn{1}{c|}{0.0}  & 0.0  & 603  & 33.14 \\ \hline
\end{tabular}%
}
\end{table}

We use the default setting from \cite{li2022blacklight} with a threshold of $25$. The results are presented in Table \ref{tab:blacklight_imagenet_resnet} and Tables \ref{tab:blacklight_cifar10_vgg}-\ref{tab:blacklight_cifar100_wrn} in Appendix \ref{apd: additional exp}. We have the following key observations: 1) Our certifiable attack consistently circumvents Blacklight with $0\%$ detection success rate, and $0\%$ detection coverage on all settings.  This indicates that none of the queries from our attack are detected. In contrast, existing black-box attacks are highly susceptible to Blacklight, with most achieving a $100\%$ detection success rate on various datasets and models. Even the most resilient attack, as shown in Appendix \ref{apd: additional exp}, Table \ref{tab:blacklight_cifar100_vgg}, attains an $86.5\%$ detection success rate on the CIFAR100 dataset using the VGG16 model. 2) With the strong ability to bypass the detection, our certifiable attack still maintains top attack performance on all the datasets and models such that the model accuracy can be attacked to $0\%$ with moderate $\ell_2$ perturbation size and few queries. 
The high attack accuracy and low detection rate of certifiable attacks stem from the randomness of \randAE~ and the guarantee of the attack success probability.

\subsubsection{Attack Performance under RAND-Pre \cite{qin2021RAND}}
\label{sec: exp rand pre-processing}
\begin{table}[]
\centering
\caption{Attack performance under RAND Pre-processing Defense on ResNet and ImageNet (Clean Accuracy: $67.0\%$)}\vspace{-0.1in}

\label{tab:imagenet_RAND_resnet}
\resizebox{0.47\textwidth}{!}{%
\begin{tabular}{lccccc}
\hline
Attack &
  \begin{tabular}[c]{@{}c@{}}Query\\ Type\end{tabular} &
  \begin{tabular}[c]{@{}c@{}}Perturbation\\ Type\end{tabular} &
  \# Query &
  Model Acc. &
  Dist. $\ell_2$ \\ \hline
\multicolumn{1}{l|}{Bandit}            & Score & \multicolumn{1}{c|}{$\ell_\infty$} & 10   & 6.7  & 25.26 \\
\multicolumn{1}{l|}{NES}               & Score & \multicolumn{1}{c|}{$\ell_\infty$} & 428  & 49.8 & 10.26 \\
\multicolumn{1}{l|}{Parsimonious}      & Score & \multicolumn{1}{c|}{$\ell_\infty$} & 243  & 62.7 & 25.12 \\
\multicolumn{1}{l|}{Sign}              & Score & \multicolumn{1}{c|}{$\ell_\infty$} & 116  & 40.6 & 25.20 \\
\multicolumn{1}{l|}{Square}            & Score & \multicolumn{1}{c|}{$\ell_\infty$} & 27   & 10.4 & 24.96 \\
\multicolumn{1}{l|}{ZOSignSGD}         & Score & \multicolumn{1}{c|}{$\ell_\infty$} & 428  & 49.4 & 10.36 \\
\multicolumn{1}{l|}{GeoDA}             & Label & \multicolumn{1}{c|}{$\ell_\infty$} & 150  & 40.0 & 18.08 \\
\multicolumn{1}{l|}{HSJ}               & Label & \multicolumn{1}{c|}{$\ell_\infty$} & 232  & 58.5 & 8.76  \\
\multicolumn{1}{l|}{Opt}               & Label & \multicolumn{1}{c|}{$\ell_\infty$} & 905  & 69.4 & 0.44  \\
\multicolumn{1}{l|}{RayS}              & Label & \multicolumn{1}{c|}{$\ell_\infty$} & 235  & 47.9 & 28.14 \\
\multicolumn{1}{l|}{SignFlip}          & Label & \multicolumn{1}{c|}{$\ell_\infty$} & 46   & 52.8 & 13.06 \\
\multicolumn{1}{l|}{SignOPT}           & Label & \multicolumn{1}{c|}{$\ell_\infty$} & 394  & 59.1 & 0.39  \\
\multicolumn{1}{l|}{Bandit}            & Score & \multicolumn{1}{c|}{$\ell_2$}      & 583  & 58.2 & 12.99 \\
\multicolumn{1}{l|}{NES}               & Score & \multicolumn{1}{c|}{$\ell_2$}      & 341  & 66.8 & 0.43  \\
\multicolumn{1}{l|}{Simple}            & Score & \multicolumn{1}{c|}{$\ell_2$}      & 258  & 67.2 & 0.10  \\
\multicolumn{1}{l|}{Square}            & Score & \multicolumn{1}{c|}{$\ell_2$}      & 18   & 13.6 & 25.96 \\
\multicolumn{1}{l|}{ZOSignSGD}         & Score & \multicolumn{1}{c|}{$\ell_2$}      & 249  & 67.3 & 0.28  \\
\multicolumn{1}{l|}{Boundary}          & Label & \multicolumn{1}{c|}{$\ell_2$}      & 38   & 49.2 & 15.12 \\
\multicolumn{1}{l|}{GeoDA}             & Label & \multicolumn{1}{c|}{$\ell_2$}      & 149  & 47.4 & 17.70 \\
\multicolumn{1}{l|}{HSJ}               & Label & \multicolumn{1}{c|}{$\ell_2$}      & 225  & 55.7 & 14.30 \\
\multicolumn{1}{l|}{Opt}               & Label & \multicolumn{1}{c|}{$\ell_2$}      & 1000 & 58.2 & 12.28 \\
\multicolumn{1}{l|}{SignOPT}           & Label & \multicolumn{1}{c|}{$\ell_2$}      & 406  & 52.2 & 15.41 \\
\multicolumn{1}{l|}{PointWise}         & Label & \multicolumn{1}{c|}{Optimized}  & 942  & 54.6 & 16.90 \\
\multicolumn{1}{l|}{SparseEvo}         & Label & \multicolumn{1}{c|}{Optimized}  & 1000 & 61.7 & 11.33 \\ \hline
\multicolumn{1}{l|}{\textbf{CA (sssp)}} & Label & \multicolumn{1}{c|}{Optimized}  & 154  & 1.7  & 13.98 \\
\multicolumn{1}{l|}{\textbf{CA (bin search)}} &
  Label &
  \multicolumn{1}{c|}{Optimized} &
  603 &
  0.0 &
  32.16 \\ \hline
\end{tabular}%
}
\end{table}

We follow \cite{qin2021RAND} to inject the Gaussian noise with standard deviation $0.02$ to the query (in the input space). The experimental results are presented in Table \ref{tab:imagenet_RAND_resnet} and Tables \ref{tab:cifar10_RAND_vgg}-\ref{tab:cifar100_RAND_wrn} in Appendix \ref{apd: additional exp}. Based on a comprehensive analysis of all results, it is evident that the RAND-Pre consistently reduces the attack success rate of existing black-box attacks. Specifically, the defense reduces the average attack success rate of empirical black-box attacks from $92\%$ to $30\%$ on CIFAR10, from $95\%$ to $29\%$ on CIFAR100, and from $69\%$ to $25\%$ on ImageNet, respectively. However, our attack still achieves the average attack success rate of $93\%$, $99\%$, and $99\%$ respectively on the three datasets under RAND-Pre. Further, we highlight that, with RAND-Pre applied across all datasets and models, 
the average $\ell_2$ perturbation size and number of queries in our certifiable attack \emph{decrease by $4.2\%$ and $2.1\%$}, respectively. This intriguing observation matches our findings in Section \ref{sec: exp diff variance} where a larger variance leads to smaller $\ell_2$ mean distance and \# RPQ in the certifiable attack. This is because the Gaussian noise injected by the defense  (e.g., $\epsilon_1 \sim \mathcal{N}(0,v)$) is added to the adversary's noise (e.g., $\epsilon_2 \sim \mathcal{N}(0,u)$), leading to a larger variance $v+u$ and hence further enhancing our attack. 

\vspace{-0.25in}

\subsubsection{Attack Performance under RAND-Post \cite{chandrasekaran2020RANDpost}}
\label{sec: exp rand post-processing}

\vspace{0.2in}

\begin{table}[]
\centering
\caption{Attack performance under RAND Post-processing Defense on ResNet and ImageNet (Clean Accuracy: $68.0\%$)}\vspace{-0.1in}
\label{tab:imagenet_post_RAND_resnet}
\resizebox{0.47\textwidth}{!}{%
\begin{tabular}{lccccc}
\hline
Attack &
  \begin{tabular}[c]{@{}c@{}}Query\\ Type\end{tabular} &
  \begin{tabular}[c]{@{}c@{}}Perturbation\\ Type\end{tabular} &
  \# Query &
  Model Acc. &
  Dist. $\ell_2$ \\ \hline
\multicolumn{1}{l|}{Bandit}            & Score & \multicolumn{1}{c|}{$\ell_\infty$} & 17   & 2.7  & 25.51 \\
\multicolumn{1}{l|}{NES}               & Score & \multicolumn{1}{c|}{$\ell_\infty$} & 378  & 18.6 & 9.53  \\
\multicolumn{1}{l|}{Parsimonious}      & Score & \multicolumn{1}{c|}{$\ell_\infty$} & 253  & 47.9 & 25.46 \\
\multicolumn{1}{l|}{Sign}              & Score & \multicolumn{1}{c|}{$\ell_\infty$} & 124  & 8.1  & 25.81 \\
\multicolumn{1}{l|}{Square}            & Score & \multicolumn{1}{c|}{$\ell_\infty$} & 18   & 0.8  & 25.44 \\
\multicolumn{1}{l|}{ZOSignSGD}         & Score & \multicolumn{1}{c|}{$\ell_\infty$} & 376  & 21.4 & 9.71  \\
\multicolumn{1}{l|}{GeoDA}             & Label & \multicolumn{1}{c|}{$\ell_\infty$} & 143  & 38.6 & 17.62 \\
\multicolumn{1}{l|}{HSJ}               & Label & \multicolumn{1}{c|}{$\ell_\infty$} & 212  & 52.7 & 8.82  \\
\multicolumn{1}{l|}{Opt}               & Label & \multicolumn{1}{c|}{$\ell_\infty$} & 1000 & 65.3 & 0.67  \\
\multicolumn{1}{l|}{RayS}              & Label & \multicolumn{1}{c|}{$\ell_\infty$} & 243  & 43.9 & 28.09 \\
\multicolumn{1}{l|}{SignFlip}          & Label & \multicolumn{1}{c|}{$\ell_\infty$} & 86   & 47.2 & 15.44 \\
\multicolumn{1}{l|}{SignOPT}           & Label & \multicolumn{1}{c|}{$\ell_\infty$} & 412  & 63.6 & 0.64  \\
\multicolumn{1}{l|}{Bandit}            & Score & \multicolumn{1}{c|}{$\ell_2$}      & 596  & 6.0  & 13.96 \\
\multicolumn{1}{l|}{NES}               & Score & \multicolumn{1}{c|}{$\ell_2$}      & 344  & 59.7 & 0.44  \\
\multicolumn{1}{l|}{Simple}            & Score & \multicolumn{1}{c|}{$\ell_2$}      & 241  & 58.9 & 0.10  \\
\multicolumn{1}{l|}{Square}            & Score & \multicolumn{1}{c|}{$\ell_2$}      & 23   & 0.4  & 26.46 \\
\multicolumn{1}{l|}{ZOSignSGD}         & Score & \multicolumn{1}{c|}{$\ell_2$}      & 275  & 61.4 & 0.29  \\
\multicolumn{1}{l|}{Boundary}          & Label & \multicolumn{1}{c|}{$\ell_2$}      & 24   & 48.0 & 12.64 \\
\multicolumn{1}{l|}{GeoDA}             & Label & \multicolumn{1}{c|}{$\ell_2$}      & 146  & 40.6 & 16.89 \\
\multicolumn{1}{l|}{HSJ}               & Label & \multicolumn{1}{c|}{$\ell_2$}      & 238  & 49.5 & 14.59 \\
\multicolumn{1}{l|}{Opt}               & Label & \multicolumn{1}{c|}{$\ell_2$}      & 1000 & 53.9 & 12.62 \\
\multicolumn{1}{l|}{SignOPT}           & Label & \multicolumn{1}{c|}{$\ell_2$}      & 411  & 46.3 & 15.96 \\
\multicolumn{1}{l|}{PointWise}         & Label & \multicolumn{1}{c|}{Optimized}  & 969  & 55.1 & 16.01 \\
\multicolumn{1}{l|}{SparseEvo}         & Label & \multicolumn{1}{c|}{Optimized}  & 1000 & 66.7 & 9.10  \\ \hline
\multicolumn{1}{l|}{\textbf{CA (sssp)}} & Label & \multicolumn{1}{c|}{Optimized}  & 147  & 1.4  & 13.70 \\
\multicolumn{1}{l|}{\textbf{CA (bin search)}} &
  Label &
  \multicolumn{1}{c|}{Optimized} &
  603 &
  0.0 &
  32.67 \\ \hline
\end{tabular}%
}
\end{table}

We follow \cite{chandrasekaran2020RANDpost} to inject the Gaussian noise with standard deviation $0.2$ to the output logits of each query (applied to both hard label-based and score-based attacks). The experimental results are presented in Table \ref{tab:imagenet_post_RAND_resnet}, and Tables \ref{tab:cifar10_post_RAND_vgg}-\ref{tab:cifar100_post_RAND_wrn}.  
Similarly, we find that  RAND-Post can strongly degrade the average attack success rate of hard label-based empirical attacks from $84\%$ to $41\%$ on CIFAR10, from $89\%$ to $45\%$ on CIFAR100, and from $60\%$ to $24\%$ on ImageNet, respectively. On the other hand, it moderately degrades the average attack success rate of score-based empirical attacks from $100\%$ to $91\%$ on CIFAR10, from $100\%$ to $95\%$ on CIFAR100, and from $72\%$ to $62\%$ on ImageNet. The discrepancy between label-based and score-based empirical attacks may stem from variations in the richness and smoothness of the query information. The loss value (score), providing a smoother evaluation, is less susceptible to noise interference and discloses finer-grained details. In contrast, labels are more likely to be impacted by injected noise, resulting in more randomized query outcomes.  However, our hard-label certifiable attack shows strong resilience against RAND-Post, by maintaining the average attack success rate at $93\%$, $99\%$, and $99\%$ on CIFAR10, CIFAR100, and ImageNet, respectively. This advantage over empirical attacks, particularly the label-based ones, originates from Randomized Parallel Querying---It precisely assesses query results with a lower bound of the ASP.

\subsubsection{Attack Performance under TRADES \cite{TRADES}}
\label{sec: exp adversarial training}
\begin{table}[]
\centering
\caption{Attack performance under TRADES Adversarial Training on ResNet and CIFAR10}\vspace{-0.1in}

\label{tab:AT_cifar10_resnet}
\resizebox{0.47\textwidth}{!}{%
\begin{tabular}{clccccc}
\hline
\multicolumn{1}{l}{Defense} &
  Attack &
  \begin{tabular}[c]{@{}c@{}}Query\\ Type\end{tabular} &
  \begin{tabular}[c]{@{}c@{}}Pert.\\ Type\end{tabular} &
  \# Query &
  Model Acc. &
  \begin{tabular}[c]{@{}c@{}}Dist.\\ $\ell_2$\end{tabular} \\ \hline
\multicolumn{1}{c|}{\multirow{16}{*}{\begin{tabular}[c]{@{}c@{}}\rotatebox[origin=c]{90}{\begin{minipage}{4cm}
\centering \bf $\mathbf{\ell_\infty}$ Adversarial Training \\ (Clean Accuracy: $\mathbf{80.9\%}$) \end{minipage}}\end{tabular}}} &
  \multicolumn{1}{l|}{Bandit} &
  Score &
  \multicolumn{1}{c|}{$\ell_\infty$} &
  1601 &
  10.2 &
  4.32 \\
\multicolumn{1}{c|}{} & \multicolumn{1}{l|}{NES}                     & Score & \multicolumn{1}{c|}{$\ell_\infty$} & 1474 & 27.4 & 2.27 \\
\multicolumn{1}{c|}{} & \multicolumn{1}{l|}{Parsimonious}            & Score & \multicolumn{1}{c|}{$\ell_\infty$} & 630  & 5.3  & 4.35 \\
\multicolumn{1}{c|}{} & \multicolumn{1}{l|}{Sign}                    & Score & \multicolumn{1}{c|}{$\ell_\infty$} & 439  & 4.4  & 4.37 \\
\multicolumn{1}{c|}{} & \multicolumn{1}{l|}{Square}                  & Score & \multicolumn{1}{c|}{$\ell_\infty$} & 854  & 5.7  & 4.39 \\
\multicolumn{1}{c|}{} & \multicolumn{1}{l|}{ZOSignSGD}               & Score & \multicolumn{1}{c|}{$\ell_\infty$} & 1196 & 37.0 & 2.21 \\
\multicolumn{1}{c|}{} & \multicolumn{1}{l|}{GeoDA}                   & Label & \multicolumn{1}{c|}{$\ell_\infty$} & 1358 & 41.9 & 1.99 \\
\multicolumn{1}{c|}{} & \multicolumn{1}{l|}{HSJ}                     & Label & \multicolumn{1}{c|}{$\ell_\infty$} & 2149 & 36.5 & 1.92 \\
\multicolumn{1}{c|}{} & \multicolumn{1}{l|}{Opt}                     & Label & \multicolumn{1}{c|}{$\ell_\infty$} & 1871 & 73.0 & 0.19 \\
\multicolumn{1}{c|}{} & \multicolumn{1}{l|}{RayS}                    & Label & \multicolumn{1}{c|}{$\ell_\infty$} & 721  & 7.3  & 4.29 \\
\multicolumn{1}{c|}{} & \multicolumn{1}{l|}{SignFlip}                & Label & \multicolumn{1}{c|}{$\ell_\infty$} & 2240 & 24.4 & 3.36 \\
\multicolumn{1}{c|}{} & \multicolumn{1}{l|}{SignOPT}                 & Label & \multicolumn{1}{c|}{$\ell_\infty$} & 832  & 69.3 & 0.15 \\
\multicolumn{1}{c|}{} & \multicolumn{1}{l|}{PointWise}               & Label & \multicolumn{1}{c|}{Opt.}          & 3460 & 9.3  & 4.38 \\
\multicolumn{1}{c|}{} & \multicolumn{1}{l|}{SparseEvo}               & Label & \multicolumn{1}{c|}{Opt.}          & 8691 & 9.1  & 5.10 \\ \cline{2-7} 
\multicolumn{1}{c|}{} & \multicolumn{1}{l|}{\textbf{CA (sssp)}}       & Label & \multicolumn{1}{c|}{Opt.}          & 548  & 21.2 & 4.29 \\
\multicolumn{1}{c|}{} & \multicolumn{1}{l|}{\textbf{CA (bin search)}} & Label & \multicolumn{1}{c|}{Opt.}          & 412  & 9.8  & 6.31 \\ \hline
\multicolumn{1}{c|}{\multirow{14}{*}{\begin{tabular}[c]{@{}c@{}}\rotatebox[origin=c]{90}{\begin{minipage}{4cm}
\centering \bf $\mathbf{\ell_2}$ Adversarial Training \\ (Clean Accuracy: $\mathbf{59.2\%}$) \end{minipage}} \end{tabular}}} &
  \multicolumn{1}{l|}{Bandit} &
  Score &
  \multicolumn{1}{c|}{$\ell_2$} &
  860 &
  1.5 &
  2.44 \\
\multicolumn{1}{c|}{} & \multicolumn{1}{l|}{NES}                     & Score & \multicolumn{1}{c|}{$\ell_2$}      & 3535 & 9.5  & 0.99 \\
\multicolumn{1}{c|}{} & \multicolumn{1}{l|}{Simple}                  & Score & \multicolumn{1}{c|}{$\ell_2$}      & 4062 & 2.1  & 1.29 \\
\multicolumn{1}{c|}{} & \multicolumn{1}{l|}{Square}                  & Score & \multicolumn{1}{c|}{$\ell_2$}      & 991  & 4.6  & 2.95 \\
\multicolumn{1}{c|}{} & \multicolumn{1}{l|}{ZOSignSGD}               & Score & \multicolumn{1}{c|}{$\ell_2$}      & 3505 & 15.1 & 0.77 \\
\multicolumn{1}{c|}{} & \multicolumn{1}{l|}{Boundary}                & Label & \multicolumn{1}{c|}{$\ell_2$}      & 771  & 40.7 & 1.19 \\
\multicolumn{1}{c|}{} & \multicolumn{1}{l|}{GeoDA}                   & Label & \multicolumn{1}{c|}{$\ell_2$}      & 1506 & 14.3 & 2.85 \\
\multicolumn{1}{c|}{} & \multicolumn{1}{l|}{HSJ}                     & Label & \multicolumn{1}{c|}{$\ell_2$}      & 1332 & 5.1  & 3.53 \\
\multicolumn{1}{c|}{} & \multicolumn{1}{l|}{Opt}                     & Label & \multicolumn{1}{c|}{$\ell_2$}      & 2890 & 41.6 & 2.39 \\
\multicolumn{1}{c|}{} & \multicolumn{1}{l|}{SignOPT}                 & Label & \multicolumn{1}{c|}{$\ell_2$}      & 1766 & 33.6 & 2.76 \\
\multicolumn{1}{c|}{} & \multicolumn{1}{l|}{PointWise}               & Label & \multicolumn{1}{c|}{Opt.}          & 4845 & 0.6  & 5.36 \\
\multicolumn{1}{c|}{} & \multicolumn{1}{l|}{SparseEvo}               & Label & \multicolumn{1}{c|}{Opt.}          & 9697 & 0.4  & 6.03 \\ \cline{2-7} 
\multicolumn{1}{c|}{} & \multicolumn{1}{l|}{\textbf{CA (sssp)}}       & Label & \multicolumn{1}{c|}{Opt.}          & 809  & 20.4 & 6.06 \\
\multicolumn{1}{c|}{} & \multicolumn{1}{l|}{\textbf{CA (bin search)}} & Label & \multicolumn{1}{c|}{Opt.}          & 461  & 0.0  & 8.18 \\ \hline
\end{tabular}%
}
\end{table}

We consider both $\ell_\infty$ and $\ell_2$ perturbations to generate adversarial examples, and TRADES respectively uses $\ell_2$ or  $\ell_\infty$  adversarial examples for adversarial training. We set the perturbation size to be \(\ell_\infty = 0.1\) and \(\ell_2 = 5\), following  \cite{TRADES}. 
We then evaluate all attacks against TRADES. The results on CIFAR10 are presented in Table \ref{tab:AT_cifar10_resnet}\footnote{It is computationally intensive and time-consuming to train TRADES on CIFAR100 and ImageNet}. We observe our attack requires much less query number than the empirical attacks. Also, our attack can achieve $100\%$ attack success rate (with the binary search localization), but at the cost of a relatively larger perturbation size.

\subsection{Ablation Study}
\label{section:exp ablation}

In this section, we explore in-depth our certifiable attack---we study its performance with varying noise variances, ASP thresholds, localization and shifting methods, and noise PDFs. We mainly show results on the image datasets and defer results on the audio dataset to Appendix \ref{apd: additional exp}, where similar performance can be observed. 

\subsubsection{Attack Performance on Different Noise Variances}
\label{sec: exp diff variance}
Table \ref{tab:diff variance} shows the performance of our attack with varying noise variances used in $\varphi$. We have the following key observations: 
1) As the variance increases, the $\ell_2$ perturbation size increases, since larger variance results in larger noise. 
2) The $\ell_2$ mean distance tends to decrease as the variance increases. This could be because that larger variance covers a larger decision space, and without moving the mean far away from the clean input, we can easily find a large portion of adversarial samples under the distribution with a large variance. 
3) As the variance increases, the number of RPQ decreases. This is because the larger variance usually leads to a larger shifting step. It takes fewer iterations to move to the decision boundary when the variance increases. 4) Finally, a larger certified accuracy means that it is easier to determine the \randAE. The results on CIFAR10 show that it is easier to find a small area of adversarial examples than a large area of adversarial examples. On ImageNet, we observe nearly $100\%$ certified accuracy, which means it is relatively easy to find the adversarial examples on datasets with a large number of classes (since $999$ out of $1,000$ classes in ImageNet are all false classes) or with high feature dimension.

\begin{table}[!t]
\small
  \caption{Attack performance of our certifiable attack with varying Gaussian noise variances $\sigma$ ($p=90\%$) 
  }
    \centering
    \resizebox{0.47\textwidth}{!}{%
    \begin{tabular}{c|c| c c c c}
    \hline
         & $\sigma$ & Dist. $\ell_2$ &Mean Dist. $\ell_2$ & \# RPQ & Certified Acc.  \\
    \hline
         \multirow{3}*{\rotatebox{90}{\tiny CIFAR10}} 
                                                &0.10  & 7.39 &3.96  &18.34 &94.17\% \\ 
                                                &0.25  & 12.95 &2.34  &14.35 &91.21\% \\
                                                &0.50  & 19.41 &0.43 &11.38  &90.00\% \\
    \hline
             \multirow{3}*{\rotatebox{90}{\tiny  ImageNet}} 
                                                     & 0.10 &41.80 &16.78  &32.55  &99.80\%  \\ 
                                                     & 0.25 &87.47 &16.78 &17.02 &99.60\%  \\
                                                     & 0.50 &135.47 &2.27 &8.31  &100.00\% \\
                                                     \hline
    \end{tabular}}
    \label{tab:diff variance}
\vspace{-0.05in}
\end{table}

\begin{table}[!t]
\small
  \caption{Attack performance of our certifiable attack with varying $p$ under the Gaussian variance $\sigma=0.25$}\vspace{-0.1in}
    \centering
    \resizebox{0.47\textwidth}{!}{%
    \begin{tabular}{c|c|c c c c}
    \hline
         & p & Dist. $\ell_2$ &Mean Dist. $\ell_2$ & \# RPQ & Certified Acc.  \\
        \hline
             \multirow{6}{*}[-1mm]{\rotatebox{90}{CIFAR10}} 
                                                     & 50\% &12.65 &1.63  &9.34  &97.17\%  \\ 
                                                     & 60\% &12.72 &1.86 &11.09 &95.85\%  \\
                                                     & 70\% &12.80 &2.05 &11.94  &94.72\% \\
                                                     & 80\% &12.87 &2.18 &12.37  &93.17\% \\
                                                     & 90\% &12.95 &2.34 &14.35 &91.21\%  \\
                                                     & 95\% &13.09 &2.65 &15.93  &90.37\% \\
    \hline
             \multirow{6}{*}[-1mm]{\rotatebox{90}{ImageNet}} 
                                                     & 50\% &85.88 &9.89  &12.85  &100.00\%  \\ 
                                                     & 60\% &86.20 &11.30 &13.63 &100.00\%  \\
                                                     & 70\% &86.45 &12.64 &14.33  &100.00\% \\
                                                     & 80\% &87.03 &14.64 &16.02  &100.00\% \\
                                                     & 90\% &87.47 &16.78 &17.02 &99.60\%  \\
                                                     & 95\% &88.42 &19.98 &19.81  &100.00\% \\   
    \hline
    \end{tabular}}
    \label{tab:diff p}
\vspace{-0.05in}
\end{table}

\begin{table}[]
\centering
\caption{Attack performance of our certifiable attack on different localization/refinement algorithms ($\sigma=0.25$, $p=90\%$)}
\vspace{-0.05in}
\label{tab:ablation study}
\resizebox{0.47\textwidth}{!}{%
\begin{tabular}{c|c|cccc}
\hline
Localization  & Refinement & Dist. $\ell_2$ & Mean Dist. $\ell_2$ & \# RPQ & Cert. Acc.    \\ \hline
sssp          & none       & 11.46          & 1.35                & 2.30   & 92.54 \\
binary search & none       & 11.29          & 0.34                & 9.07   & 92.54 \\
random        & geo.       & 11.80          & 1.73                & 67.53  & 92.54 \\
sssp          & geo.       & 11.20          & 0.49                & 3.70   & 91.54 \\
binary search & geo.       & 11.28          & 0.27                & 10.08  & 92.53 \\ \hline
\end{tabular}%
}\vspace{-0.05in}
\end{table}

\subsubsection{Attack Performance on Different ASP Thresholds}

We study the relationship between the performance of our attack and the ASP threshold, and Table \ref{tab:diff p} shows the results. As $p$ increases, so do the $\ell_2$ perturbation size, the $\ell_2$ mean distance, and the number of RPQ. On one hand, a larger $p$ means it requires more adversarial examples to fall into the false classes. When the noise variance is fixed, the mean of the \randAE~ should be further away from the decision boundary to allow more adversarial examples to fall into the false classes. On the other hand, the smaller $p$ results in a larger shifting distance, which depends on the gap between $p$ and $\underline{p_{adv}}$ (see the Gaussian-case of Theorem \ref{thm1} in Appendix \ref{apd:thm2}). With a larger shifting distance, the required number of RPQ can be fewer. We also observe that a smaller $p$ results in a higher certified accuracy on CIFAR10, since a smaller $p$ allows more ``failed" adversarial examples. On ImageNet, the certified accuracy is consistently $\sim100\%$, no matter $p$'s value. This might still because it is much easier to find adversarial examples with a much larger number of classes.

\subsubsection{Attack Performance on Different Localization/Refinement Algorithms} 
\label{sec:ablation}

In this experiment, 
we compare our proposed Smoothed SSP and binary-search localization methods with the random localization baseline; and compare our proposed geometric shifting method with a no-shifting baseline. 
Results are shown in Table \ref{tab:ablation study}. We observe that the combination of the localization and refinement methods yields the smallest perturbation size, i.e., the smallest Dist. $\ell_2$ and Mean Dist. $\ell_2$. This demonstrates that they are both effective in improving the imperceptibility of adversarial examples.  

\vspace{0.05in}

\noindent \textbf{Visualization}. We also visualize the adversarial examples $x_{adv}$  while crafting the Certifiable Attack for Binary-search Localization (Figure \ref{fig:visual_binary}) and SSSP Localization (Figure \ref{fig:visual_sssp}). It shows that when $\sigma=0.025$, both the Binary-search and SSSP-based certifiable attack can craft imperceptible perturbations. The difference is that the binary search method starts from a random $x'$ and requires more \# RPQ to update the \randAE, while the SSSP can easily find an initial \randAE with small perturbation and thus requires fewer \# RPQ.
\begin{figure*}
    \centering
    \vspace{-0.05in}
    \includegraphics[width=0.9\textwidth]{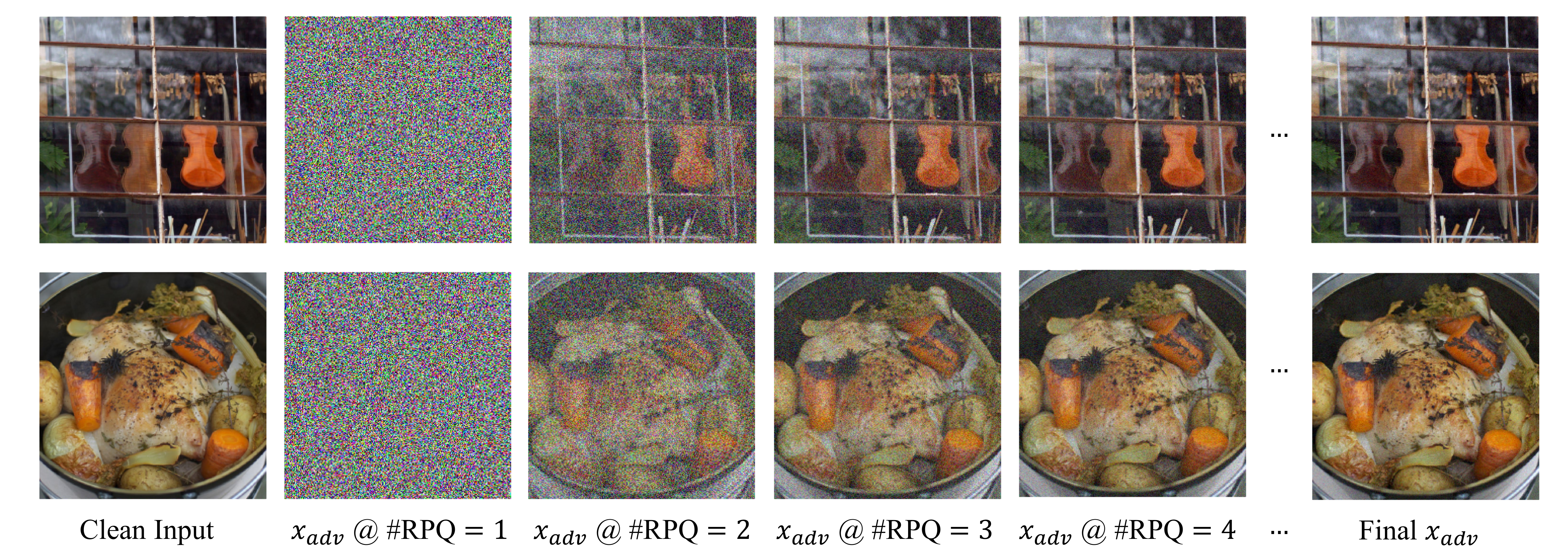}
    \vspace{-0.05in}
    \caption{Visualization of successful adversarial examples crafting by certifiable attack with binary-search localization}
    \label{fig:visual_binary}
\vspace{-0.05in}
\end{figure*}

\begin{figure}
    \centering
    \vspace{-0.05in}
    \includegraphics[width=0.44\textwidth]{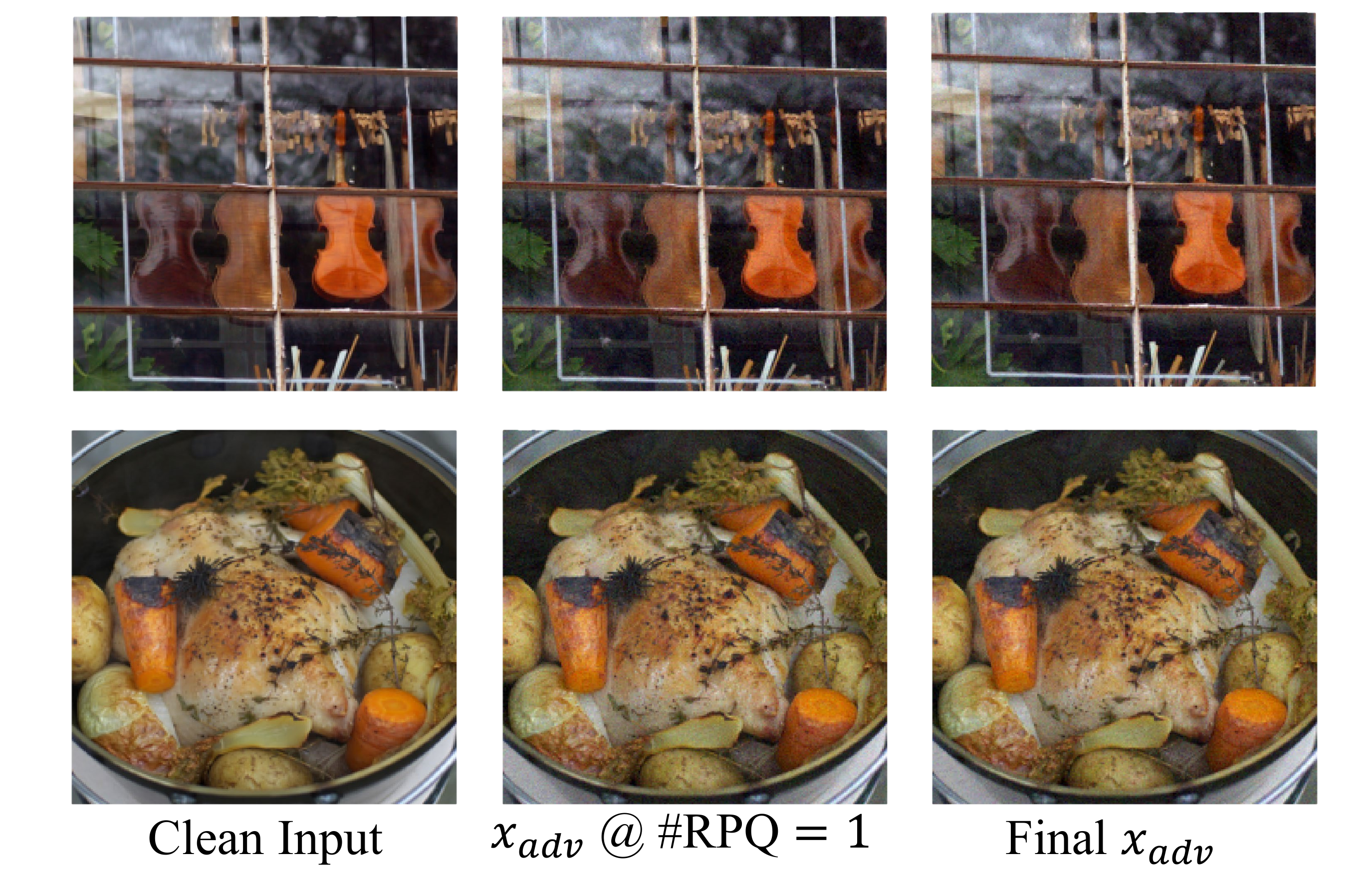}
    \vspace{-0.05in}
    \caption{Visualization of successful adversarial examples crafting by certifiable attack with SSSP localization (SSSP requires fewer \# RPQ)}
    \label{fig:visual_sssp}
\vspace{-0.05in}
\end{figure}

\vspace{-0.05in}
\subsubsection{Attack Performance on Different Noise Distributions}

Our attack can use any continuous noise distribution to craft the \randAE. Besides the Gaussian noise distribution, in this experiment, we also evaluate the performance of our certifiable attack using other noise distributions including the Cauthy distribution, Hyperbolic Secant distribution, and general normal distributions. Note that we adjust the parameters in these distributions to ensure consistent variances for a fair comparison. 

The results are presented in Table \ref{tab:diff pdf}, and the noise distributions are plotted in Figure \ref{fig:MC and pdfs} in Appendix. On both datasets, we observe the $\ell_2$ perturbation size is decreasing while the $\ell_2$ mean distance is increasing as the PDF of the noise distribution is more centralized.
This result may share a similar nature with results in Table \ref{tab:diff variance}---when the adversarial samples are more widely distributed, they tend to fall into an adversarial class (the majority of all classes). It is hard to determine which distribution is better since there is a trade-off between the perturbation size and the number of RPQ.

\begin{table*}[!t]
\small
\caption{Attack performance of our certifiable attack with different noise distributions}\vspace{-0.1in}

    \centering
    \begin{tabular}{c |c |c |c |c |c c c c}
    \hline
        & Distribution          & Density & Parameter & $\sqrt{\|\epsilon\|^2/d}$ & Dist. $\ell_2$ &Mean Dist. $\ell_2$ & \# RPQ & Certified Acc.  \\
    \hline
                 \multirow{5}{*}[-2mm]{\rotatebox{90}{CIFAR10}} 
    &     Gaussian              & $\propto e^{-|z/a|^2}$              &$a=0.25$ &0.25 & 12.95 &2.34   &14.35 &91.21\% \\
   &      Cauthy                & $\propto \frac{a^2}{z^2+a^2}$  &$a=0.01969$ &0.25 & 7.82 &4.87   &32.77 &94.12\% \\
   &Hyperbolic Secant           & $\propto sech(|z/a|)$               &$a=0.1592$ &0.25 & 12.51 &2.43  &14.59 &91.67\% \\
   &General Normal ($b=1.5$)& $\propto e^{-|z/a|^b}$          &$a=0.2909$, $b=1.5$ &0.25 & 12.74 &2.37  &14.15 &91.39\% \\
   &General Normal ($b=3.0$)& $\propto e^{-|z/a|^b}$          &$a=0.4092$, $b=3$ &0.25 & 13.16 &2.38  &14.15 &91.25\%\\ 
    \hline
                 \multirow{5}{*}[-2mm]{\rotatebox{90}{ImageNet}} 
    &     Gaussian              & $\propto e^{-|z/a|^2}$              &$a=0.25$  &0.25 & 87.47 &16.78  &17.02 &99.60\% \\
    &     Cauthy                & $\propto \frac{a^2}{z^2+a^2}$  &$a=0.01969$ &0.25 & 46.18 &23.94   &59.94 &99.60\% \\
   &Hyperbolic Secant           & $\propto sech(|z/a|)$               &$a=0.1592$ &0.25 & 85.57 &21.29  &20.89 &99.80\% \\
   &General Normal ($b=1.5$)& $\propto e^{-|z/a|^b}$          &$a=0.2909$, $b=1.5$ &0.25 & 86.69 &19.05  &17.58 &99.80\% \\
   &General Normal ($b=3.0$)& $\propto e^{-|z/a|^b}$          &$a=0.4092$, $b=3$ &0.25 & 88.51 &15.58  &14.99 &100.00\%\\ 
    \hline
    \end{tabular}
    
    \label{tab:diff pdf}
\end{table*}

\subsection{Defending against Our Certifiable Attack}
In this subsection, we discuss potential defenses and mitigation strategies against our attacks. 

\noindent {\bf Noise Detection based Defenses:} 
Our certifiable attack injects noise into the adversarial examples. Here, we suppose the adversary is aware of the noise injection and designs a detection method by training a binary classifier to distinguish the noise-injected inputs and clean inputs. Specifically, the defender (i.e., model owner) uses ResNet110 (as powerful as the target model) to train a noise detector to distinguish the inputs with and without noise. The experimental results show the noise detection rate can be as high as $99\%$ with the noise variance $\sigma=0.5$, which means this detector can be used as a strong defense against our certifiable attacks with a larger noise. However, this defense does not work when the noise scale is smaller (i.e., $\sigma=0.025$), where the detection rate is less than $1\%$. Especially, the adversary may design a novel method to hide this noise in the image texture, e.g., using the diffusion model for denoising, which may circumvent the detection. 

\vspace{0.05in}

\noindent {\bf White-Box Adaptive Defenses against Our Attack:} We assume the model owner knows the noise distribution used by our attack and performs a ``white-box" defense.  Particularly, it applies a denoiser to eliminate the injected noise, so that the adversarial examples can be restored to clean inputs. The denoiser can be deployed as a pre-processing module and is pre-trained by the model owner. Specifically,  
we use a U-Net structure~\cite{ronneberger2015u} as the denoiser and denote it as $\mathcal{D}$. Then, the loss function for the training is 
\begin{equation}
    \mathbb{E}_{\epsilon \sim \mathcal{N}(0,\sigma_d)}[||\mathcal{D}(x+\epsilon)-x||_2 + ||f(\mathcal{D}(x+\epsilon))-f(x)||_2]
\end{equation}

Taking Gaussian noise as an example (e.g., the model owner knows the Gaussian variance $\sigma=0.25$  used in the certifiable attack), we train the denoiser to eliminate Gaussian noise with $\sigma=0.25$ while evaluating the certifiable attack with Gaussian noise generated by different $\sigma$. Table \ref{tab:WB defense} shows the results. We can observe that this defense can significantly degrade the performance of a certifiable attack. Notably, by choosing the same variance $\sigma$ as the adversary, the adaptive defense can increase the Mean Dist. $\ell_2$ significantly. However, the certified accuracy is still near $90\%$. 

\begin{table}[!t]
    \centering
       \caption{White-box adaptive defense against our attack ($\sigma=0.25$, $p=90\%$) on CIFAR10}
    \begin{tabular}{c|c c c c }
    \hline
        Defense Para. & Dist. $\ell_2$ &Mean Dist. $\ell_2$ & \# RPQ & Cert. Acc.\\
    \hline
        $\sigma_{d}=0.10$  &9.99 &7.73 &34.11 &87.51\%                                                        \\
       $\sigma_{d}=0.25$     &15.40 &10.21 & 29.80 &88.31\%                                                         \\
        $\sigma_{d}=0.50$   &20.46 &8.11 & 26.52 & 86.56\%                                                     \\
    \hline
        
    \end{tabular}
 
    \label{tab:WB defense}
\end{table}

\section{Related Work}
\label{sec:RelatedWork}
\noindent\textbf{Adversarial Attack}. It aims to mislead learnt ML models by perturbing testing data with imperceptible perturbations. It can be divided into white-box attacks \cite{DBLP:journals/corr/GoodfellowSS14,DBLP:journals/corr/Moosavi-Dezfooli16,DBLP:conf/sp/Carlini017,DBLP:conf/iclr/MadryMSTV18,DBLP:conf/icml/WongSK19} and black-box attacks \cite{DBLP:conf/ccs/ChenZSYH17,DBLP:conf/icml/IlyasEAL18,DBLP:conf/iclr/ChengLCZYH19,DBLP:conf/eccv/BhagojiHLS18,DBLP:conf/ccs/DuFYCT18,DBLP:conf/iclr/BrendelRB18,DBLP:conf/ccs/ChenZSYH17,DBLP:journals/corr/abs-1712-07113,DBLP:conf/sp/ChenJW20,DBLP:conf/ccs/PapernotMGJCS17,DBLP:conf/cvpr/ShiWH19,DBLP:conf/cvpr/DongPSZ19,DBLP:conf/cvpr/NaseerKHKP20,DBLP:conf/iclr/BrendelRB18,DBLP:conf/iccv/BrunnerDTK19,DBLP:conf/icml/LiLWZG19,DBLP:conf/icml/GuoGYWW19,DBLP:conf/eccv/AndriushchenkoC20}, per the access that the adversary holds. 
White-box attacks have full access to the model parameters, and  
can leverage the gradient of the loss function w.r.t. the inputs to guide the adversarial example generation. Instead, black-box attacks only know the outputs (in the form of prediction scores or labels) of a target model via sending queries. It is widely believed that black-box attack is more practical in real-world scenarios \cite{DBLP:conf/ccs/PapernotMGJCS17,bhambri2019survey,DBLP:conf/sp/ChenJW20}. Therefore, we focus on the black-box attacks in this paper. 

\vspace{0.05in}

\noindent\textbf{Black-Box Attack}.
Existing black-box attack methods can be classified into three types: gradient estimation based \cite{DBLP:conf/ccs/ChenZSYH17,DBLP:conf/icml/IlyasEAL18,DBLP:conf/iclr/ChengLCZYH19,DBLP:conf/eccv/BhagojiHLS18,DBLP:conf/ccs/DuFYCT18,SunLH20,wang2022bandits,qu2023certified,wang2023turning}, surrogate models based \cite{DBLP:conf/ccs/PapernotMGJCS17,DBLP:conf/cvpr/ShiWH19,DBLP:conf/cvpr/DongPSZ19,DBLP:conf/cvpr/NaseerKHKP20}, or local search based algorithms \cite{DBLP:conf/iclr/BrendelRB18,DBLP:conf/iccv/BrunnerDTK19,DBLP:conf/icml/LiLWZG19,DBLP:conf/icml/GuoGYWW19,DBLP:conf/eccv/AndriushchenkoC20,mu2021a,fan2021reinforcement}.  
Gradient estimation based attack is mainly based on zero-order estimation since the true gradient is unknown~\cite{DBLP:conf/ccs/ChenZSYH17}. Surrogate model-based methods first perform white-box attacks on an offline surrogate model to generate adversarial examples, and then use these generated adversarial examples to test the target model. The attack performance largely depends on the transferability of such generated adversarial examples. Local search-based methods craft adversarial examples by searching the effective perturbation direction, e.g., Boundary Attack \cite{DBLP:conf/iclr/BrendelRB18} traverses the decision boundary to craft the \textit{least imperceptible} perturbations. 

All existing black-box attacks rely on querying the target model until finding a successful adversarial example or reaching the maximum number of queries. However, none of them can ensure the success rate of the adversarial examples that have not been queried. Further, they are shown to be easily detected/removed via adversarial detection and randomized pre/post-processing-based defenses.

\vspace{0.05in}

\noindent\textbf{Empirical Defense}. It defends against adversarial attacks without guarantees. Empirical defenses against white-box attacks can be roughly categorized into four classes. Gradient-masking defenses \cite{DBLP:conf/sp/PapernotM0JS16,DBLP:conf/iclr/XieWZRY18,DBLP:conf/iclr/DhillonALBKKA18} modify the model inference process to obstacle the gradient computation. Input-transformation defenses \cite{DBLP:conf/iclr/GuoRCM18,DBLP:conf/iclr/BuckmanRRG18,DBLP:conf/iclr/SamangoueiKC18,DBLP:conf/cvpr/LiaoLDPH018,DBLP:conf/iclr/SongKNEK18,hong2022eye} use pre-processing methods to transform the inputs so that the malicious effects caused by the perturbations can be reduced. 
Detection-based defenses \cite{DBLP:conf/icml/RothKH19,DBLP:conf/icml/Tramer22,DBLP:conf/iccv/LuIF17,DBLP:conf/ccs/MengC17,jain2022adversarial} identify features that expect to separate adversarial examples and clean examples, and train a binary classifier to detect adversarial examples. Another branch of works \cite{PIHA,OSD,IIoT-SDA,li2022blacklight} detects the adversarial examples based on the similarity of the queries, demonstrating high detection accuracy in practice. Among these, Blacklight \cite{li2022blacklight} has shown supreme detection performance without assumptions on the user accounts. These three types of defenses show certain effectiveness when they target specific known attacks, but can be broken by adaptive attacks~\cite{athalye2018obfuscated}.
Lastly, adversarial training-based defenses  \cite{DBLP:conf/iclr/MadryMSTV18,DBLP:conf/nips/ShafahiNG0DSDTG19,DBLP:conf/iclr/TramerKPGBM18,DBLP:conf/nips/TramerB19} have achieved the SOTA performance against adaptive attacks. The main idea is to augment training data with "adversarial examples", but they are reassigned the correct label. As to defend against \emph{black-box} attacks, RAND-Post \cite{chandrasekaran2020RANDpost}, RAND-Pre \cite{qin2021RAND}, Adversarial Training based TRADES \cite{DBLP:conf/iclr/MadryMSTV18}, and Blacklight \cite{li2022blacklight} 
are the SOTA in each category. Thus, we evaluated our certifiable attack under these defenses.

\vspace{0.05in}

\noindent\textbf{Certified Defense}. Certified defense \cite{DBLP:conf/cav/KatzBDJK17, DBLP:journals/constraints/FischettiJ18, DBLP:conf/icml/AnilLG19,DBLP:conf/icml/WongK18,hong2022unicr,zhang2024text} was proposed to guarantee constant classification prediction on a set of adversarial examples. Recently, randomized smoothing (RS) \cite{DBLP:conf/icml/CohenRK19} has achieved great success in the certified defense since it is the first method to certify arbitrary classifiers of any scale. Specifically, RS can guarantee the prediction if the perturbation is bounded by a distance in $\ell_p$-norm, i.e., certified radius \cite{DBLP:conf/icml/CohenRK19,teng2019ell_1,DBLP:conf/icml/YangDHSR020,hong2022unicr}. RS adds noise from a distribution (e.g., Gaussian) to the inputs and uses hypothesis testing to quantify the prediction probability. Then the bound on the perturbations (usually a $\ell_p$ norm constraint) for ensuring the consistent prediction is derived. This method is widely used in certified defense to ensure consistent and correct prediction under attack. However, in this paper, we propose to use this method to ensure consistent and wrong prediction on the \randAE, resulting in a reliable and strong certifiable attack.

\section{Conclusion}
\label{sec:Conclusion}
Certifiable attack lays a novel direction for adversarial attacks, enabling the transition from deterministic to probabilistic adversarial attacks. Compared with empirical black-box attacks, certifiable attacks share significant benefits including breaking SOTA strong detection and randomized defense, revealing consistent and severe robustness vulnerability of models, and guaranteeing the minimum ASP for numerous unique AEs without verifying via the query.

\section*{Acknowledgments}
 We sincerely thank the anonymous reviewers for their constructive comments and suggestions. This work is supported in part by the National Science Foundation (NSF) under Grants No. CNS-2308730, CNS-2302689, CNS-2319277, CMMI-2326341, ECCS-2216926, CNS-2241713, CNS-2331302 and CNS-2339686. It is also partially supported by the Cisco Research Award and the Synchrony Fellowship. 

\bibliographystyle{plain}
\balance
\bibliography{CBA}

\newpage
\appendices
\label{sec:Appendix}
\section{Proofs}
\label{apd:proofs}

\subsection{Proof of Theorem \ref{thm1}}
\label{apd:thm1 proof}
The proof of Theorem \ref{thm1} is based on the Neyman-Pearson Lemma, so we first review the Neyman-Pearson Lemma.

\begin{Lem}{(\textbf{Neyman-Pearson Lemma})}
Let $X$ and $Y$ be random variables in $\mathbb{R}^d$ with densities $\mu_X$ and $\mu_Y$. Let $f:\mathbb{R}^d \rightarrow \{0,1\}$ be a random or deterministic function. Then:

(1) If $S=\{z\in \mathbb{R}^d: \frac{\mu_Y(z)}{\mu_X(z)}\leq{t}\}$ for some $t>0$ and $\mathbb{P}(f(X)=1)\ge \mathbb{P}(X\in S)$, then $\mathbb{P}(f(Y)=1) \ge \mathbb{P}(Y\in S)$;

(2) If $S=\{z\in \mathbb{R}^d: \frac{\mu_Y(z)}{\mu_X(z)}\ge {t}\}$ for some $t>0$ and $\mathbb{P}(f(X)=1)\leq \mathbb{P}(X\in S)$, then $\mathbb{P}(f(Y)=1) \leq \mathbb{P}(Y\in S)$.
\label{NP lemma}
\end{Lem}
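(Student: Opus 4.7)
The plan is to prove both parts of the lemma by reducing them to a single integral identity and then exploiting the likelihood-ratio structure of $S$ as a change-of-measure device between $\mu_Y$ and $\mu_X$. As a first step, I would absorb the randomness of $f$ by defining $h(z) := \mathbb{P}(f(z)=1)$, so that $\mathbb{P}(f(X)=1) = \int h(z)\mu_X(z)\,dz$ and $\mathbb{P}(f(Y)=1) = \int h(z)\mu_Y(z)\,dz$; the deterministic case is the special case $h\in\{0,1\}$. Then form the signed quantity
\begin{equation*}
\Delta_Y \;:=\; \mathbb{P}(f(Y)=1) - \mathbb{P}(Y\in S) \;=\; \int \bigl(h(z)-\mathbf{1}_S(z)\bigr)\mu_Y(z)\,dz,
\end{equation*}
and split the domain into $S$ and $S^c$. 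Define $\Delta_X$ analogously with $\mu_X$.

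For part (1), the integrand $h-\mathbf{1}_S$ is nonnegative on $S^c$ (since $h\geq 0$) and nonpositive on $S$ (since $h-1\leq 0$). The set $S$ was constructed exactly so that $\mu_Y(z)\leq t\mu_X(z)$ on $S$ and $\mu_Y(z) > t\mu_X(z)$ on $S^c$. I would use these inequalities in the direction that weakens the integral: on $S^c$, bound $h(z)\mu_Y(z) \geq t\cdot h(z)\mu_X(z)$; on $S$, bound $(1-h(z))\mu_Y(z) \leq t\cdot(1-h(z))\mu_X(z)$. Adding these two pieces gives $\Delta_Y \geq t\,\Delta_X$, and the hypothesis $\mathbb{P}(f(X)=1)\geq\mathbb{P}(X\in S)$ is exactly $\Delta_X\geq 0$, yielding $\Delta_Y\geq 0$ as required. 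Part (2) is symmetric: with $S=\{\mu_Y/\mu_X\geq t\}$ the likelihood-ratio inequalities flip, the hypothesis becomes $\Delta_X\leq 0$, and the same split-and-swap argument produces $\Delta_Y \leq t\,\Delta_X \leq 0$.

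The only genuine obstacle is the measure-theoretic nuisance of the null set $\{z:\mu_X(z)=0\}$, where the ratio $\mu_Y/\mu_X$ is ill-defined and the above pointwise bounds need justification. I would handle this by adopting the dominating measure $\mu_X+\mu_Y$ and interpreting the ratio in the extended sense on $[0,\infty]$, so that $\{\mu_X=0,\mu_Y>0\}$ is unambiguously assigned to $S^c$ in part (1) and to $S$ in part (2); in either case the contribution from this set enters with the correct sign, so the bound $\Delta_Y \geq t\Delta_X$ (resp. $\leq$) is preserved. Beyond this bookkeeping, every step is a one-line manipulation, and I do not expect any deeper technical difficulty; the content of the lemma is really just that the threshold set $S$ is designed so that swapping $\mu_Y$ for $t\mu_X$ on each side weakens the integral in the desired direction.
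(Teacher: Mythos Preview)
Your proof is correct and is essentially the standard argument for this form of the Neyman--Pearson lemma (as it appears, for instance, in Cohen et al.\ 2019, which the paper cites). However, there is nothing to compare against: the paper does not prove this lemma at all. It is stated in Appendix~\ref{apd:thm1 proof} as a classical result (``we first review the Neyman--Pearson Lemma'') and then invoked without proof to establish Theorem~\ref{thm1}. So your write-up supplies a proof where the paper simply quotes the statement.

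One minor remark on your handling of the null set $\{\mu_X=0\}$: your resolution is fine, but note that in the paper's actual application both densities are translates of the same continuous noise density $\varphi$, so $\mu_X$ and $\mu_Y$ are mutually absolutely continuous and the issue never arises. The bookkeeping you describe is only needed for the lemma in full generality.
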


Let $x\in\mathbb{R}^d$ be any clean input with label $y$. 
Let noise $\epsilon$ be drawn from any continuous distribution $\varphi(0,\boldsymbol{\kappa})$. Let $x'\in \mathbb{R}^d$ be any input. Denote $X=x'+\epsilon$, and $X_\delta=x'+\delta+\epsilon$. 
Let $f:\mathbb{R}^d\rightarrow\mathbb{R}^1$ be any deterministic or random function. For each input $x$, we can consider two classes: $y$ or $\neq y$, so the problem can be considered as a binary classification problem.
Let the lower bound of randomized parallel query on $x'$ denoted as $Q(x')=\underline{p_{adv}}$. Define the half set:

\begin{equation}
    A:=\{z:\frac{\varphi(z-\delta,\boldsymbol{\kappa})}{\varphi(z,\boldsymbol{\kappa})}\leq \tau\}
\end{equation}
where the auxiliary parameter $\tau$ is picked to suffice:
\begin{equation}
    \mathbb{P}(X\in A)=\mathbb{P}[\frac{\varphi(x'+\epsilon-\delta,\boldsymbol{\kappa})}{\varphi(x'+\epsilon,\boldsymbol{\kappa})}\leq \tau]=\underline{p_{adv}}
\label{eq:10}
\end{equation}

Suppose $\underline{p_{adv}}$ and the ASP Threshold $p$ satisfy $\underline{p_{adv}}\geq p$, then 
\begin{equation}
    \mathbb{P}[f(X)\neq y] \geq p_{adv} = \mathbb{P}(X\in A)
\end{equation}

Using Neyman-Pearson Lemma (considering $X_\delta=X+\delta$ as $Y$ in Neyman-Pearson Lemma, and $\neq y$ as class $1$), we have:
\begin{equation}
    \mathbb{P}[f(X_\delta)\neq y]\geq \mathbb{P}[X_\delta\in A]
\end{equation}
which is equal to
\begin{equation}
    \mathbb{P}[f(X_\delta)\neq y]\geq \mathbb{P}[\frac{\varphi(x'+\epsilon,\boldsymbol{\kappa})}{\varphi(x'+\epsilon+\delta,\boldsymbol{\kappa})}\leq\tau]
\end{equation}

If $\mathbb{P}(\frac{\varphi(x'+\epsilon,\boldsymbol{\kappa})}{\varphi(x'+\epsilon+\delta,\boldsymbol{\kappa})}\leq\tau)\geq p$, we can guarantee that 
\begin{equation}
    \mathbb{P}[f(X_\delta)\neq y]\geq p    
\end{equation}
which means the probability of classifying $X_\delta$ as adversarial examples is greater than $p$. Therefore, the distribution of $X_\delta$ can be guaranteed to have the attack success probability larger than $p$.

Considering Eq. (\ref{eq:10}), we have $\tau=\Phi^{-1}_-(\underline{p_{adv}})$, 

where $\Phi^{-1}_-$ is the inverse CDF 
of random variable $\frac{\varphi(x'+\epsilon-\delta,\boldsymbol{\kappa})}{\varphi(x'+\epsilon,\boldsymbol{\kappa})}$. Therefore, substitute $\tau$ in $\mathbb{P}[\frac{\varphi(x'+\epsilon,\boldsymbol{\kappa})}{\varphi(x'+\epsilon+\delta,\boldsymbol{\kappa})}\leq\tau]\geq p$, we have
\begin{equation}
    \Phi_+[\Phi^{-1}_-(\underline{p_{adv}})]\geq p
\end{equation}
where $\Phi_+$ is the CDF of random variable $\frac{\varphi(x'+\epsilon,\boldsymbol{\kappa})}{\varphi(x'+\epsilon+\delta,\boldsymbol{\kappa})}$.
The ratios can be further simplified as
\begin{equation}
    \frac{\varphi(x'+\epsilon-\delta,\boldsymbol{\kappa})}{\varphi(x'+\epsilon,\boldsymbol{\kappa})}=\frac{\varphi(\epsilon-\delta,\boldsymbol{\kappa})}{\varphi(\epsilon,\boldsymbol{\kappa})}
\end{equation}
\begin{equation}
    \frac{\varphi(x'+\epsilon,\boldsymbol{\kappa})}{\varphi(x'+\epsilon+\delta,\boldsymbol{\kappa})}=\frac{\varphi(\epsilon,\boldsymbol{\kappa})}{\varphi(\epsilon+\delta,\boldsymbol{\kappa})}
\end{equation}

Now we complete the proof of Theorem \ref{thm1}.

\subsection{Certifiable Attack: Gaussian Noise}
\label{apd:thm2}
\begin{corollary}{(\textbf{Certifiable Adversarial Shifting: Gaussian Noise})} Under the same condition with Theorem~\ref{thm1}, and let $\epsilon$ be a noise drawn from Gaussian distribution $\mathcal{N}(0,\sigma)$. Then, if the randomized query on the adversarial input satisfies Eq. (\ref{thm1 condition}): 
\begin{equation}
    \mathbb{P}[f(x'+\epsilon)\neq y]\geq \underline{p_{adv}}=Q(x')\geq p
\end{equation}

Then $\mathbb{P}[f(x'+\delta+\epsilon)\neq y]\geq p$ is guaranteed for any 
shifting vector $\delta$ when
\begin{equation}
    ||\delta||_2 \le \sigma [\Phi^{-1}(\underline{p_{adv}})-\Phi^{-1}(p)]
\end{equation}
where $\Phi^{-1}$ denotes the inverse of the standard Gaussian CDF.
\label{thm2}
\end{corollary}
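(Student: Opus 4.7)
The plan is to specialize Theorem~\ref{thm1} to isotropic Gaussian noise $\epsilon \sim \mathcal{N}(0, \sigma^2 I)$ by computing the two CDFs $\Phi_-$ and $\Phi_+$ in closed form. The key observation is that for Gaussian $\varphi$, both log-likelihood ratios $\log[\varphi(\epsilon-\delta)/\varphi(\epsilon)]$ and $\log[\varphi(\epsilon)/\varphi(\epsilon+\delta)]$ are affine in $\epsilon$, and so are univariate Gaussian random variables whose parameters depend only on $\|\delta\|_2$ and $\sigma$. This reduces the abstract CDF composition $\Phi_+[\Phi_-^{-1}(\cdot)]$ in Theorem~\ref{thm1} to a manipulation of the standard normal CDF $\Phi$.

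Concretely, I would first expand the log-ratios to obtain
\begin{equation*}
\log\frac{\varphi(\epsilon - \delta)}{\varphi(\epsilon)} = \frac{2\epsilon^\top \delta - \|\delta\|_2^2}{2\sigma^2}, \qquad \log\frac{\varphi(\epsilon)}{\varphi(\epsilon + \delta)} = \frac{2\epsilon^\top \delta + \|\delta\|_2^2}{2\sigma^2}.
\end{equation*}
Since $\epsilon^\top \delta \sim \mathcal{N}(0, \sigma^2 \|\delta\|_2^2)$, these scalars are Gaussian with common variance $\|\delta\|_2^2/\sigma^2$ and means $\mp \|\delta\|_2^2/(2\sigma^2)$. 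Solving $\Phi_-(\tau) = \underline{p_{adv}}$ by standardization yields $\log \tau = (\|\delta\|_2/\sigma)\,\Phi^{-1}(\underline{p_{adv}}) - \|\delta\|_2^2/(2\sigma^2)$. Plugging this $\tau$ into $\Phi_+$ and standardizing again, the two drift terms $\|\delta\|_2^2/(2\sigma^2)$ cancel \emph{exactly}, producing the clean identity
\begin{equation*}
\Phi_+\bigl[\Phi_-^{-1}(\underline{p_{adv}})\bigr] \;=\; \Phi\!\left(\Phi^{-1}(\underline{p_{adv}}) - \frac{\|\delta\|_2}{\sigma}\right).
\end{equation*}

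The last step is to invoke the certification condition of Theorem~\ref{thm1}, $\Phi_+[\Phi_-^{-1}(\underline{p_{adv}})] \ge p$, and use strict monotonicity of $\Phi$ to apply $\Phi^{-1}$ to both sides. This gives $\Phi^{-1}(\underline{p_{adv}}) - \|\delta\|_2/\sigma \ge \Phi^{-1}(p)$, which rearranges to the stated bound $\|\delta\|_2 \le \sigma\bigl[\Phi^{-1}(\underline{p_{adv}}) - \Phi^{-1}(p)\bigr]$, completing the corollary.

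The main obstacle is not analytic difficulty --- the algebra is essentially the Cohen--Rosenfeld--Kolter certified-radius computation transposed from the ``keep the correct class'' setting to the ``force a wrong class'' setting that our certifiable attack needs. Rather, the delicate point is bookkeeping on signs and directions: the two log-ratios have means of \emph{opposite} sign but equal variance, and it is this asymmetry that drives the exact cancellation above. If one instead swapped the directions of shift inside $\Phi_-$ and $\Phi_+$, the drift terms would add rather than cancel, and the resulting bound would either degrade or acquire the wrong sign. So the bulk of the care in the write-up lies in aligning the orientation of the likelihood ratios with the definitions of $\Phi_-$ and $\Phi_+$ given in Theorem~\ref{thm1}, rather than in any nontrivial probabilistic estimate.
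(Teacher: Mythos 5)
Your proposal is correct and reaches the stated bound, but it is organized differently from the paper's proof: you specialize the already-established condition $\Phi_+[\Phi_-^{-1}(\underline{p_{adv}})]\geq p$ of Theorem~\ref{thm1} by computing both likelihood-ratio CDFs in closed form (each log-ratio is univariate Gaussian with variance $\|\delta\|_2^2/\sigma^2$ and mean $\mp\|\delta\|_2^2/(2\sigma^2)$) and then composing them, whereas the paper re-runs the Neyman--Pearson argument from scratch in the Gaussian case, defining the half-space $A=\{z:\mu(z-\delta)/\mu(z)\leq\tau\}$ with an explicit threshold $\tau$ and directly evaluating $\mathbb{P}(X\in A)$ and $\mathbb{P}(X_\delta\in A)$. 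The two derivations rest on the same fact---the Gaussian log-likelihood ratio is affine in $\epsilon$---so they are mathematically equivalent; your version is more modular (it never re-invokes Neyman--Pearson) and yields the useful intermediate identity $\Phi_+[\Phi_-^{-1}(\underline{p_{adv}})]=\Phi\bigl(\Phi^{-1}(\underline{p_{adv}})-\|\delta\|_2/\sigma\bigr)$, which makes the monotone decay in $\|\delta\|_2$ explicit. One correction to your narration, precisely on the sign bookkeeping you flag as delicate: the two drift terms do \emph{not} cancel. The expression for $\log\tau$ carries a $-\|\delta\|_2^2/(2\sigma^2)$, and standardizing $\Phi_+$ subtracts a further $+\|\delta\|_2^2/(2\sigma^2)$, so the terms \emph{accumulate} to $-\|\delta\|_2^2/\sigma^2$; dividing by the standard deviation $\|\delta\|_2/\sigma$ is exactly what produces the $-\|\delta\|_2/\sigma$ shift. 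If the drifts truly cancelled, the composition would collapse to $\underline{p_{adv}}$ and the bound would be vacuous. Your displayed identity and final rearrangement are nonetheless correct, so this is a wording fix in the write-up, not a gap in the argument.
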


\begin{proof}

The Gaussian distribution is $\mu(x) \propto e^{-\frac{x^2}{2\sigma^2}}$, thus
\begin{equation}
    \frac{\mu(x-\delta)}{\mu(x)}= e^{(2x\delta-\delta^2)/(2\sigma^2)}
\end{equation}

Let $\tau:=exp((2\Phi_\sigma^{-1}(\underline{p_{adv}})\delta-\delta^2)/(2\sigma^2))$, where $\Phi^{-1}_\sigma$ denotes the inverse Gaussian CDF with variance $\sigma$. Let random variables $X:=x'+\epsilon$ and $X_\delta:=x'+\epsilon+\delta$. Then we have:
\begin{align}
    \mathbb{P}(X\in A) &=\mathbb{P}[\frac{\mu(X-\delta)}{\mu(X)}\leq \tau] \\
    &=\mathbb{P}[exp((2X\delta-\delta^2)/(2\sigma^2))] \leq \\
    &exp[(2\Phi_\sigma^{-1}(\underline{p_{adv}})\delta-\delta^2)/(2\sigma^2)]  \\
    &=\mathbb{P}[X \le \Phi_\sigma^{-1}(\underline{p_{adv}})]\\
    &=\underline{p_{adv}} \\
\end{align}

Using Neyman-Pearson Lemma, we have:
\begin{equation}
    \mathbb{P}[f(X_\delta)\neq y]\geq \mathbb{P}[(X_\delta)\in A]
\end{equation}

Since
\begin{align}
    \mathbb{P}[X_\delta\in A] &=\mathbb{P}[\frac{\mu(X)}{\mu(X+\delta)}\leq\tau] \\
        &=\mathbb{P}[exp((2X\delta+\delta^2)/(2\sigma^2))] \leq \\
    &exp[(2\Phi_\sigma^{-1}(\underline{p_{adv}})\delta-\delta^2)/(2\sigma^2)]  \\
    &=\mathbb{P}[2X\delta +\delta^2 \le (2\Phi_\sigma^{-1}(\underline{p_{adv}})\delta-\delta^2)]\\
    &=\mathbb{P}[X \le \Phi_\sigma^{-1}(\underline{p_{adv}}) - ||\delta||]\\
    &=\mathbb{P}[\frac{X}{\sigma} \le \Phi^{-1}(\underline{p_{adv}}) - \frac{||\delta||}{\sigma}]
\end{align} where $\Phi^{-1}$ denotes the inverse standard Gaussian CDF.

To guarantee that  $\mathbb{P}[f(X_\delta)\neq y]\geq p$, we need:
\begin{align}
    \mathbb{P}[X_\delta\in A] &=\mathbb{P}[\frac{X}{\sigma} \le \Phi^{-1}(\underline{p_{adv}}) - \frac{||\delta||}{\sigma}]\\
        &\ge p\\
\end{align}

which is equivalent to
\begin{equation}
    ||\delta||\le \sigma [\Phi^{-1}(\underline{p_{adv}})-\Phi^{-1}(p)]
\end{equation}

This completes the proof of Corollary \ref{thm2}.

\end{proof}

\subsection{Proof of Theorem \ref{prop:bound}}
\label{apd:theoretical bound}

If the Condition Eq. (\ref{thm1 condition}) is satisfied, we have 
\begin{equation}
    \underline{p_{adv}}\geq p
\end{equation}

For any direction $w$, our goal is to find the $\delta$ in this direction with maximum $||\delta||_2$. When $||\delta||_2=0$, we have
\begin{equation}
    \Phi_+ = \Phi_-
\end{equation}

Thus, we have

\begin{equation}
        \Phi_+[\Phi^{-1}_-(\underline{p_{adv}})]=\underline{p_{adv}}\geq p
\end{equation}

Then, we prove that when $||\delta||_2$ increase, we will get $\Phi_+[\Phi^{-1}_-(\underline{p_{adv}})]$ decrease.

Since $\Phi_-$ is the CDF of the random variable $\frac{\varphi(\epsilon-\delta,\boldsymbol{\kappa})}{\varphi(\epsilon,\boldsymbol{\kappa})}$, and $\varphi(x)$ decreases when $|x|$ increases, when $||\delta||_2 \rightarrow \infty$,  we have $\frac{\varphi(\epsilon-\delta,\boldsymbol{\kappa})}{\varphi(\epsilon,\boldsymbol{\kappa})} \rightarrow 0$, and $\Phi_-^{-1}(\underline{p_{adv}}) \rightarrow 0$. 

Since $\Phi_+$ is the CDF of the random variable $\frac{\varphi(\epsilon,\boldsymbol{\kappa})}{\varphi(\epsilon+\delta,\boldsymbol{\kappa})}$, when $||\delta||_2 \rightarrow \infty$, $\frac{\varphi(\epsilon,\boldsymbol{\kappa})}{\varphi(\epsilon+\delta,\boldsymbol{\kappa})} \rightarrow \infty$, so $\Phi_+ \rightarrow 0$. 

Therefore, when $||\delta||_2 \rightarrow \infty$, we have
\begin{equation}
\Phi_+[\Phi^{-1}_-(\underline{p_{adv}})] \rightarrow 0 < p    
\end{equation}

When $||\delta||_2 \rightarrow 0$, we have 
\begin{equation}
\Phi_+[\Phi^{-1}_-(\underline{p_{adv}})]=\underline{p_{adv}}\geq p
\end{equation}

Since $\Phi_-$ and $\Phi_+$ is continuous function, between $0$ and $\infty$, there must be some $\delta$ such that $\Phi_+[\Phi^{-1}_-(p_{adv})]= p$.

Now we prove that the binary search algorithm can always find the $\delta$ solution, then we show how to bound the adversarial attack certification. We use Monte Carlo method to estimate the $\underline{p_{adv}}$ as well as the CDFs $\Phi_-$ and $\Phi_+$. To bound the empirical CDFs, we leverage Dvoretzky-Kiefer-Wolfowitz inequality \cite{dvoretzky1956asymptotic}.

\begin{Lem}{(\textbf{Dvoretzky–Kiefer–Wolfowitz inequality} (restate))}
Let $X_1, X_2, ..., X_n$ be real-valued independent and identically distributed random variables with cumulative distribution function $F(\cdot)$, where $n \in \mathbb{N}$.Let $F_n$ denote the associated empirical distribution function defined by
\begin{equation}
    F_n(x)=\frac{1}{n} \sum_{i=1}^{n} \mathbf{1}_{\{X_i<=x\}}, x \in \mathbb{R}
\end{equation}

The Dvoretzky–Kiefer–Wolfowitz inequality bounds the probability that the random function $F_n$ differs from $F$ by more than a given constant $\Delta \in \mathbb{R}^+$ :

\begin{equation}
    \mathbb{P}[\sup_{x\in\mathbb{R}}|F_n(x)-F(x)|>\Delta] \leq 2e^{-2n\Delta^2}
\end{equation}
\label{DKW lemma restate}
\end{Lem}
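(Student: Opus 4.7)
The plan is to prove the classical Dvoretzky--Kiefer--Wolfowitz inequality with the sharp constant $2$ due to Massart. The approach has three pieces: a reduction to the uniform empirical process, a union bound that splits the two-sided deviation into two one-sided pieces, and a sharp one-sided supremum bound for each piece.

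First I would reduce to the uniform case via the probability integral transform. Set $U_i := F(X_i)$. When $F$ is continuous these are i.i.d.\ Uniform$(0,1)$; when $F$ has atoms, one can use a randomized version of $F$ or the right-continuous generalized inverse and verify that atoms only shrink the supremum. Under the change of variable $t = F(x)$, monotonicity of $F$ yields
\begin{equation}
\sup_{x \in \mathbb{R}} |F_n(x) - F(x)| \;=\; \sup_{t \in [0,1]} |G_n(t) - t|,
\end{equation}
where $G_n(t) := \tfrac{1}{n} \sum_{i=1}^n \mathbf{1}\{U_i \le t\}$. So it suffices to prove the bound for the uniform empirical process.

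Next I would split the two-sided event through a union bound,
\begin{equation}
\mathbb{P}\!\left[\sup_t |G_n(t) - t| > \Delta\right] \le \mathbb{P}\!\left[\sup_t (G_n(t) - t) > \Delta\right] + \mathbb{P}\!\left[\sup_t (t - G_n(t)) > \Delta\right],
\end{equation}
and aim to bound each one-sided supremum by $e^{-2n\Delta^2}$. At a single fixed $t$, the pointwise statement is immediate from Hoeffding's inequality applied to the i.i.d.\ bounded indicators $\mathbf{1}\{U_i \le t\}$, which have mean $t$, giving $\mathbb{P}[G_n(t) - t > \Delta] \le e^{-2n\Delta^2}$. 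So the remaining task is to promote this pointwise control to a control on the supremum.

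The main obstacle is exactly this upgrade to the supremum \emph{without losing the exponent}. A naive union bound over an $\varepsilon$-net of $[0,1]$ would add a polynomial prefactor in $n$; so would a crude chaining argument. The classical route is Daniels' theorem (1945): the event $\{\sup_t (G_n(t) - t) > \Delta\}$ can be recast as a boundary-crossing probability for the random walk formed by the uniform order statistics $U_{(1)}<\cdots<U_{(n)}$, and a cyclic-permutation symmetry (the cycle lemma / ballot problem) yields an exact geometric formula whose tail is majorized by $e^{-2n\Delta^2}$. Massart's 1990 refinement is what guarantees this exponent is \emph{preserved} when the two one-sided pieces are summed, producing the sharp leading constant $2$ rather than something larger. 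Combining the reduction, the one-sided Daniels/Massart bound, and the union-bound step then gives the claimed $\mathbb{P}[\sup_x |F_n(x)-F(x)| > \Delta] \le 2 e^{-2n\Delta^2}$, completing the proof.
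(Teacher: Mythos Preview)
The paper does not actually prove this lemma: it is explicitly labelled a \emph{restate} of a classical result and is invoked with a citation to Dvoretzky--Kiefer--Wolfowitz (1956), the sharp constant $2$ being due to Massart (1990). The lemma is used as a black box in the confidence-bound argument for Theorem~\ref{prop:bound}, so there is no in-paper proof to compare your proposal against.

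That said, your outline is the standard route to the sharp DKW inequality and is correct at the level of a roadmap: probability-integral-transform reduction to the uniform empirical process, union-bound split into two one-sided suprema, and then the Daniels combinatorial/ballot-problem analysis sharpened by Massart for each one-sided piece. The only caveat is that your proposal defers the genuinely hard step---showing the one-sided supremum tail is at most $e^{-2n\Delta^2}$ with no extra prefactor---to a citation of Massart rather than carrying it out; since Massart's theorem essentially \emph{is} the sharp one-sided DKW bound, a fully self-contained proof would need to unpack that analysis (the exact distribution of the one-sided Kolmogorov--Smirnov statistic via the cycle lemma, followed by a careful tail estimate). For the purposes of this paper, which only uses the lemma as an off-the-shelf concentration tool, your level of detail already exceeds what the paper provides.
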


Let the Monte Carlo sampling number $N_m$. Each shifting is an independent certification, and there are a lower-bound estimation and two CDF estimations in each certification. Suppose the confidence of lower-bound estimation is $(1-\alpha)$, then the certification confidence should be at least $(1-\alpha)(1-2e^{-2N_m\Delta^2})^2$.

\section{Denoising with Diffusion Models}
\label{sec:DM}

The certifiable adversarial examples sampled from \randAE~ are noise-injected inputs that  still might be perceptible when the noise is large. We further leverage the recent innovation for image synthesis, i.e., diffusion model \cite{ho2020denoising}, to denoise the adversarial examples for better imperceptibility. The key idea is to consider the noise-perturbed adversarial examples as the middle sample in the forward process of the diffusion model \cite{carlini2022certified,zhang2023diffsmooth}. This is shown to improve the imperceptibility and the diversity of the adversarial examples.

Specifically, the closed-form sampling in the forward process at timestep $t$ in \cite{ho2020denoising} can be written as:
\begin{equation}
\label{eq:diffusion}
    x_t=\sqrt{\overline{\alpha}_t}x_0+\sqrt{1-\overline{\alpha}_t}\epsilon_0
\end{equation}
where $x_0=x$ is a clean image, $\overline{\alpha}_t$  is the parameter indicating the transformation of the image in the forward process, and $\epsilon_0$ is a noise drawn from the standard normal distribution $\mathcal{N}(0,1)$. The certifiable adversarial examples when adding noise $\epsilon_0$ can be expressed as:
\begin{equation}
    x_{adv}=x'+\delta+\epsilon_0
\end{equation}

We can then consider $x_{adv}$ as the sample $x_t$ in Eq. (\ref{eq:diffusion}) by transforming $x_{adv}$ to $\sqrt{\overline{\alpha}_t}x_{adv}$ and satisfying these conditions: (1) $x'+\delta=x_0$ and (2) $\sqrt{\overline{\alpha}_t}\sigma=\sqrt{1-\overline{\alpha}_t}$. Then, we have $\overline{\alpha}_t=\frac{1}{\sigma^2+1}$ to bridge diffusion model and the certifiable attack.

By finding the corresponding time step $t$ and  $\overline{\alpha_t}$, we can leverag the reverse process $\mathcal{R}(\cdot)$ of the diffusion model to denoise  $x_{adv}$:
\begin{equation}
    x_{adv}'=\mathcal{R}(\mathcal{R}(...\mathcal{R}(\sqrt{\overline{\alpha}_t}x_{adv})))
\end{equation}

Note that the reverse denoising process can be plugged into our attack framework by simply replacing $x_{adv}$ with $x_{adv}'$ in all processes. It will not affect the guarantee since the denoising process can be part of the classification model, i.e., constructing a new target model $f'(x_{adv})=f(\mathcal{R}(\mathcal{R}(...\mathcal{R}(\sqrt{\overline{\alpha_t}} x_{adv}))))$ given any $f$. 

\section{Additional Experiments}
\label{apd: additional exp}

\subsection{Additional Experimental Settings}
\label{apd:exp settings}
Table \ref{tab:parameters} summarizes all parameter settings. 
\begin{table*}[]
\centering
\caption{Summary of all parameter settings}
\label{tab:parameters}
\resizebox{\textwidth}{!}{%
\begin{tabular}{|l|cc|cc|cccccc|ccc|ccccc|}
\hline
\multicolumn{1}{|c|}{\multirow{2}{*}{Experiments}} &
  \multicolumn{2}{c|}{General} &
  \multicolumn{2}{c|}{Random Parallel Query} &
  \multicolumn{6}{c|}{Smoothed Self-supervised Localization} &
  \multicolumn{3}{c|}{Bin-search Localization} &
  \multicolumn{5}{c|}{Refinement} \\ \cline{2-19} 
\multicolumn{1}{|c|}{} &
  $p$ &
  $\sigma$ &
  $\alpha$ &
  $N_m$ &
  $\pi_{init}$ &
  $\gamma$ &
  $N_{max}$ &
  $n_{max}$ &
  $\eta$ &
  $N_s$ &
  $N_r$ &
  $N_b$ &
  $\Omega$ &
  $M$ &
  $\eta'$ &
  $e$ &
  $e_s$ &
  $N_h$ \\ \hline
Comparison with empirical attack under Blacklight detection &
  10\% &
  0.025 &
  0.001 &
  50 &
  3/255 &
  3/255 &
  85 &
  10 &
  3/255 &
  50 &
  85 &
  15 &
  0.1 &
  20 &
  0.05 &
  0.01 &
  0.0025 &
  72 \\
Comparison with empirical attack against RAND pre-processing defense &
  10\% &
  0.025 &
  0.001 &
  50 &
  3/255 &
  3/255 &
  85 &
  10 &
  3/255 &
  50 &
  85 &
  15 &
  0.1 &
  20 &
  0.05 &
  0.01 &
  0.0025 &
  72 \\
Comparison with empirical attack against RAND post-processing Defense &
  10\% &
  0.025 &
  0.001 &
  50 &
  3/255 &
  3/255 &
  85 &
  10 &
  3/255 &
  50 &
  85 &
  15 &
  0.1 &
  20 &
  0.05 &
  0.01 &
  0.0025 &
  72 \\
Comparison with empirical attack against TRADES adversarial training &
  10\% &
  0.025 &
  0.001 &
  50 &
  3/255 &
  3/255 &
  85 &
  10 &
  3/255 &
  50 &
  85 &
  15 &
  0.1 &
  20 &
  0.05 &
  0.01 &
  0.0025 &
  72 \\
Certifiable Attack against Feature Squeezing &
  90\% &
  0.25 &
  0.001 &
  1000 &
  3/255 &
  3/255 &
  85 &
  10 &
  3/255 &
  1000 &
  85 &
  15 &
  0.1 &
  20 &
  0.05 &
  0.01 &
  0.025 &
  72 \\
Certifiable Attack against Randomized Smoothing and Adaptive Denoiser &
  90\% &
  0.25 &
  0.001 &
  1000 &
  3/255 &
  3/255 &
  85 &
  10 &
  3/255 &
  1000 &
  85 &
  15 &
  0.1 &
  20 &
  0.05 &
  0.01 &
  0.025 &
  72 \\
Ablation study: Certifiable attack vs. different noise variance &
  90\% &
  0.10 - 0.50 &
  0.001 &
  500/1000 &
  3/255 &
  3/255 &
  85 &
  10 &
  3/255 &
  500/1000 &
  85 &
  15 &
  0.1 &
  20 &
  0.05 &
  0.01 &
  0.1 $\sigma$ &
  72 \\
Ablation study: Certifiable attack vs. different ASP Threshold p &
  50 - 95\% &
  0.25 &
  0.001 &
  500/1000 &
  3/255 &
  3/255 &
  85 &
  10 &
  3/255 &
  500/1000 &
  85 &
  15 &
  0.1 &
  20 &
  0.05 &
  0.01 &
  0.025 &
  72 \\
Ablation study: Certifiable attack vs. different Localization/Shifting &
  90\% &
  0.25 &
  0.001 &
  500/1000 &
  3/255 &
  3/255 &
  85 &
  10 &
  3/255 &
  500/1000 &
  85 &
  15 &
  0.1 &
  20 &
  0.05 &
  0.01 &
  0.025 &
  72 \\
Ablation study: Certifiable attack vs. different noise PDF &
  90\% &
  -- &
  0.001 &
  500/1000 &
  3/255 &
  3/255 &
  85 &
  10 &
  3/255 &
  500/1000 &
  85 &
  15 &
  0.1 &
  20 &
  0.05 &
  0.01 &
  -- &
  72 \\
Ablation study: Certifiable attack w/ and w/o Diffusion Denoise &
  90\% &
  0.25 &
  0.001 &
  500/1000 &
  3/255 &
  3/255 &
  85 &
  10 &
  3/255 &
  500/1000 &
  85 &
  15 &
  0.1 &
  20 &
  0.05 &
  0.01 &
  0.025 &
  72 \\ \hline
\end{tabular}%
}
\end{table*}

\subsection{More Results of Black-Box Attacks against SOTA Defenses}
\subsubsection{More Results on Attacking Blacklight} 
See Results in Table \ref{tab:blacklight_cifar10_vgg}-Table \ref{tab:blacklight_cifar100_wrn}.

\subsubsection{More Results on Attacking RAND-Pre}
See Results in Table \ref{tab:cifar10_RAND_vgg}-Table \ref{tab:cifar100_RAND_wrn}.

\subsubsection{More Results on Attacking RAND-Post}
See Results in Table \ref{tab:cifar10_post_RAND_vgg}-Table \ref{tab:cifar100_post_RAND_wrn}. 

\subsection{Attacking Other Empirical and Certified Defenses}

\subsubsection{Attacking Empirical Feature Squeezing Detection}

\begin{figure}[!h]
    \centering
    \includegraphics[width=0.6\linewidth]{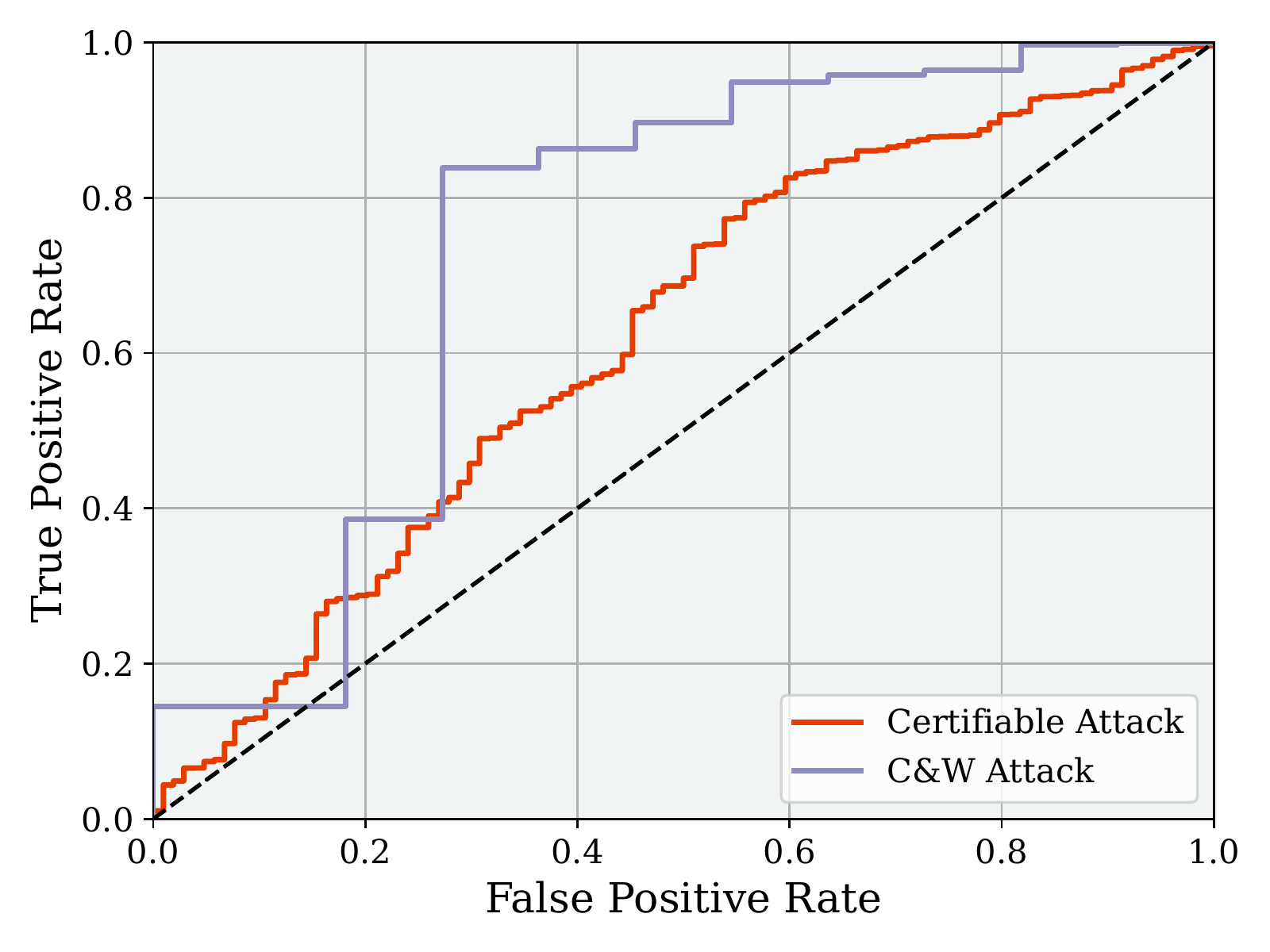}\vspace{-0.1in}
    \caption{ROC Curve of Detection Results by Feature Squeezing on CIFAR10. The ROC score is $0.38$ and $0.26$ for the C\&W attack and certifiable attack, respectively}
    \label{fig:ablation and ROC curve}
\end{figure}

We also evaluate the certifiable attack against adversarial detection. Specifically, we select the Feature Squeezing \cite{xu2017feature} method, which modifies the image and detects the adversarial examples according to the difference of model outputs. To position the performance of the detection, we compare the certifiable attack with the C\&W empirical attack \cite{DBLP:conf/sp/Carlini017}. In this experiment, Gaussian noise was adopted, and parameters are set as $\sigma=0.25$ and $p=90\%$. We draw the ROC curve in Figure \ref{fig:ablation and ROC curve}. 
As the results show, the certifiable attack is less detectable than the C\&W attack w.r.t. Feature Squeezing (with lower ROC scores), possibly because the prediction of empirical adversarial examples is less robust to image modification (empirical adversarial examples tend to be some special data points near the decision boundary). After the modification, it tends to output a different result. The outputs of certifiable adversarial examples are more consistent after the modification since their neighbors tend to be adversarial as well.

\subsubsection{Attacking Randomized Smoothing-based Certified Defense}

Randomized smoothing trains the classifier on inputs with Gaussian noises.
We evaluate the certifiable attack on the classifier trained with the same noise. Specifically, we use the Gaussian noise with $\sigma=0.12$ to $0.50$ in the classifier training and $\sigma=0.25$ in the certifiable attack. Table \ref{tab:RS defense} shows that the noise-trained classifier, especially when the model is trained with the same noise parameter as the adversary, can significantly degrade the performance of certifiable attacks. Noticeably, the smoothed training increases the perturbation sizes, especially the Mean Dist. $\ell_2$ significantly, and doubles the number of RPQ, which means the smoothing training can obstacle the certifiable attack to find a \randAE. It also reduces the certified accuracy of the certifiable attack significantly. However, the certifiable attack can still guarantee that $87.20\%$ of the test samples can be provably misclassified. Without performing the randomized smoothing certification against the certifiable attack, 
we can conclude that the certified accuracy of randomized smoothing with $\sigma=0.25$ will be at most $12.80\%$ since at least $87.20\%$ of the RandAEs are guaranteed to generate successful AEs with $90\%$ probability.

\begin{figure}[!h]
    \centering
    \includegraphics[width=0.6\linewidth]{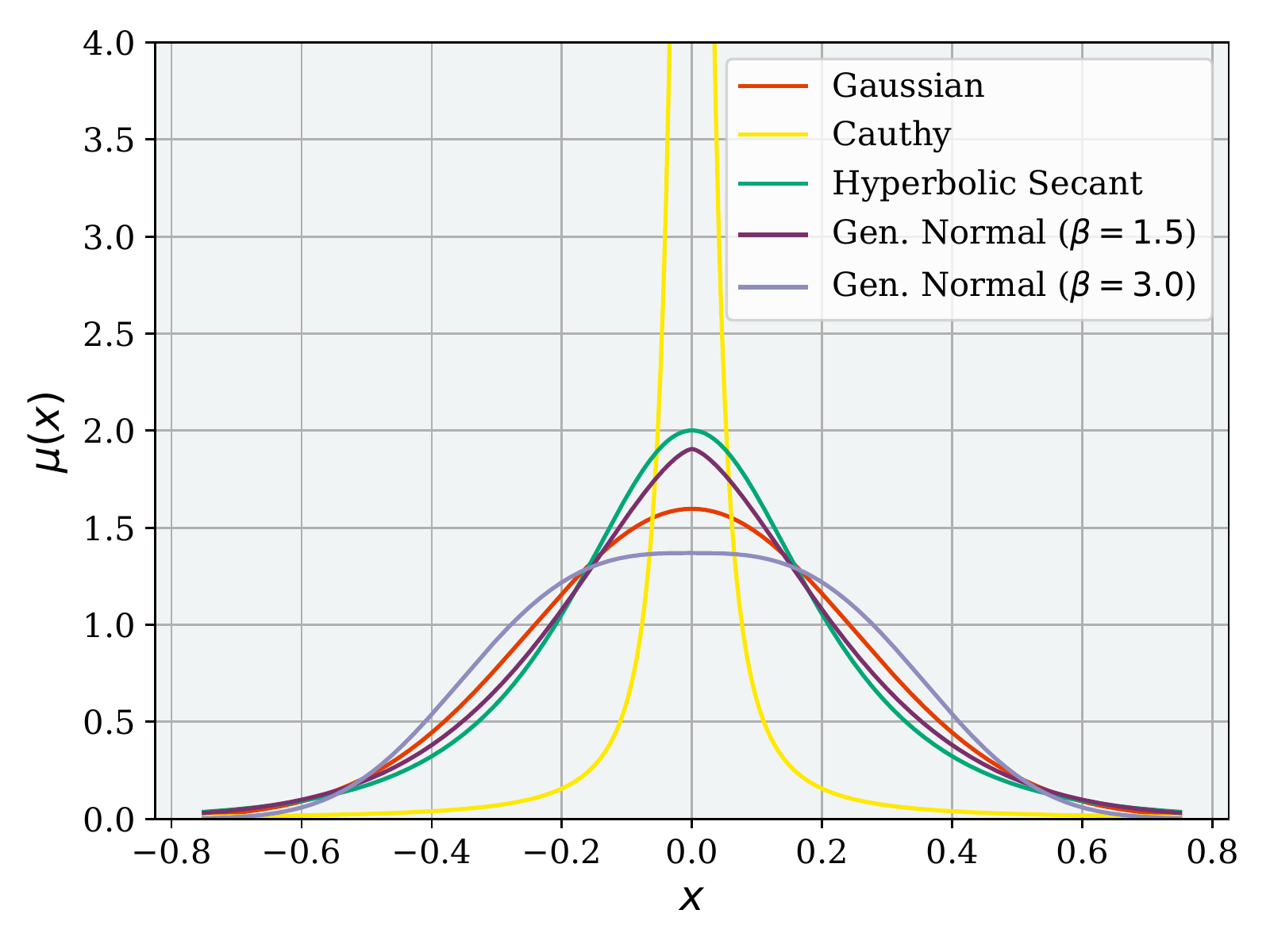}
    \caption{
    PDF of Different Noise Distributions ($\sigma=0.25$) 
    } 
    \label{fig:MC and pdfs}
\end{figure}

\begin{table}[!h]

    \centering
       \caption{RS-based defense against our attack on CIFAR10. $\sigma=0.25$, $p=90\%$}\vspace{-0.1in}
   \small
    \begin{tabular}{c|c |c | c c c}
    \hline
        Defense Para. & Dist. $\ell_2$ &Mean Dist. $\ell_2$ & \# RPQ & Cert. Acc. \\
    \hline
        none &12.95 &2.34  &14.35  & 91.21\% \\
    \hline
$\sigma_{rs}=0.12$   &13.93 &6.72  &23.93  & 88.40\% \\
     $\sigma_{rs}=0.25$  &16.07 &11.84 &33.93  & 87.20\% \\
$\sigma_{rs}=0.50$  &15.84 &10.12 &29.57  & 90.60\% \\
    \hline
        
    \end{tabular}
 
    \label{tab:RS defense}
\end{table}

\begin{table}[!t]
\caption{Performance of Certifiable Black-box Attack with Diffusion Denoise ($p=90\%$, $\sigma=0.25$). Diffusion Denoise can significantly reduce the perturbation size when the noise scale is very large.}\vspace{-0.1in}
\centering
\resizebox{0.47\textwidth}{!}{%
\begin{tabular}{c|c| c c c c}
\hline
     Dataset & denoise & Dist. $\ell_2$ &Mean Dist. $\ell_2$ & \# RPQ & Certified Acc.  \\
\hline
      \multirow{2}*{\scriptsize CIFAR10}

                                            &w/o   & 12.95 &2.34  &14.35 &91.21\% \\
                                            &w/   & 9.04 &10.38  &30.89 &91.30\% \\
\hline
         \multirow{2}*{\scriptsize  ImageNet} 

                                             & w/o   &24.32 &0.72 &5.60 &99.8\%  \\
                                             & w/    &8.48 &7.30 &12.85 &100.00\%  \\

\hline
\end{tabular}
}
\label{tab:denoise}
\end{table}

\subsection{Diffusion Model for Denoising}
\label{sec:exp diffusion}

We implement the diffusion model \cite{ho2020denoising} with the linear schedule. We train a diffusion model and an UNet with $3\times32\times32$ dimension for CIFAR10 and $3\times64\times64$ dimension for ImageNet and denoise the certified \randAE samples injected by Gaussian noise with $\sigma=0.25$. The experimental results are shown in Table \ref{tab:denoise}. Although the diffusion denoise increases the number of queries and Mean Dist. $\ell_2$, the perturbation of AE samples is significantly reduced due to the denoise. It is worth noting that the AEs generated by the diffusion model are unique due to the stochastic reverse process. This difference enables the certifiable attack to generate diverse AEs while ensuring the ASP guarantee.

\clearpage
\begin{table}[H]
\centering
\caption{Attack performance under Blacklight detection on VGG and CIFAR10 (Clean Accuracy: $90.3\%$)}
\label{tab:blacklight_cifar10_vgg}
\resizebox{0.47\textwidth}{!}{%
\begin{tabular}{lcccccccc}
\hline
Attack &
  \begin{tabular}[c]{@{}c@{}}Query\\ Type\end{tabular} &
  \begin{tabular}[c]{@{}c@{}}Pert.\\ Type\end{tabular} &
  \begin{tabular}[c]{@{}c@{}}Det.\\ Rate \%\end{tabular} &
  \begin{tabular}[c]{@{}c@{}}\# Q\\ to Det.\end{tabular} &
  \begin{tabular}[c]{@{}c@{}}Det.\\ Cov. \%\end{tabular} &
  \begin{tabular}[c]{@{}c@{}}Model\\ Acc.\end{tabular} &
  \# Q &
  \begin{tabular}[c]{@{}c@{}}Dist.\\ $\ell_2$\end{tabular} \\ \hline
Bandit                  & Score & \multicolumn{1}{c|}{$\ell_\infty$} & 100.0 & 1.0      & \multicolumn{1}{c|}{69.2}  & 0.0  & 59   & 4.77 \\
NES                     & Score & \multicolumn{1}{c|}{$\ell_\infty$} & 100.0 & 8.6      & \multicolumn{1}{c|}{21.2}  & 0.0  & 264  & 1.33 \\
Parsimonious            & Score & \multicolumn{1}{c|}{$\ell_\infty$} & 100.0 & 2.0      & \multicolumn{1}{c|}{96.7}  & 0.0  & 107  & 4.90 \\
Sign                    & Score & \multicolumn{1}{c|}{$\ell_\infty$} & 100.0 & 2.0      & \multicolumn{1}{c|}{92.4}  & 0.0  & 83   & 4.90 \\
Square                  & Score & \multicolumn{1}{c|}{$\ell_\infty$} & 100.0 & 2.0      & \multicolumn{1}{c|}{75.9}  & 0.0  & 25   & 4.90 \\
ZOSignSGD               & Score & \multicolumn{1}{c|}{$\ell_\infty$} & 100.0 & 2.0      & \multicolumn{1}{c|}{50.6}  & 0.0  & 267  & 1.29 \\
GeoDA                   & Label & \multicolumn{1}{c|}{$\ell_\infty$} & 100.0 & 1.0      & \multicolumn{1}{c|}{89.7}  & 0.2  & 377  & 3.08 \\
HSJ                     & Label & \multicolumn{1}{c|}{$\ell_\infty$} & 100.0 & 7.7      & \multicolumn{1}{c|}{92.2}  & 0.0  & 353  & 2.94 \\
Opt                     & Label & \multicolumn{1}{c|}{$\ell_\infty$} & 100.0 & 8.6      & \multicolumn{1}{c|}{83.4}  & 37.1 & 2571 & 1.04 \\
RayS                    & Label & \multicolumn{1}{c|}{$\ell_\infty$} & 100.0 & 5.8      & \multicolumn{1}{c|}{78.6}  & 0.0  & 226  & 4.82 \\
SignFlip                & Label & \multicolumn{1}{c|}{$\ell_\infty$} & 100.0 & 8.7      & \multicolumn{1}{c|}{56.0}  & 0.0  & 68   & 3.91 \\
SignOPT                 & Label & \multicolumn{1}{c|}{$\ell_\infty$} & 100.0 & 8.6      & \multicolumn{1}{c|}{89.1}  & 21.0 & 1112 & 1.15 \\
Bandit                  & Score & \multicolumn{1}{c|}{$\ell_2$}      & 100.0 & 1.0      & \multicolumn{1}{c|}{98.9}  & 0.0  & 121  & 2.62 \\
NES                     & Score & \multicolumn{1}{c|}{$\ell_2$}      & 100.0 & 8.5      & \multicolumn{1}{c|}{32.6}  & 0.0  & 823  & 0.53 \\
Simple                  & Score & \multicolumn{1}{c|}{$\ell_2$}      & 100.0 & 1.0      & \multicolumn{1}{c|}{99.8}  & 0.0  & 779  & 0.77 \\
Square                  & Score & \multicolumn{1}{c|}{$\ell_2$}      & 100.0 & 2.0      & \multicolumn{1}{c|}{78.7}  & 0.0  & 26   & 4.47 \\
ZOSignSGD               & Score & \multicolumn{1}{c|}{$\ell_2$}      & 100.0 & 2.0      & \multicolumn{1}{c|}{53.4}  & 0.3  & 1631 & 0.48 \\
Boundary                & Label & \multicolumn{1}{c|}{$\ell_2$}      & 100.0 & 7.6      & \multicolumn{1}{c|}{67.4}  & 21.7 & 315  & 2.70 \\
GeoDA                   & Label & \multicolumn{1}{c|}{$\ell_2$}      & 100.0 & 1.0      & \multicolumn{1}{c|}{89.7}  & 0.1  & 230  & 2.80 \\
HSJ                     & Label & \multicolumn{1}{c|}{$\ell_2$}      & 100.0 & 7.6      & \multicolumn{1}{c|}{90.9}  & 0.0  & 188  & 2.49 \\
Opt                     & Label & \multicolumn{1}{c|}{$\ell_2$}      & 100.0 & 8.6      & \multicolumn{1}{c|}{65.1}  & 32.6 & 675  & 2.39 \\
SignOPT                 & Label & \multicolumn{1}{c|}{$\ell_2$}      & 100.0 & 8.6      & \multicolumn{1}{c|}{77.7}  & 24.7 & 510  & 1.89 \\
PointWise               & Label & \multicolumn{1}{c|}{Opt.}          & 100.0 & 1.0      & \multicolumn{1}{c|}{99.5}  & 0.0  & 764  & 2.06 \\
SparseEvo               & Label & \multicolumn{1}{c|}{Opt.}          & 95.7  & 1.0      & \multicolumn{1}{c|}{100.0} & 0.0  & 9569 & 2.40 \\ \hline
\textbf{CA (sssp)}       & Label & \multicolumn{1}{c|}{Opt.}          & 0.0   & $\infty$ & \multicolumn{1}{c|}{0.0}   & 6.2  & 393  & 3.75 \\
\textbf{CA (bin search)} & Label & \multicolumn{1}{c|}{Opt.}          & 0.0   & $\infty$ & \multicolumn{1}{c|}{0.0}   & 0.0  & 473  & 5.20 \\ \hline
\end{tabular}%
}
\end{table}
\begin{table}[H]
\centering
\caption{Attack performance under Blacklight detection on ResNet and CIFAR10 (Clean Accuracy: $92.1\%$)}
\label{tab:blacklight_cifar10_resnet}
\resizebox{0.47\textwidth}{!}{%
\begin{tabular}{lcccccccc}
\hline
Attack &
  \begin{tabular}[c]{@{}c@{}}Query\\ Type\end{tabular} &
  \begin{tabular}[c]{@{}c@{}}Pert.\\ Type\end{tabular} &
  \begin{tabular}[c]{@{}c@{}}Det. \\ Rate \%\end{tabular} &
  \begin{tabular}[c]{@{}c@{}}\# Q\\ to Det.\end{tabular} &
  \begin{tabular}[c]{@{}c@{}}Det.\\ Cov. \%\end{tabular} &
  \begin{tabular}[c]{@{}c@{}}Model\\ Acc.\end{tabular} &
  \# Q &
  \begin{tabular}[c]{@{}c@{}}Dist.\\ $\ell_2$\end{tabular} \\ \hline
Bandit                  & Score & \multicolumn{1}{c|}{$\ell_\infty$} & 100.0 & 1.0      & \multicolumn{1}{c|}{67.9}  & 0.0  & 32   & 4.90 \\
NES                     & Score & \multicolumn{1}{c|}{$\ell_\infty$} & 100.0 & 8.5      & \multicolumn{1}{c|}{20.7}  & 0.0  & 256  & 1.36 \\
Parsimonious            & Score & \multicolumn{1}{c|}{$\ell_\infty$} & 100.0 & 2.0      & \multicolumn{1}{c|}{96.3}  & 0.0  & 83   & 5.01 \\
Sign                    & Score & \multicolumn{1}{c|}{$\ell_\infty$} & 100.0 & 2.0      & \multicolumn{1}{c|}{92.2}  & 0.0  & 94   & 4.99 \\
Square                  & Score & \multicolumn{1}{c|}{$\ell_\infty$} & 100.0 & 2.0      & \multicolumn{1}{c|}{71.6}  & 0.0  & 17   & 4.99 \\
ZOSignSGD               & Score & \multicolumn{1}{c|}{$\ell_\infty$} & 100.0 & 2.0      & \multicolumn{1}{c|}{50.6}  & 0.0  & 248  & 1.30 \\
GeoDA                   & Label & \multicolumn{1}{c|}{$\ell_\infty$} & 100.0 & 1.0      & \multicolumn{1}{c|}{89.6}  & 9.8  & 215  & 2.84 \\
HSJ                     & Label & \multicolumn{1}{c|}{$\ell_\infty$} & 100.0 & 361.1    & \multicolumn{1}{c|}{85.2}  & 0.5  & 683  & 3.17 \\
Opt                     & Label & \multicolumn{1}{c|}{$\ell_\infty$} & 89.0  & 9.0      & \multicolumn{1}{c|}{85.3}  & 50.5 & 2290 & 0.86 \\
RayS                    & Label & \multicolumn{1}{c|}{$\ell_\infty$} & 100.0 & 5.8      & \multicolumn{1}{c|}{78.9}  & 0.0  & 251  & 4.82 \\
SignFlip                & Label & \multicolumn{1}{c|}{$\ell_\infty$} & 99.5  & 246.8    & \multicolumn{1}{c|}{56.3}  & 0.5  & 389  & 4.22 \\
SignOPT                 & Label & \multicolumn{1}{c|}{$\ell_\infty$} & 92.5  & 8.4      & \multicolumn{1}{c|}{88.4}  & 31.0 & 1270 & 1.08 \\
Bandit                  & Score & \multicolumn{1}{c|}{$\ell_2$}      & 100.0 & 1.0      & \multicolumn{1}{c|}{98.7}  & 0.0  & 109  & 2.65 \\
NES                     & Score & \multicolumn{1}{c|}{$\ell_2$}      & 100.0 & 8.1      & \multicolumn{1}{c|}{32.5}  & 0.0  & 762  & 0.53 \\
Simple                  & Score & \multicolumn{1}{c|}{$\ell_2$}      & 100.0 & 1.0      & \multicolumn{1}{c|}{99.7}  & 0.0  & 703  & 0.78 \\
Square                  & Score & \multicolumn{1}{c|}{$\ell_2$}      & 100.0 & 2.0      & \multicolumn{1}{c|}{74.2}  & 0.0  & 21   & 4.55 \\
ZOSignSGD               & Score & \multicolumn{1}{c|}{$\ell_2$}      & 100.0 & 2.0      & \multicolumn{1}{c|}{53.3}  & 0.0  & 1340 & 0.45 \\
Boundary                & Label & \multicolumn{1}{c|}{$\ell_2$}      & 100.0 & 341.4    & \multicolumn{1}{c|}{71.5}  & 35.4 & 438  & 2.38 \\
GeoDA                   & Label & \multicolumn{1}{c|}{$\ell_2$}      & 100.0 & 1.0      & \multicolumn{1}{c|}{89.7}  & 8.3  & 274  & 2.65 \\
HSJ                     & Label & \multicolumn{1}{c|}{$\ell_2$}      & 100.0 & 358.4    & \multicolumn{1}{c|}{82.1}  & 0.4  & 497  & 2.75 \\
Opt                     & Label & \multicolumn{1}{c|}{$\ell_2$}      & 90.1  & 9.9      & \multicolumn{1}{c|}{64.4}  & 40.0 & 660  & 2.34 \\
SignOPT                 & Label & \multicolumn{1}{c|}{$\ell_2$}      & 88.1  & 8.5      & \multicolumn{1}{c|}{75.6}  & 35.3 & 414  & 1.67 \\
PointWise               & Label & \multicolumn{1}{c|}{Opt.}          & 91.3  & 1.0      & \multicolumn{1}{c|}{99.6}  & 8.0  & 888  & 1.92 \\
SparseEvo               & Label & \multicolumn{1}{c|}{Opt.}          & 87.9  & 1.0      & \multicolumn{1}{c|}{100.0} & 8.8  & 8796 & 2.57 \\ \hline
\textbf{CA (sssp)}       & Label & \multicolumn{1}{c|}{Opt.}          & 0.0   & $\infty$ & \multicolumn{1}{c|}{0.0}   & 8.3  & 437  & 3.95 \\
\textbf{CA (bin search)} & Label & \multicolumn{1}{c|}{Opt.}          & 0.0   & $\infty$ & \multicolumn{1}{c|}{0.0}   & 10.7 & 421  & 4.09 \\ \hline
\end{tabular}%
}
\end{table}
\begin{table}[H]
\centering
\caption{Attack performance under Blacklight detection on ResNeXt and CIFAR10 (Clean Accuracy: $94.9\%$)}
\label{tab:blacklight_cifar10_resnext}
\resizebox{0.47\textwidth}{!}{%
\begin{tabular}{lcccccccc}
\hline
Attack &
  \begin{tabular}[c]{@{}c@{}}Query\\ Type\end{tabular} &
  \begin{tabular}[c]{@{}c@{}}Pert.\\ Type\end{tabular} &
  \begin{tabular}[c]{@{}c@{}}Det.\\ Rate \%\end{tabular} &
  \begin{tabular}[c]{@{}c@{}}\# Q\\ to Det.\end{tabular} &
  \begin{tabular}[c]{@{}c@{}}Det.\\ Cov. \%\end{tabular} &
  \begin{tabular}[c]{@{}c@{}}Model\\ Acc.\end{tabular} &
  \# Q &
  \begin{tabular}[c]{@{}c@{}}Dist.\\ $\ell_2$\end{tabular} \\ \hline
Bandit                  & Score                & \multicolumn{1}{c|}{$\ell_\infty$} & 100.0 & 1.0      & \multicolumn{1}{c|}{67.3}  & 0.0  & 27   & 5.07 \\
NES                     & Score                & \multicolumn{1}{c|}{$\ell_\infty$} & 100.0 & 8.5      & \multicolumn{1}{c|}{20.4}  & 0.0  & 213  & 1.28 \\
Parsimonious            & Score                & \multicolumn{1}{c|}{$\ell_\infty$} & 100.0 & 2.0      & \multicolumn{1}{c|}{97.3}  & 0.0  & 104  & 5.16 \\
Sign                    & Score                & \multicolumn{1}{c|}{$\ell_\infty$} & 100.0 & 2.0      & \multicolumn{1}{c|}{93.6}  & 0.0  & 75   & 5.15 \\
Square                  & Score                & \multicolumn{1}{c|}{$\ell_\infty$} & 100.0 & 2.0      & \multicolumn{1}{c|}{69.7}  & 0.0  & 15   & 5.15 \\
ZOSignSGD               & Score                & \multicolumn{1}{c|}{$\ell_\infty$} & 100.0 & 2.0      & \multicolumn{1}{c|}{50.6}  & 0.0  & 222  & 1.27 \\
GeoDA                   & Label                & \multicolumn{1}{c|}{$\ell_\infty$} & 100.0 & 1.0      & \multicolumn{1}{c|}{89.9}  & 10.1 & 197  & 2.22 \\
HSJ                     & Label                & \multicolumn{1}{c|}{$\ell_\infty$} & 100.0 & 148.1    & \multicolumn{1}{c|}{85.9}  & 0.0  & 383  & 2.65 \\
Opt                     & Label                & \multicolumn{1}{c|}{$\ell_\infty$} & 88.4  & 8.5      & \multicolumn{1}{c|}{78.7}  & 43.3 & 1644 & 0.87 \\
RayS                    & Label                & \multicolumn{1}{c|}{$\ell_\infty$} & 100.0 & 5.4      & \multicolumn{1}{c|}{81.4}  & 0.0  & 391  & 4.77 \\
SignFlip                & Label                & \multicolumn{1}{c|}{$\ell_\infty$} & 100.0 & 165.8    & \multicolumn{1}{c|}{41.5}  & 0.0  & 205  & 3.43 \\
SignOPT                 & Label                & \multicolumn{1}{c|}{$\ell_\infty$} & 92.7  & 8.6      & \multicolumn{1}{c|}{92.5}  & 26.0 & 780  & 0.92 \\
Bandit                  & Score                & \multicolumn{1}{c|}{$\ell_2$}      & 100.0 & 1.0      & \multicolumn{1}{c|}{98.8}  & 0.0  & 110  & 2.81 \\
NES                     & Score                & \multicolumn{1}{c|}{$\ell_2$}      & 100.0 & 8.8      & \multicolumn{1}{c|}{32.4}  & 0.0  & 608  & 0.48 \\
Simple                  & Score                & \multicolumn{1}{c|}{$\ell_2$}      & 100.0 & 1.0      & \multicolumn{1}{c|}{99.7}  & 0.0  & 595  & 0.71 \\
Square                  & Score                & \multicolumn{1}{c|}{$\ell_2$}      & 100.0 & 2.0      & \multicolumn{1}{c|}{74.2}  & 0.0  & 20   & 4.69 \\
ZOSignSGD               & Score                & \multicolumn{1}{c|}{$\ell_2$}      & 100.0 & 2.0      & \multicolumn{1}{c|}{53.4}  & 0.6  & 1376 & 0.46 \\
Boundary                & Label                & \multicolumn{1}{c|}{$\ell_2$}      & 100.0 & 161.4    & \multicolumn{1}{c|}{56.2}  & 14.8 & 186  & 2.71 \\
GeoDA                   & Label                & \multicolumn{1}{c|}{$\ell_2$}      & 100.0 & 1.0      & \multicolumn{1}{c|}{90.3}  & 10.2 & 165  & 2.13 \\
HSJ                     & Label                & \multicolumn{1}{c|}{$\ell_2$}      & 100.0 & 138.3    & \multicolumn{1}{c|}{83.9}  & 0.0  & 287  & 2.16 \\
Opt                     & Label                & \multicolumn{1}{c|}{$\ell_2$}      & 88.4  & 8.6      & \multicolumn{1}{c|}{60.8}  & 37.2 & 480  & 1.65 \\
SignOPT                 & Label                & \multicolumn{1}{c|}{$\ell_2$}      & 88.4  & 8.6      & \multicolumn{1}{c|}{87.5}  & 32.7 & 387  & 1.17 \\
PointWise               & Label                & \multicolumn{1}{c|}{Opt.}          & 97.6  & 1.0      & \multicolumn{1}{c|}{97.6}  & 2.3  & 1084 & 2.01 \\
SparseEvo               & Label                & \multicolumn{1}{c|}{Opt.}          & 95.0  & 1.0      & \multicolumn{1}{c|}{100.0} & 2.6  & 9506 & 3.11 \\ \hline
\textbf{CA (sssp)}       & Label                & \multicolumn{1}{c|}{Opt.}          & 0.0   & $\infty$ & \multicolumn{1}{c|}{0.0}   & 8.3  & 437  & 3.95 \\
\textbf{CA (bin search)} & Label                & \multicolumn{1}{c|}{Opt.}          & 0.0   & $\infty$ & \multicolumn{1}{c|}{0.0}   & 10.7 & 421  & 4.09 \\ \hline
                        & \multicolumn{1}{l}{} & \multicolumn{1}{l}{}               &       &          &                            &      &      &     
\end{tabular}%
}
\end{table}
\begin{table}[H]
\centering
\caption{Attack performance under Blacklight detection on WRN and CIFAR10 (Clean Accuracy: $96.1\%$)}
\label{tab:blacklight_cifar10_wrn}
\resizebox{0.47\textwidth}{!}{%
\begin{tabular}{lcccccccc}
\hline
Attack &
  \begin{tabular}[c]{@{}c@{}}Query\\ Type\end{tabular} &
  \begin{tabular}[c]{@{}c@{}}Pert.\\ Type\end{tabular} &
  \begin{tabular}[c]{@{}c@{}}Det.\\ Rate \%\end{tabular} &
  \begin{tabular}[c]{@{}c@{}}\# Q\\ to Det.\end{tabular} &
  \begin{tabular}[c]{@{}c@{}}Det.\\ Cov. \%\end{tabular} &
  \begin{tabular}[c]{@{}c@{}}Model\\ Acc.\end{tabular} &
  \# Q &
  \begin{tabular}[c]{@{}c@{}}Dist.\\ $\ell_2$\end{tabular} \\ \hline
Bandit                  & Score                & \multicolumn{1}{c|}{$\ell_\infty$} & 100.0 & 1.0      & \multicolumn{1}{c|}{67.4}  & 0.0  & 47   & 5.09 \\
NES                     & Score                & \multicolumn{1}{c|}{$\ell_\infty$} & 100.0 & 8.7      & \multicolumn{1}{c|}{20.7}  & 0.0  & 295  & 1.44 \\
Parsimonious            & Score                & \multicolumn{1}{c|}{$\ell_\infty$} & 100.0 & 2.0      & \multicolumn{1}{c|}{97.6}  & 0.2  & 140  & 5.22 \\
Sign                    & Score                & \multicolumn{1}{c|}{$\ell_\infty$} & 100.0 & 2.0      & \multicolumn{1}{c|}{94.6}  & 0.0  & 130  & 5.21 \\
Square                  & Score                & \multicolumn{1}{c|}{$\ell_\infty$} & 100.0 & 2.0      & \multicolumn{1}{c|}{74.5}  & 0.0  & 21   & 5.21 \\
ZOSignSGD               & Score                & \multicolumn{1}{c|}{$\ell_\infty$} & 100.0 & 2.0      & \multicolumn{1}{c|}{50.6}  & 0.2  & 300  & 1.42 \\
GeoDA                   & Label                & \multicolumn{1}{c|}{$\ell_\infty$} & 100.0 & 1.0      & \multicolumn{1}{c|}{89.6}  & 0.1  & 323  & 2.83 \\
HSJ                     & Label                & \multicolumn{1}{c|}{$\ell_\infty$} & 100.0 & 7.3      & \multicolumn{1}{c|}{91.9}  & 0.0  & 280  & 2.69 \\
Opt                     & Label                & \multicolumn{1}{c|}{$\ell_\infty$} & 97.0  & 11.3     & \multicolumn{1}{c|}{81.1}  & 35.0 & 2179 & 1.15 \\
RayS                    & Label                & \multicolumn{1}{c|}{$\ell_\infty$} & 100.0 & 4.9      & \multicolumn{1}{c|}{81.5}  & 0.0  & 309  & 4.87 \\
SignFlip                & Label                & \multicolumn{1}{c|}{$\ell_\infty$} & 100.0 & 8.3      & \multicolumn{1}{c|}{53.0}  & 0.0  & 72   & 3.75 \\
SignOPT                 & Label                & \multicolumn{1}{c|}{$\ell_\infty$} & 88.4  & 8.5      & \multicolumn{1}{c|}{89.4}  & 28.0 & 843  & 0.97 \\
Bandit                  & Score                & \multicolumn{1}{c|}{$\ell_2$}      & 100.0 & 1.0      & \multicolumn{1}{c|}{98.8}  & 0.0  & 136  & 2.71 \\
NES                     & Score                & \multicolumn{1}{c|}{$\ell_2$}      & 100.0 & 8.2      & \multicolumn{1}{c|}{32.5}  & 0.0  & 863  & 0.57 \\
Simple                  & Score                & \multicolumn{1}{c|}{$\ell_2$}      & 100.0 & 1.0      & \multicolumn{1}{c|}{99.8}  & 0.0  & 813  & 0.83 \\
Square                  & Score                & \multicolumn{1}{c|}{$\ell_2$}      & 100.0 & 2.0      & \multicolumn{1}{c|}{73.2}  & 0.0  & 20   & 4.75 \\
ZOSignSGD               & Score                & \multicolumn{1}{c|}{$\ell_2$}      & 100.0 & 2.0      & \multicolumn{1}{c|}{53.3}  & 2.3  & 1803 & 0.54 \\
Boundary                & Label                & \multicolumn{1}{c|}{$\ell_2$}      & 100.0 & 7.3      & \multicolumn{1}{c|}{62.2}  & 16.7 & 376  & 2.85 \\
GeoDA                   & Label                & \multicolumn{1}{c|}{$\ell_2$}      & 100.0 & 1.0      & \multicolumn{1}{c|}{89.6}  & 0.0  & 186  & 2.63 \\
HSJ                     & Label                & \multicolumn{1}{c|}{$\ell_2$}      & 100.0 & 7.4      & \multicolumn{1}{c|}{91.0}  & 0.0  & 185  & 2.27 \\
Opt                     & Label                & \multicolumn{1}{c|}{$\ell_2$}      & 97.3  & 10.7     & \multicolumn{1}{c|}{66.3}  & 35.1 & 678  & 2.14 \\
SignOPT                 & Label                & \multicolumn{1}{c|}{$\ell_2$}      & 91.7  & 8.5      & \multicolumn{1}{c|}{80.3}  & 33.0 & 470  & 1.57 \\
PointWise               & Label                & \multicolumn{1}{c|}{Opt.}          & 99.9  & 1.0      & \multicolumn{1}{c|}{99.5}  & 0.1  & 800  & 1.96 \\
SparseEvo               & Label                & \multicolumn{1}{c|}{Opt.}          & 97.7  & 1.0      & \multicolumn{1}{c|}{100.0} & 0.2  & 9772 & 2.83 \\ \hline
\textbf{CA (sssp)}       & Label                & \multicolumn{1}{c|}{Opt.}          & 0.0   & $\infty$ & \multicolumn{1}{c|}{0.0}   & 6.8  & 417  & 3.7  \\
\textbf{CA (bin search)} & Label                & \multicolumn{1}{c|}{Opt.}          & 0.0   & $\infty$ & \multicolumn{1}{c|}{0.0}   & 0.0  & 461  & 5.05 \\ \hline
                        & \multicolumn{1}{l}{} & \multicolumn{1}{l}{}               &       &          &                            &      &      &     
\end{tabular}%
}
\end{table}
\clearpage
\begin{table}[H]
\centering
\caption{Attack performance under Blacklight detection on VGG and CIFAR100 (Clean Accuracy: $68.6\%$)}
\label{tab:blacklight_cifar100_vgg}
\resizebox{0.47\textwidth}{!}{%
\begin{tabular}{lcccccccc}
\hline
Attack &
  \begin{tabular}[c]{@{}c@{}}Query\\ Type\end{tabular} &
  \begin{tabular}[c]{@{}c@{}}Pert.\\ Type\end{tabular} &
  \begin{tabular}[c]{@{}c@{}}Det.\\ Rate \%\end{tabular} &
  \begin{tabular}[c]{@{}c@{}}\# Q\\ to Det.\end{tabular} &
  \begin{tabular}[c]{@{}c@{}}Det.\\ Cov. \%\end{tabular} &
  \begin{tabular}[c]{@{}c@{}}Model\\ Acc.\end{tabular} &
  \# Q &
  \begin{tabular}[c]{@{}c@{}}Dist.\\ $\ell_2$\end{tabular} \\ \hline
Bandit                  & Score                & \multicolumn{1}{c|}{$\ell_\infty$} & 100.0 & 1.0      & \multicolumn{1}{c|}{61.5}  & 0.0  & 22   & 3.66 \\
NES                     & Score                & \multicolumn{1}{c|}{$\ell_\infty$} & 100.0 & 8.6      & \multicolumn{1}{c|}{21.6}  & 0.0  & 178  & 0.80 \\
Parsimonious            & Score                & \multicolumn{1}{c|}{$\ell_\infty$} & 100.0 & 2.0      & \multicolumn{1}{c|}{94.0}  & 0.0  & 73   & 3.70 \\
Sign                    & Score                & \multicolumn{1}{c|}{$\ell_\infty$} & 100.0 & 2.0      & \multicolumn{1}{c|}{84.6}  & 0.0  & 44   & 3.68 \\
Square                  & Score                & \multicolumn{1}{c|}{$\ell_\infty$} & 100.0 & 2.0      & \multicolumn{1}{c|}{64.7}  & 0.0  & 9    & 3.68 \\
ZOSignSGD               & Score                & \multicolumn{1}{c|}{$\ell_\infty$} & 100.0 & 2.0      & \multicolumn{1}{c|}{50.7}  & 0.1  & 184  & 0.79 \\
GeoDA                   & Label                & \multicolumn{1}{c|}{$\ell_\infty$} & 100.0 & 1.0      & \multicolumn{1}{c|}{89.8}  & 0.0  & 154  & 2.07 \\
HSJ                     & Label                & \multicolumn{1}{c|}{$\ell_\infty$} & 100.0 & 7.2      & \multicolumn{1}{c|}{91.9}  & 0.0  & 232  & 2.16 \\
Opt                     & Label                & \multicolumn{1}{c|}{$\ell_\infty$} & 100.0 & 8.6      & \multicolumn{1}{c|}{74.7}  & 20.8 & 1463 & 0.97 \\
RayS                    & Label                & \multicolumn{1}{c|}{$\ell_\infty$} & 100.0 & 6.6      & \multicolumn{1}{c|}{71.7}  & 0.0  & 120  & 4.40 \\
SignFlip                & Label                & \multicolumn{1}{c|}{$\ell_\infty$} & 100.0 & 8.7      & \multicolumn{1}{c|}{44.9}  & 0.0  & 30   & 2.88 \\
SignOPT                 & Label                & \multicolumn{1}{c|}{$\ell_\infty$} & 100.0 & 8.5      & \multicolumn{1}{c|}{92.3}  & 23.0 & 699  & 0.84 \\
Bandit                  & Score                & \multicolumn{1}{c|}{$\ell_2$}      & 100.0 & 1.0      & \multicolumn{1}{c|}{97.9}  & 0.0  & 73   & 1.35 \\
NES                     & Score                & \multicolumn{1}{c|}{$\ell_2$}      & 100.0 & 8.7      & \multicolumn{1}{c|}{32.4}  & 0.0  & 553  & 0.31 \\
Simple                  & Score                & \multicolumn{1}{c|}{$\ell_2$}      & 100.0 & 1.0      & \multicolumn{1}{c|}{99.4}  & 0.0  & 507  & 0.44 \\
Square                  & Score                & \multicolumn{1}{c|}{$\ell_2$}      & 100.0 & 2.0      & \multicolumn{1}{c|}{64.0}  & 0.0  & 6    & 3.35 \\
ZOSignSGD               & Score                & \multicolumn{1}{c|}{$\ell_2$}      & 100.0 & 2.0      & \multicolumn{1}{c|}{53.6}  & 0.4  & 1137 & 0.29 \\
Boundary                & Label                & \multicolumn{1}{c|}{$\ell_2$}      & 100.0 & 7.3      & \multicolumn{1}{c|}{54.0}  & 6.3  & 100  & 2.49 \\
GeoDA                   & Label                & \multicolumn{1}{c|}{$\ell_2$}      & 100.0 & 1.0      & \multicolumn{1}{c|}{90.2}  & 0.0  & 125  & 1.95 \\
HSJ                     & Label                & \multicolumn{1}{c|}{$\ell_2$}      & 100.0 & 7.3      & \multicolumn{1}{c|}{91.4}  & 0.0  & 147  & 1.74 \\
Opt                     & Label                & \multicolumn{1}{c|}{$\ell_2$}      & 100.0 & 8.6      & \multicolumn{1}{c|}{63.3}  & 19.8 & 563  & 1.51 \\
SignOPT                 & Label                & \multicolumn{1}{c|}{$\ell_2$}      & 99.4  & 8.5      & \multicolumn{1}{c|}{88.7}  & 14.4 & 446  & 1.13 \\
PointWise               & Label                & \multicolumn{1}{c|}{Opt.}          & 100.0 & 1.0      & \multicolumn{1}{c|}{98.9}  & 0.0  & 300  & 1.42 \\
SparseEvo               & Label                & \multicolumn{1}{c|}{Opt.}          & 86.5  & 1.0      & \multicolumn{1}{c|}{100.0} & 0.0  & 8647 & 1.88 \\ \hline
\textbf{CA (sssp)}       & Label                & \multicolumn{1}{c|}{Opt.}          & 0.0   & $\infty$ & \multicolumn{1}{c|}{0.0}   & 1.7  & 187  & 2.34 \\
\textbf{CA (bin search)} & Label                & \multicolumn{1}{c|}{Opt.}          & 0.0   & $\infty$ & \multicolumn{1}{c|}{0.0}   & 0.0  & 457  & 2.70 \\ \hline
                        & \multicolumn{1}{l}{} & \multicolumn{1}{l}{}               &       &          &                            &      &      &     
\end{tabular}%
}
\end{table}
\begin{table}[H]
\centering
\caption{Attack performance under Blacklight detection on ResNet and CIFAR100 (Clean Accuracy: $66.8\%$)}
\label{tab:blacklight_cifar100_resnet}
\resizebox{0.47\textwidth}{!}{%
\begin{tabular}{lcccccccc}
\hline
Attack &
  \begin{tabular}[c]{@{}c@{}}Query\\ Type\end{tabular} &
  \begin{tabular}[c]{@{}c@{}}Pert.\\ Type\end{tabular} &
  \begin{tabular}[c]{@{}c@{}}Det.\\ Rate \%\end{tabular} &
  \begin{tabular}[c]{@{}c@{}}\# Q\\ to Det.\end{tabular} &
  \begin{tabular}[c]{@{}c@{}}Det.\\ Cov. \%\end{tabular} &
  \begin{tabular}[c]{@{}c@{}}Model\\ Acc.\end{tabular} &
  \# Q &
  \begin{tabular}[c]{@{}c@{}}Dist.\\ $\ell_2$\end{tabular} \\ \hline
Bandit                  & Score                & \multicolumn{1}{c|}{$\ell_\infty$} & 100.0 & 1.0      & \multicolumn{1}{c|}{62.5}  & 0.0  & 10   & 3.56 \\
NES                     & Score                & \multicolumn{1}{c|}{$\ell_\infty$} & 100.0 & 8.5      & \multicolumn{1}{c|}{21.3}  & 0.0  & 141  & 0.72 \\
Parsimonious            & Score                & \multicolumn{1}{c|}{$\ell_\infty$} & 100.0 & 2.0      & \multicolumn{1}{c|}{93.5}  & 0.0  & 49   & 3.61 \\
Sign                    & Score                & \multicolumn{1}{c|}{$\ell_\infty$} & 100.0 & 2.0      & \multicolumn{1}{c|}{82.4}  & 0.0  & 31   & 3.59 \\
Square                  & Score                & \multicolumn{1}{c|}{$\ell_\infty$} & 100.0 & 2.0      & \multicolumn{1}{c|}{66.2}  & 0.0  & 8    & 3.58 \\
ZOSignSGD               & Score                & \multicolumn{1}{c|}{$\ell_\infty$} & 100.0 & 2.0      & \multicolumn{1}{c|}{50.7}  & 0.0  & 148  & 0.70 \\
GeoDA                   & Label                & \multicolumn{1}{c|}{$\ell_\infty$} & 100.0 & 1.0      & \multicolumn{1}{c|}{89.5}  & 0.0  & 152  & 2.21 \\
HSJ                     & Label                & \multicolumn{1}{c|}{$\ell_\infty$} & 100.0 & 7.3      & \multicolumn{1}{c|}{91.8}  & 0.0  & 214  & 2.23 \\
Opt                     & Label                & \multicolumn{1}{c|}{$\ell_\infty$} & 100.0 & 8.5      & \multicolumn{1}{c|}{75.1}  & 25.8 & 1442 & 0.96 \\
RayS                    & Label                & \multicolumn{1}{c|}{$\ell_\infty$} & 100.0 & 6.6      & \multicolumn{1}{c|}{70.3}  & 0.0  & 109  & 4.02 \\
SignFlip                & Label                & \multicolumn{1}{c|}{$\ell_\infty$} & 100.0 & 8.7      & \multicolumn{1}{c|}{49.0}  & 0.0  & 39   & 3.28 \\
SignOPT                 & Label                & \multicolumn{1}{c|}{$\ell_\infty$} & 100.0 & 8.5      & \multicolumn{1}{c|}{89.5}  & 17.0 & 749  & 0.88 \\
Bandit                  & Score                & \multicolumn{1}{c|}{$\ell_2$}      & 100.0 & 1.0      & \multicolumn{1}{c|}{97.8}  & 0.0  & 63   & 1.25 \\
NES                     & Score                & \multicolumn{1}{c|}{$\ell_2$}      & 100.0 & 8.7      & \multicolumn{1}{c|}{32.5}  & 0.0  & 404  & 0.27 \\
Simple                  & Score                & \multicolumn{1}{c|}{$\ell_2$}      & 100.0 & 1.0      & \multicolumn{1}{c|}{99.5}  & 0.0  & 371  & 0.39 \\
Square                  & Score                & \multicolumn{1}{c|}{$\ell_2$}      & 100.0 & 2.0      & \multicolumn{1}{c|}{64.4}  & 0.0  & 6    & 3.26 \\
ZOSignSGD               & Score                & \multicolumn{1}{c|}{$\ell_2$}      & 100.0 & 2.0      & \multicolumn{1}{c|}{53.5}  & 0.1  & 747  & 0.23 \\
Boundary                & Label                & \multicolumn{1}{c|}{$\ell_2$}      & 100.0 & 7.3      & \multicolumn{1}{c|}{56.4}  & 9.1  & 105  & 2.67 \\
GeoDA                   & Label                & \multicolumn{1}{c|}{$\ell_2$}      & 100.0 & 1.0      & \multicolumn{1}{c|}{89.9}  & 0.0  & 130  & 2.02 \\
HSJ                     & Label                & \multicolumn{1}{c|}{$\ell_2$}      & 100.0 & 7.3      & \multicolumn{1}{c|}{91.2}  & 0.0  & 151  & 1.81 \\
Opt                     & Label                & \multicolumn{1}{c|}{$\ell_2$}      & 100.0 & 8.5      & \multicolumn{1}{c|}{63.2}  & 21.6 & 567  & 1.58 \\
SignOPT                 & Label                & \multicolumn{1}{c|}{$\ell_2$}      & 100.0 & 8.5      & \multicolumn{1}{c|}{85.8}  & 17.3 & 472  & 1.20 \\
PointWise               & Label                & \multicolumn{1}{c|}{Opt.}          & 100.0 & 1.0      & \multicolumn{1}{c|}{99.2}  & 0.0  & 468  & 1.52 \\
SparseEvo               & Label                & \multicolumn{1}{c|}{Opt.}          & 90.9  & 1.0      & \multicolumn{1}{c|}{100.0} & 0.0  & 9088 & 2.18 \\ \hline
\textbf{CA (sssp)}       & Label                & \multicolumn{1}{c|}{Opt.}          & 0.0   & $\infty$ & \multicolumn{1}{c|}{0.0}   & 1.4  & 191  & 2.39 \\
\textbf{CA (bin search)} & Label                & \multicolumn{1}{c|}{Opt.}          & 0.0   & $\infty$ & \multicolumn{1}{c|}{0.0}   & 0.0  & 458  & 3.05 \\ \hline
                        & \multicolumn{1}{l}{} & \multicolumn{1}{l}{}               &       &          &                            &      &      &     
\end{tabular}%
}
\end{table}
\begin{table}[H]
\centering
\caption{Attack performance under Blacklight detection on ResNeXt and CIFAR100 (Clean Accuracy: $80.0\%$)}
\label{tab:blacklight_cifar100_resnext}
\resizebox{0.47\textwidth}{!}{%
\begin{tabular}{lcccccccc}
\hline
Attack &
  \begin{tabular}[c]{@{}c@{}}Query\\ Type\end{tabular} &
  \begin{tabular}[c]{@{}c@{}}Pert.\\ Type\end{tabular} &
  \begin{tabular}[c]{@{}c@{}}Det.\\ Rate \%\end{tabular} &
  \begin{tabular}[c]{@{}c@{}}\# Q\\ to Det.\end{tabular} &
  \begin{tabular}[c]{@{}c@{}}Det.\\ Cov. \%\end{tabular} &
  \begin{tabular}[c]{@{}c@{}}Model\\ Acc.\end{tabular} &
  \# Q &
  \begin{tabular}[c]{@{}c@{}}Dist.\\ $\ell_2$\end{tabular} \\ \hline
Bandit                  & Score                & \multicolumn{1}{c|}{$\ell_\infty$} & 100.0 & 1.0      & \multicolumn{1}{c|}{62.3} & 0.0  & 13   & 4.28 \\
NES                     & Score                & \multicolumn{1}{c|}{$\ell_\infty$} & 100.0 & 8.6      & \multicolumn{1}{c|}{21.6} & 0.0  & 145  & 0.87 \\
Parsimonious            & Score                & \multicolumn{1}{c|}{$\ell_\infty$} & 100.0 & 2.0      & \multicolumn{1}{c|}{95.5} & 0.0  & 73   & 4.32 \\
Sign                    & Score                & \multicolumn{1}{c|}{$\ell_\infty$} & 100.0 & 2.0      & \multicolumn{1}{c|}{87.7} & 0.0  & 47   & 4.30 \\
Square                  & Score                & \multicolumn{1}{c|}{$\ell_\infty$} & 100.0 & 2.0      & \multicolumn{1}{c|}{63.7} & 0.0  & 7    & 4.30 \\
ZOSignSGD               & Score                & \multicolumn{1}{c|}{$\ell_\infty$} & 100.0 & 2.0      & \multicolumn{1}{c|}{50.7} & 0.0  & 158  & 0.89 \\
GeoDA                   & Label                & \multicolumn{1}{c|}{$\ell_\infty$} & 100.0 & 1.0      & \multicolumn{1}{c|}{90.1} & 0.1  & 136  & 1.91 \\
HSJ                     & Label                & \multicolumn{1}{c|}{$\ell_\infty$} & 100.0 & 7.3      & \multicolumn{1}{c|}{92.0} & 0.0  & 226  & 1.96 \\
Opt                     & Label                & \multicolumn{1}{c|}{$\ell_\infty$} & 99.0  & 8.6      & \multicolumn{1}{c|}{73.5} & 32.1 & 1158 & 0.84 \\
RayS                    & Label                & \multicolumn{1}{c|}{$\ell_\infty$} & 100.0 & 6.5      & \multicolumn{1}{c|}{75.2} & 0.0  & 175  & 4.44 \\
SignFlip                & Label                & \multicolumn{1}{c|}{$\ell_\infty$} & 100.0 & 8.8      & \multicolumn{1}{c|}{42.2} & 0.0  & 27   & 2.52 \\
SignOPT                 & Label                & \multicolumn{1}{c|}{$\ell_\infty$} & 98.8  & 8.5      & \multicolumn{1}{c|}{92.7} & 14.0 & 616  & 0.79 \\
Bandit                  & Score                & \multicolumn{1}{c|}{$\ell_2$}      & 100.0 & 1.0      & \multicolumn{1}{c|}{98.0} & 0.0  & 69   & 1.62 \\
NES                     & Score                & \multicolumn{1}{c|}{$\ell_2$}      & 100.0 & 8.7      & \multicolumn{1}{c|}{31.7} & 0.0  & 420  & 0.33 \\
Simple                  & Score                & \multicolumn{1}{c|}{$\ell_2$}      & 100.0 & 1.0      & \multicolumn{1}{c|}{99.5} & 0.0  & 409  & 0.48 \\
Square                  & Score                & \multicolumn{1}{c|}{$\ell_2$}      & 100.0 & 2.0      & \multicolumn{1}{c|}{65.8} & 0.0  & 8    & 3.92 \\
ZOSignSGD               & Score                & \multicolumn{1}{c|}{$\ell_2$}      & 100.0 & 2.0      & \multicolumn{1}{c|}{53.5} & 0.0  & 1029 & 0.33 \\
Boundary                & Label                & \multicolumn{1}{c|}{$\ell_2$}      & 100.0 & 7.3      & \multicolumn{1}{c|}{51.1} & 5.7  & 53   & 2.26 \\
GeoDA                   & Label                & \multicolumn{1}{c|}{$\ell_2$}      & 100.0 & 1.0      & \multicolumn{1}{c|}{90.6} & 0.1  & 120  & 1.84 \\
HSJ                     & Label                & \multicolumn{1}{c|}{$\ell_2$}      & 100.0 & 7.3      & \multicolumn{1}{c|}{91.7} & 0.0  & 146  & 1.53 \\
Opt                     & Label                & \multicolumn{1}{c|}{$\ell_2$}      & 99.0  & 8.6      & \multicolumn{1}{c|}{62.1} & 21.7 & 537  & 1.46 \\
SignOPT                 & Label                & \multicolumn{1}{c|}{$\ell_2$}      & 98.9  & 8.6      & \multicolumn{1}{c|}{89.5} & 16.5 & 432  & 1.07 \\
PointWise               & Label                & \multicolumn{1}{c|}{Opt.}          & 100.0 & 1.0      & \multicolumn{1}{c|}{99.4} & 0.0  & 766  & 1.80 \\
SparseEvo               & Label                & \multicolumn{1}{c|}{Opt.}          & 93.8  & 1.0      & \multicolumn{1}{c|}{100}  & 0.0  & 9376 & 2.53 \\ \hline
\textbf{CA (sssp)}       & Label                & \multicolumn{1}{c|}{Opt.}          & 0.0   & $\infty$ & \multicolumn{1}{c|}{0.0}  & 2.1  & 174  & 2.31 \\
\textbf{CA (bin search)} & Label                & \multicolumn{1}{c|}{Opt.}          & 0.0   & $\infty$ & \multicolumn{1}{c|}{0.0}  & 0.0  & 459  & 2.61 \\ \hline
                        & \multicolumn{1}{l}{} & \multicolumn{1}{l}{}               &       &          &                           &      &      &     
\end{tabular}%
}
\end{table}
\begin{table}[H]
\centering
\caption{Attack performance under Blacklight detection on WRN and CIFAR100 (Clean Accuracy: $79.4\%$)}
\label{tab:blacklight_cifar100_wrn}
\resizebox{0.47\textwidth}{!}{%
\begin{tabular}{lcccccccc}
\hline
Attack &
  \begin{tabular}[c]{@{}c@{}}Query\\ Type\end{tabular} &
  \begin{tabular}[c]{@{}c@{}}Pert.\\ Type\end{tabular} &
  \begin{tabular}[c]{@{}c@{}}Det.\\ Rate \%\end{tabular} &
  \begin{tabular}[c]{@{}c@{}}\# Q\\ to Det.\end{tabular} &
  \begin{tabular}[c]{@{}c@{}}Det.\\ Cov. \%\end{tabular} &
  \begin{tabular}[c]{@{}c@{}}Model\\ Acc.\end{tabular} &
  \# Q &
  \begin{tabular}[c]{@{}c@{}}Dist.\\ $\ell_2$\end{tabular} \\ \hline
Bandit                  & Score                & \multicolumn{1}{c|}{$\ell_\infty$} & 100.0 & 1.0      & \multicolumn{1}{c|}{62.1}  & 0.0  & 13   & 4.25 \\
NES                     & Score                & \multicolumn{1}{c|}{$\ell_\infty$} & 100.0 & 8.8      & \multicolumn{1}{c|}{21.7}  & 0.0  & 151  & 0.87 \\
Parsimonious            & Score                & \multicolumn{1}{c|}{$\ell_\infty$} & 100.0 & 2.0      & \multicolumn{1}{c|}{95.2}  & 0.0  & 75   & 4.29 \\
Sign                    & Score                & \multicolumn{1}{c|}{$\ell_\infty$} & 100.0 & 2.0      & \multicolumn{1}{c|}{87.2}  & 0.0  & 54   & 4.26 \\
Square                  & Score                & \multicolumn{1}{c|}{$\ell_\infty$} & 100.0 & 2.0      & \multicolumn{1}{c|}{65.1}  & 0.0  & 10   & 4.26 \\
ZOSignSGD               & Score                & \multicolumn{1}{c|}{$\ell_\infty$} & 100.0 & 2.0      & \multicolumn{1}{c|}{50.6}  & 0.0  & 159  & 0.87 \\
GeoDA                   & Label                & \multicolumn{1}{c|}{$\ell_\infty$} & 100.0 & 1.0      & \multicolumn{1}{c|}{90.0}  & 0.0  & 139  & 1.93 \\
HSJ                     & Label                & \multicolumn{1}{c|}{$\ell_\infty$} & 100.0 & 7.3      & \multicolumn{1}{c|}{91.9}  & 0.0  & 185  & 1.89 \\
Opt                     & Label                & \multicolumn{1}{c|}{$\ell_\infty$} & 100.0 & 8.6      & \multicolumn{1}{c|}{73.8}  & 24.1 & 1269 & 0.93 \\
RayS                    & Label                & \multicolumn{1}{c|}{$\ell_\infty$} & 100.0 & 6.0      & \multicolumn{1}{c|}{74.1}  & 0.0  & 165  & 4.20 \\
SignFlip                & Label                & \multicolumn{1}{c|}{$\ell_\infty$} & 100.0 & 8.7      & \multicolumn{1}{c|}{43.1}  & 0.0  & 29   & 2.70 \\
SignOPT                 & Label                & \multicolumn{1}{c|}{$\ell_\infty$} & 100.0 & 8.6      & \multicolumn{1}{c|}{92.9}  & 19.0 & 624  & 0.75 \\
Bandit                  & Score                & \multicolumn{1}{c|}{$\ell_2$}      & 100.0 & 1.0      & \multicolumn{1}{c|}{97.7}  & 0.0  & 70   & 1.54 \\
NES                     & Score                & \multicolumn{1}{c|}{$\ell_2$}      & 100.0 & 8.9      & \multicolumn{1}{c|}{31.8}  & 0.0  & 442  & 0.32 \\
Simple                  & Score                & \multicolumn{1}{c|}{$\ell_2$}      & 100.0 & 1.0      & \multicolumn{1}{c|}{99.4}  & 0.0  & 437  & 0.48 \\
Square                  & Score                & \multicolumn{1}{c|}{$\ell_2$}      & 100.0 & 2.0      & \multicolumn{1}{c|}{65.2}  & 0.0  & 7    & 3.88 \\
ZOSignSGD               & Score                & \multicolumn{1}{c|}{$\ell_2$}      & 100.0 & 2.0      & \multicolumn{1}{c|}{53.6}  & 0.5  & 964  & 0.31 \\
Boundary                & Label                & \multicolumn{1}{c|}{$\ell_2$}      & 100.0 & 7.2      & \multicolumn{1}{c|}{52.0}  & 4.8  & 81   & 2.46 \\
GeoDA                   & Label                & \multicolumn{1}{c|}{$\ell_2$}      & 100.0 & 1.0      & \multicolumn{1}{c|}{90.4}  & 0.0  & 120  & 1.86 \\
HSJ                     & Label                & \multicolumn{1}{c|}{$\ell_2$}      & 100.0 & 7.3      & \multicolumn{1}{c|}{91.5}  & 0.0  & 143  & 1.55 \\
Opt                     & Label                & \multicolumn{1}{c|}{$\ell_2$}      & 100.0 & 8.6      & \multicolumn{1}{c|}{65.2}  & 22.2 & 586  & 1.35 \\
SignOPT                 & Label                & \multicolumn{1}{c|}{$\ell_2$}      & 99.9  & 8.5      & \multicolumn{1}{c|}{89.0}  & 16.7 & 456  & 1.04 \\
PointWise               & Label                & \multicolumn{1}{c|}{Opt.}          & 100.0 & 1.0      & \multicolumn{1}{c|}{99.1}  & 0.0  & 469  & 1.57 \\
SparseEvo               & Label                & \multicolumn{1}{c|}{Opt.}          & 91.4  & 1.0      & \multicolumn{1}{c|}{100.0} & 0.0  & 9145 & 2.16 \\ \hline
\textbf{CA (sssp)}       & Label                & \multicolumn{1}{c|}{Opt.}          & 0.0   & $\infty$ & \multicolumn{1}{c|}{0.0}   & 1.6  & 218  & 2.56 \\
\textbf{CA (bin search)} & Label                & \multicolumn{1}{c|}{Opt.}          & 0.0   & $\infty$ & \multicolumn{1}{c|}{0.0}   & 0.0  & 458  & 2.65 \\ \hline
                        & \multicolumn{1}{l}{} & \multicolumn{1}{l}{}               &       &          &                            &      &      &     
\end{tabular}%
}
\end{table}
\clearpage
\begin{table}[H]
\centering
\caption{Attack performance under RAND Pre-processing Defense on VGG and CIFAR10 (Clean Accuracy: $87.7\%$)}
\label{tab:cifar10_RAND_vgg}
\resizebox{0.47\textwidth}{!}{%
\begin{tabular}{lccccc}
\hline
Attack &
  \begin{tabular}[c]{@{}c@{}}Query\\ Type\end{tabular} &
  \begin{tabular}[c]{@{}c@{}}Perturbation\\ Type\end{tabular} &
  \# Query &
  Model Acc. &
  Dist. $\ell_2$ \\ \hline
\multicolumn{1}{l|}{Bandit}            & Score & \multicolumn{1}{c|}{$\ell_\infty$} & 116  & 36.1 & 4.68 \\
\multicolumn{1}{l|}{NES}               & Score & \multicolumn{1}{c|}{$\ell_\infty$} & 612  & 50.7 & 1.69 \\
\multicolumn{1}{l|}{Parsimonious}      & Score & \multicolumn{1}{c|}{$\ell_\infty$} & 565  & 63.7 & 4.79 \\
\multicolumn{1}{l|}{Sign}              & Score & \multicolumn{1}{c|}{$\ell_\infty$} & 257  & 55.3 & 4.75 \\
\multicolumn{1}{l|}{Square}            & Score & \multicolumn{1}{c|}{$\ell_\infty$} & 91   & 32.9 & 4.80 \\
\multicolumn{1}{l|}{ZOSignSGD}         & Score & \multicolumn{1}{c|}{$\ell_\infty$} & 644  & 49.9 & 1.72 \\
\multicolumn{1}{l|}{GeoDA}             & Label & \multicolumn{1}{c|}{$\ell_\infty$} & 324  & 58.1 & 2.58 \\
\multicolumn{1}{l|}{HSJ}               & Label & \multicolumn{1}{c|}{$\ell_\infty$} & 591  & 59.7 & 2.26 \\
\multicolumn{1}{l|}{Opt}               & Label & \multicolumn{1}{c|}{$\ell_\infty$} & 957  & 87.3 & 0.25 \\
\multicolumn{1}{l|}{RayS}              & Label & \multicolumn{1}{c|}{$\ell_\infty$} & 251  & 60.1 & 4.60 \\
\multicolumn{1}{l|}{SignFlip}          & Label & \multicolumn{1}{c|}{$\ell_\infty$} & 365  & 51.1 & 3.31 \\
\multicolumn{1}{l|}{SignOPT}           & Label & \multicolumn{1}{c|}{$\ell_\infty$} & 1411 & 82.0 & 0.30 \\
\multicolumn{1}{l|}{Bandit}            & Score & \multicolumn{1}{c|}{$\ell_2$}      & 773  & 76.5 & 3.36 \\
\multicolumn{1}{l|}{NES}               & Score & \multicolumn{1}{c|}{$\ell_2$}      & 1875 & 72.2 & 0.75 \\
\multicolumn{1}{l|}{Simple}            & Score & \multicolumn{1}{c|}{$\ell_2$}      & 1693 & 89.3 & 0.14 \\
\multicolumn{1}{l|}{Square}            & Score & \multicolumn{1}{c|}{$\ell_2$}      & 100  & 33.9 & 4.33 \\
\multicolumn{1}{l|}{ZOSignSGD}         & Score & \multicolumn{1}{c|}{$\ell_2$}      & 2840 & 78.3 & 0.63 \\
\multicolumn{1}{l|}{Boundary}          & Label & \multicolumn{1}{c|}{$\ell_2$}      & 43   & 54.4 & 2.46 \\
\multicolumn{1}{l|}{GeoDA}             & Label & \multicolumn{1}{c|}{$\ell_2$}      & 299  & 62.0 & 2.35 \\
\multicolumn{1}{l|}{HSJ}               & Label & \multicolumn{1}{c|}{$\ell_2$}      & 735  & 56.5 & 3.20 \\
\multicolumn{1}{l|}{Opt}               & Label & \multicolumn{1}{c|}{$\ell_2$}      & 807  & 69.4 & 1.92 \\
\multicolumn{1}{l|}{SignOPT}           & Label & \multicolumn{1}{c|}{$\ell_2$}      & 586  & 50.9 & 2.76 \\
\multicolumn{1}{l|}{PointWise}         & Label & \multicolumn{1}{c|}{Optimized}  & 2813 & 83.0 & 2.02 \\
\multicolumn{1}{l|}{SparseEvo}         & Label & \multicolumn{1}{c|}{Optimized}  & 9492 & 80.8 & 1.62 \\ \hline
\multicolumn{1}{l|}{\textbf{CA (sssp)}} & Label & \multicolumn{1}{c|}{Optimized}  & 359  & 5.7  & 3.53 \\
\multicolumn{1}{l|}{\textbf{CA (bin search)}} &
  Label &
  \multicolumn{1}{c|}{Optimized} &
  473 &
  0.0 &
  4.94 \\ \hline
\end{tabular}%
}
\end{table}
\begin{table}[H]
\centering
\caption{Attack performance under RAND Pre-processing Defense on ResNet and CIFAR10 (Clean Accuracy $92.3\%$)}
\label{tab:cifar10_RAND_resnet}
\resizebox{0.47\textwidth}{!}{%
\begin{tabular}{lccccc}
\hline
Attack &
  \begin{tabular}[c]{@{}c@{}}Query\\ Type\end{tabular} &
  \begin{tabular}[c]{@{}c@{}}Perturbation\\ Type\end{tabular} &
  \# Query &
  Model Acc. &
  Dist. $\ell_2$ \\ \hline
\multicolumn{1}{l|}{Bandit}            & Score & \multicolumn{1}{c|}{$\ell_\infty$} & 52   & 27.2 & 4.88 \\
\multicolumn{1}{l|}{NES}               & Score & \multicolumn{1}{c|}{$\ell_\infty$} & 605  & 49.9 & 1.84 \\
\multicolumn{1}{l|}{Parsimonious}      & Score & \multicolumn{1}{c|}{$\ell_\infty$} & 254  & 50.1 & 4.97 \\
\multicolumn{1}{l|}{Sign}              & Score & \multicolumn{1}{c|}{$\ell_\infty$} & 394  & 58.3 & 4.85 \\
\multicolumn{1}{l|}{Square}            & Score & \multicolumn{1}{c|}{$\ell_\infty$} & 32   & 21.4 & 4.95 \\
\multicolumn{1}{l|}{ZOSignSGD}         & Score & \multicolumn{1}{c|}{$\ell_\infty$} & 674  & 51.4 & 1.86 \\
\multicolumn{1}{l|}{GeoDA}             & Label & \multicolumn{1}{c|}{$\ell_\infty$} & 265  & 62.8 & 2.53 \\
\multicolumn{1}{l|}{HSJ}               & Label & \multicolumn{1}{c|}{$\ell_\infty$} & 641  & 64.6 & 2.12 \\
\multicolumn{1}{l|}{Opt}               & Label & \multicolumn{1}{c|}{$\ell_\infty$} & 655  & 88.2 & 0.18 \\
\multicolumn{1}{l|}{RayS}              & Label & \multicolumn{1}{c|}{$\ell_\infty$} & 306  & 62.9 & 4.75 \\
\multicolumn{1}{l|}{SignFlip}          & Label & \multicolumn{1}{c|}{$\ell_\infty$} & 902  & 56.5 & 3.33 \\
\multicolumn{1}{l|}{SignOPT}           & Label & \multicolumn{1}{c|}{$\ell_\infty$} & 1086 & 85.6 & 0.24 \\
\multicolumn{1}{l|}{Bandit}            & Score & \multicolumn{1}{c|}{$\ell_2$}      & 747  & 76.0 & 3.45 \\
\multicolumn{1}{l|}{NES}               & Score & \multicolumn{1}{c|}{$\ell_2$}      & 1808 & 71.2 & 0.77 \\
\multicolumn{1}{l|}{Simple}            & Score & \multicolumn{1}{c|}{$\ell_2$}      & 2139 & 90.4 & 0.15 \\
\multicolumn{1}{l|}{Square}            & Score & \multicolumn{1}{c|}{$\ell_2$}      & 44   & 24.2 & 4.49 \\
\multicolumn{1}{l|}{ZOSignSGD}         & Score & \multicolumn{1}{c|}{$\ell_2$}      & 2822 & 77.6 & 0.64 \\
\multicolumn{1}{l|}{Boundary}          & Label & \multicolumn{1}{c|}{$\ell_2$}      & 46   & 60.1 & 2.25 \\
\multicolumn{1}{l|}{GeoDA}             & Label & \multicolumn{1}{c|}{$\ell_2$}      & 333  & 66.7 & 2.14 \\
\multicolumn{1}{l|}{HSJ}               & Label & \multicolumn{1}{c|}{$\ell_2$}      & 1163 & 56.7 & 3.45 \\
\multicolumn{1}{l|}{Opt}               & Label & \multicolumn{1}{c|}{$\ell_2$}      & 724  & 72.8 & 1.93 \\
\multicolumn{1}{l|}{SignOPT}           & Label & \multicolumn{1}{c|}{$\ell_2$}      & 442  & 59.1 & 2.47 \\
\multicolumn{1}{l|}{PointWise}         & Label & \multicolumn{1}{c|}{Optimized}  & 3804 & 86.5 & 2.02 \\
\multicolumn{1}{l|}{SparseEvo}         & Label & \multicolumn{1}{c|}{Optimized}  & 8709 & 87.3 & 1.78 \\ \hline
\multicolumn{1}{l|}{\textbf{CA (sssp)}} & Label & \multicolumn{1}{c|}{Optimized}  & 406  & 8.1  & 3.80 \\
\multicolumn{1}{l|}{\textbf{CA (bin search)}} &
  Label &
  \multicolumn{1}{c|}{Optimized} &
  417 &
  10.8 &
  3.89 \\ \hline
\end{tabular}%
}
\end{table}
\begin{table}[H]
\centering
\caption{Attack performance under RAND Pre-processing Defense on ResNeXt and CIFAR10 (Clean Accuracy: $89.6\%$)}
\label{tab:cifar10_RAND_resnext}
\resizebox{0.47\textwidth}{!}{%
\begin{tabular}{lccccc}
\hline
Attack &
  \begin{tabular}[c]{@{}c@{}}Query\\ Type\end{tabular} &
  \begin{tabular}[c]{@{}c@{}}Perturbation\\ Type\end{tabular} &
  \# Query &
  Model Acc. &
  Dist. $\ell_2$ \\ \hline
\multicolumn{1}{l|}{Bandit}            & Score & \multicolumn{1}{c|}{$\ell_\infty$} & 54   & 25.9 & 4.81 \\
\multicolumn{1}{l|}{NES}               & Score & \multicolumn{1}{c|}{$\ell_\infty$} & 446  & 48.2 & 1.61 \\
\multicolumn{1}{l|}{Parsimonious}      & Score & \multicolumn{1}{c|}{$\ell_\infty$} & 537  & 68.1 & 4.88 \\
\multicolumn{1}{l|}{Sign}              & Score & \multicolumn{1}{c|}{$\ell_\infty$} & 454  & 63.8 & 4.75 \\
\multicolumn{1}{l|}{Square}            & Score & \multicolumn{1}{c|}{$\ell_\infty$} & 90   & 24.2 & 4.91 \\
\multicolumn{1}{l|}{ZOSignSGD}         & Score & \multicolumn{1}{c|}{$\ell_\infty$} & 487  & 49.4 & 1.64 \\
\multicolumn{1}{l|}{GeoDA}             & Label & \multicolumn{1}{c|}{$\ell_\infty$} & 197  & 58.9 & 2.43 \\
\multicolumn{1}{l|}{HSJ}               & Label & \multicolumn{1}{c|}{$\ell_\infty$} & 363  & 59.3 & 2.30 \\
\multicolumn{1}{l|}{Opt}               & Label & \multicolumn{1}{c|}{$\ell_\infty$} & 771  & 89.8 & 0.30 \\
\multicolumn{1}{l|}{RayS}              & Label & \multicolumn{1}{c|}{$\ell_\infty$} & 305  & 65.3 & 4.60 \\
\multicolumn{1}{l|}{SignFlip}          & Label & \multicolumn{1}{c|}{$\ell_\infty$} & 334  & 54.0 & 2.91 \\
\multicolumn{1}{l|}{SignOPT}           & Label & \multicolumn{1}{c|}{$\ell_\infty$} & 895  & 84.9 & 0.33 \\
\multicolumn{1}{l|}{Bandit}            & Score & \multicolumn{1}{c|}{$\ell_2$}      & 407  & 75.1 & 3.03 \\
\multicolumn{1}{l|}{NES}               & Score & \multicolumn{1}{c|}{$\ell_2$}      & 1611 & 74.6 & 0.70 \\
\multicolumn{1}{l|}{Simple}            & Score & \multicolumn{1}{c|}{$\ell_2$}      & 1692 & 90.2 & 0.14 \\
\multicolumn{1}{l|}{Square}            & Score & \multicolumn{1}{c|}{$\ell_2$}      & 123  & 34.0 & 4.41 \\
\multicolumn{1}{l|}{ZOSignSGD}         & Score & \multicolumn{1}{c|}{$\ell_2$}      & 2582 & 80.9 & 0.59 \\
\multicolumn{1}{l|}{Boundary}          & Label & \multicolumn{1}{c|}{$\ell_2$}      & 18   & 52.9 & 2.50 \\
\multicolumn{1}{l|}{GeoDA}             & Label & \multicolumn{1}{c|}{$\ell_2$}      & 146  & 58.9 & 2.32 \\
\multicolumn{1}{l|}{HSJ}               & Label & \multicolumn{1}{c|}{$\ell_2$}      & 593  & 57.7 & 2.86 \\
\multicolumn{1}{l|}{Opt}               & Label & \multicolumn{1}{c|}{$\ell_2$}      & 694  & 76.1 & 1.58 \\
\multicolumn{1}{l|}{SignOPT}           & Label & \multicolumn{1}{c|}{$\ell_2$}      & 337  & 67.1 & 1.91 \\
\multicolumn{1}{l|}{PointWise}         & Label & \multicolumn{1}{c|}{Optimized}  & 4516 & 84.5 & 2.18 \\
\multicolumn{1}{l|}{SparseEvo}         & Label & \multicolumn{1}{c|}{Optimized}  & 9632 & 87.4 & 1.85 \\ \hline
\multicolumn{1}{l|}{\textbf{CA (sssp)}} & Label & \multicolumn{1}{c|}{Optimized}  & 259  & 9.8  & 2.77 \\
\multicolumn{1}{l|}{\textbf{CA (bin search)}} &
  Label &
  \multicolumn{1}{c|}{Optimized} &
  416 &
  10.6 &
  2.75 \\ \hline
\end{tabular}%
}
\end{table}
\begin{table}[H]
\centering
\caption{Attack performance under RAND Pre-processing Defense on WRN and CIFAR10 (Clean Accuracy: $92.6\%$)}
\label{tab:cifar10_RAND_wrn}
\resizebox{0.47\textwidth}{!}{%
\begin{tabular}{lccccc}
\hline
Attack &
  \begin{tabular}[c]{@{}c@{}}Query\\ Type\end{tabular} &
  \begin{tabular}[c]{@{}c@{}}Perturbation\\ Type\end{tabular} &
  \# Query &
  Model Acc. &
  Dist. $\ell_2$ \\ \hline
\multicolumn{1}{l|}{Bandit}            & Score & \multicolumn{1}{c|}{$\ell_\infty$} & 55   & 28.0 & 4.92 \\
\multicolumn{1}{l|}{NES}               & Score & \multicolumn{1}{c|}{$\ell_\infty$} & 600  & 51.6 & 1.78 \\
\multicolumn{1}{l|}{Parsimonious}      & Score & \multicolumn{1}{c|}{$\ell_\infty$} & 695  & 71.4 & 5.04 \\
\multicolumn{1}{l|}{Sign}              & Score & \multicolumn{1}{c|}{$\ell_\infty$} & 627  & 70.6 & 4.82 \\
\multicolumn{1}{l|}{Square}            & Score & \multicolumn{1}{c|}{$\ell_\infty$} & 142  & 28.6 & 5.00 \\
\multicolumn{1}{l|}{ZOSignSGD}         & Score & \multicolumn{1}{c|}{$\ell_\infty$} & 642  & 52.3 & 1.80 \\
\multicolumn{1}{l|}{GeoDA}             & Label & \multicolumn{1}{c|}{$\ell_\infty$} & 202  & 60.1 & 2.54 \\
\multicolumn{1}{l|}{HSJ}               & Label & \multicolumn{1}{c|}{$\ell_\infty$} & 468  & 61.4 & 2.30 \\
\multicolumn{1}{l|}{Opt}               & Label & \multicolumn{1}{c|}{$\ell_\infty$} & 1009 & 90.5 & 0.24 \\
\multicolumn{1}{l|}{RayS}              & Label & \multicolumn{1}{c|}{$\ell_\infty$} & 454  & 69.2 & 4.62 \\
\multicolumn{1}{l|}{SignFlip}          & Label & \multicolumn{1}{c|}{$\ell_\infty$} & 280  & 52.4 & 3.04 \\
\multicolumn{1}{l|}{SignOPT}           & Label & \multicolumn{1}{c|}{$\ell_\infty$} & 1145 & 86.4 & 0.28 \\
\multicolumn{1}{l|}{Bandit}            & Score & \multicolumn{1}{c|}{$\ell_2$}      & 616  & 77.1 & 3.22 \\
\multicolumn{1}{l|}{NES}               & Score & \multicolumn{1}{c|}{$\ell_2$}      & 2074 & 77.0 & 0.82 \\
\multicolumn{1}{l|}{Simple}            & Score & \multicolumn{1}{c|}{$\ell_2$}      & 1639 & 92.0 & 0.15 \\
\multicolumn{1}{l|}{Square}            & Score & \multicolumn{1}{c|}{$\ell_2$}      & 62   & 30.9 & 4.57 \\
\multicolumn{1}{l|}{ZOSignSGD}         & Score & \multicolumn{1}{c|}{$\ell_2$}      & 2982 & 82.0 & 0.68 \\
\multicolumn{1}{l|}{Boundary}          & Label & \multicolumn{1}{c|}{$\ell_2$}      & 36   & 56.1 & 2.53 \\
\multicolumn{1}{l|}{GeoDA}             & Label & \multicolumn{1}{c|}{$\ell_2$}      & 191  & 60.9 & 2.35 \\
\multicolumn{1}{l|}{HSJ}               & Label & \multicolumn{1}{c|}{$\ell_2$}      & 627  & 57.2 & 2.97 \\
\multicolumn{1}{l|}{Opt}               & Label & \multicolumn{1}{c|}{$\ell_2$}      & 759  & 73.5 & 1.69 \\
\multicolumn{1}{l|}{SignOPT}           & Label & \multicolumn{1}{c|}{$\ell_2$}      & 527  & 60.5 & 2.34 \\
\multicolumn{1}{l|}{PointWise}         & Label & \multicolumn{1}{c|}{Optimized}  & 3721 & 87.2 & 1.90 \\
\multicolumn{1}{l|}{SparseEvo}         & Label & \multicolumn{1}{c|}{Optimized}  & 9730 & 89.7 & 1.7  \\ \hline
\multicolumn{1}{l|}{\textbf{CA (sssp)}} & Label & \multicolumn{1}{c|}{Optimized}  & 401  & 6.1  & 3.63 \\
\multicolumn{1}{l|}{\textbf{CA (bin search)}} &
  Label &
  \multicolumn{1}{c|}{Optimized} &
  461 &
  0.0 &
  4.80 \\ \hline
\end{tabular}%
}
\end{table}
\clearpage
\begin{table}[H]
\centering
\caption{Attack performance under RAND Pre-processing Defense on VGG and CIFAR100 (Clean Accuracy: $61.4\%$)}
\label{tab:cifar100_RAND_vgg}
\resizebox{0.47\textwidth}{!}{%
\begin{tabular}{lccccc}
\hline
Attack &
  \begin{tabular}[c]{@{}c@{}}Query\\ Type\end{tabular} &
  \begin{tabular}[c]{@{}c@{}}Perturbation\\ Type\end{tabular} &
  \# Query &
  Model Acc. &
  Dist.  $\ell_2$ \\ \hline
\multicolumn{1}{l|}{Bandit}            & Score & \multicolumn{1}{c|}{$\ell_\infty$} & 9    & 8.8  & 3.28 \\
\multicolumn{1}{l|}{NES}               & Score & \multicolumn{1}{c|}{$\ell_\infty$} & 326  & 34.4 & 0.91 \\
\multicolumn{1}{l|}{Parsimonious}      & Score & \multicolumn{1}{c|}{$\ell_\infty$} & 218  & 37.1 & 3.26 \\
\multicolumn{1}{l|}{Sign}              & Score & \multicolumn{1}{c|}{$\ell_\infty$} & 138  & 32.5 & 3.26 \\
\multicolumn{1}{l|}{Square}            & Score & \multicolumn{1}{c|}{$\ell_\infty$} & 25   & 9.0  & 3.31 \\
\multicolumn{1}{l|}{ZOSignSGD}         & Score & \multicolumn{1}{c|}{$\ell_\infty$} & 376  & 32.9 & 0.93 \\
\multicolumn{1}{l|}{GeoDA}             & Label & \multicolumn{1}{c|}{$\ell_\infty$} & 139  & 40.5 & 2.36 \\
\multicolumn{1}{l|}{HSJ}               & Label & \multicolumn{1}{c|}{$\ell_\infty$} & 301  & 43.7 & 2.12 \\
\multicolumn{1}{l|}{Opt}               & Label & \multicolumn{1}{c|}{$\ell_\infty$} & 908  & 66.1 & 0.38 \\
\multicolumn{1}{l|}{RayS}              & Label & \multicolumn{1}{c|}{$\ell_\infty$} & 127  & 46.4 & 4.03 \\
\multicolumn{1}{l|}{SignFlip}          & Label & \multicolumn{1}{c|}{$\ell_\infty$} & 194  & 43.9 & 2.64 \\
\multicolumn{1}{l|}{SignOPT}           & Label & \multicolumn{1}{c|}{$\ell_\infty$} & 997  & 60.3 & 0.46 \\
\multicolumn{1}{l|}{Bandit}            & Score & \multicolumn{1}{c|}{$\ell_2$}      & 187  & 43.9 & 1.43 \\
\multicolumn{1}{l|}{NES}               & Score & \multicolumn{1}{c|}{$\ell_2$}      & 1215 & 48.5 & 0.39 \\
\multicolumn{1}{l|}{Simple}            & Score & \multicolumn{1}{c|}{$\ell_2$}      & 1335 & 60.4 & 0.09 \\
\multicolumn{1}{l|}{Square}            & Score & \multicolumn{1}{c|}{$\ell_2$}      & 37   & 10.0 & 2.99 \\
\multicolumn{1}{l|}{ZOSignSGD}         & Score & \multicolumn{1}{c|}{$\ell_2$}      & 1804 & 54.8 & 0.33 \\
\multicolumn{1}{l|}{Boundary}          & Label & \multicolumn{1}{c|}{$\ell_2$}      & 26   & 41.0 & 2.48 \\
\multicolumn{1}{l|}{GeoDA}             & Label & \multicolumn{1}{c|}{$\ell_2$}      & 150  & 43.5 & 2.20 \\
\multicolumn{1}{l|}{HSJ}               & Label & \multicolumn{1}{c|}{$\ell_2$}      & 278  & 46.3 & 2.41 \\
\multicolumn{1}{l|}{Opt}               & Label & \multicolumn{1}{c|}{$\ell_2$}      & 765  & 56.5 & 1.44 \\
\multicolumn{1}{l|}{SignOPT}           & Label & \multicolumn{1}{c|}{$\ell_2$}      & 390  & 51.1 & 1.93 \\
\multicolumn{1}{l|}{PointWise}         & Label & \multicolumn{1}{c|}{Optimized}  & 956  & 51.3 & 1.07 \\
\multicolumn{1}{l|}{SparseEvo}         & Label & \multicolumn{1}{c|}{Optimized}  & 8793 & 53.2 & 0.95 \\ \hline
\multicolumn{1}{l|}{\textbf{CA (sssp)}} & Label & \multicolumn{1}{c|}{Optimized}  & 168  & 0.9  & 2.25 \\
\multicolumn{1}{l|}{\textbf{CA (bin search)}} &
  Label &
  \multicolumn{1}{c|}{Optimized} &
  457 &
  0.0 &
  2.52 \\ \hline
\end{tabular}%
}
\end{table}
\begin{table}[H]
\centering
\caption{Attack performance under RAND Pre-processing Defense on ResNet and CIFAR100 (Clean Accuracy: $62.4\%$)}
\label{tab:cifar100_RAND_resnet}
\resizebox{0.47\textwidth}{!}{%
\begin{tabular}{lccccc}
\hline
Attack &
  \begin{tabular}[c]{@{}c@{}}Query\\ Type\end{tabular} &
  \begin{tabular}[c]{@{}c@{}}Perturbation\\ Type\end{tabular} &
  \# Query &
  Model Acc. &
  Dist. $\ell_2$ \\ \hline
\multicolumn{1}{l|}{Bandit}       & Score & \multicolumn{1}{c|}{$\ell_\infty$} & 13   & 10.2 & 3.36 \\
\multicolumn{1}{l|}{NES}          & Score & \multicolumn{1}{c|}{$\ell_\infty$} & 279  & 36.4 & 0.92 \\
\multicolumn{1}{l|}{Parsimonious} & Score & \multicolumn{1}{c|}{$\ell_\infty$} & 85   & 31.5 & 3.39 \\
\multicolumn{1}{l|}{Sign}         & Score & \multicolumn{1}{c|}{$\ell_\infty$} & 221  & 33.1 & 3.33 \\
\multicolumn{1}{l|}{Square}       & Score & \multicolumn{1}{c|}{$\ell_\infty$} & 12   & 9.7  & 3.48 \\
\multicolumn{1}{l|}{ZOSignSGD}    & Score & \multicolumn{1}{c|}{$\ell_\infty$} & 330  & 34.9 & 0.94 \\
\multicolumn{1}{l|}{GeoDA}        & Label & \multicolumn{1}{c|}{$\ell_\infty$} & 220  & 40.5 & 2.34 \\
\multicolumn{1}{l|}{HSJ}          & Label & \multicolumn{1}{c|}{$\ell_\infty$} & 472  & 43.6 & 2.17 \\
\multicolumn{1}{l|}{Opt}          & Label & \multicolumn{1}{c|}{$\ell_\infty$} & 907  & 62.7 & 0.32 \\
\multicolumn{1}{l|}{RayS}         & Label & \multicolumn{1}{c|}{$\ell_\infty$} & 119  & 44.0 & 3.97 \\
\multicolumn{1}{l|}{SignFlip}     & Label & \multicolumn{1}{c|}{$\ell_\infty$} & 383  & 40.0 & 2.93 \\
\multicolumn{1}{l|}{SignOPT}      & Label & \multicolumn{1}{c|}{$\ell_\infty$} & 912  & 57.9 & 0.43 \\
\multicolumn{1}{l|}{Bandit}       & Score & \multicolumn{1}{c|}{$\ell_2$}      & 291  & 47.8 & 1.47 \\
\multicolumn{1}{l|}{NES}          & Score & \multicolumn{1}{c|}{$\ell_2$}      & 964  & 51.0 & 0.36 \\
\multicolumn{1}{l|}{Simple}       & Score & \multicolumn{1}{c|}{$\ell_2$}      & 1152 & 62.8 & 0.09 \\
\multicolumn{1}{l|}{Square}       & Score & \multicolumn{1}{c|}{$\ell_2$}      & 6    & 10.0 & 3.05 \\
\multicolumn{1}{l|}{ZOSignSGD}    & Score & \multicolumn{1}{c|}{$\ell_2$}      & 1514 & 55.2 & 0.30 \\
\multicolumn{1}{l|}{Boundary}     & Label & \multicolumn{1}{c|}{$\ell_2$}      & 16   & 37.7 & 2.52 \\
\multicolumn{1}{l|}{GeoDA}        & Label & \multicolumn{1}{c|}{$\ell_2$}      & 256  & 44.3 & 2.14 \\
\multicolumn{1}{l|}{HSJ}          & Label & \multicolumn{1}{c|}{$\ell_2$}      & 381  & 43.8 & 2.56 \\
\multicolumn{1}{l|}{Opt}          & Label & \multicolumn{1}{c|}{$\ell_2$}      & 806  & 54.5 & 1.46 \\
\multicolumn{1}{l|}{SignOPT}      & Label & \multicolumn{1}{c|}{$\ell_2$}      & 452  & 46.0 & 1.96 \\
\multicolumn{1}{l|}{PointWise}    & Label & \multicolumn{1}{c|}{Optimized}  & 2051 & 52.4 & 1.21 \\
\multicolumn{1}{l|}{SparseEvo}    & Label & \multicolumn{1}{c|}{Optimized}  & 9043 & 56.3 & 1.12 \\ \hline
\multicolumn{1}{l|}{\textbf{CA (sssp)}} &
  Label &
  \multicolumn{1}{c|}{Optimized} &
  167 &
  1.6 &
  2.27 \\
\multicolumn{1}{l|}{\textbf{CA (bin search)}} &
  Label &
  \multicolumn{1}{c|}{Optimized} &
  459 &
  0.0 &
  2.81 \\ \hline
\end{tabular}%
}
\end{table}
\begin{table}[H]
\centering
\caption{Attack performance under RAND Pre-processing Defense on ResNeXt and CIFAR100 (Clean Accuracy: $65.0\%$)}
\label{tab:cifar100_RAND_resnext}
\resizebox{0.47\textwidth}{!}{%
\begin{tabular}{lccccc}
\hline
Attack &
  \begin{tabular}[c]{@{}c@{}}Query\\ Type\end{tabular} &
  \begin{tabular}[c]{@{}c@{}}Perturbation\\ Type\end{tabular} &
  \# Query &
  Model Acc. &
  Dist. $\ell_2$ \\ \hline
\multicolumn{1}{l|}{Bandit}            & Score & \multicolumn{1}{c|}{$\ell_\infty$} & 20   & 9.2  & 3.43 \\
\multicolumn{1}{l|}{NES}               & Score & \multicolumn{1}{c|}{$\ell_\infty$} & 311  & 35.3 & 0.95 \\
\multicolumn{1}{l|}{Parsimonious}      & Score & \multicolumn{1}{c|}{$\ell_\infty$} & 359  & 40.7 & 3.54 \\
\multicolumn{1}{l|}{Sign}              & Score & \multicolumn{1}{c|}{$\ell_\infty$} & 309  & 38.2 & 3.44 \\
\multicolumn{1}{l|}{Square}            & Score & \multicolumn{1}{c|}{$\ell_\infty$} & 14   & 8.1  & 3.45 \\
\multicolumn{1}{l|}{ZOSignSGD}         & Score & \multicolumn{1}{c|}{$\ell_\infty$} & 348  & 33.2 & 0.97 \\
\multicolumn{1}{l|}{GeoDA}             & Label & \multicolumn{1}{c|}{$\ell_\infty$} & 152  & 48.6 & 2.06 \\
\multicolumn{1}{l|}{HSJ}               & Label & \multicolumn{1}{c|}{$\ell_\infty$} & 184  & 51.3 & 1.94 \\
\multicolumn{1}{l|}{Opt}               & Label & \multicolumn{1}{c|}{$\ell_\infty$} & 951  & 73.9 & 0.44 \\
\multicolumn{1}{l|}{RayS}              & Label & \multicolumn{1}{c|}{$\ell_\infty$} & 203  & 56.7 & 4.23 \\
\multicolumn{1}{l|}{SignFlip}          & Label & \multicolumn{1}{c|}{$\ell_\infty$} & 132  & 49.3 & 2.32 \\
\multicolumn{1}{l|}{SignOPT}           & Label & \multicolumn{1}{c|}{$\ell_\infty$} & 798  & 71.8 & 0.45 \\
\multicolumn{1}{l|}{Bandit}            & Score & \multicolumn{1}{c|}{$\ell_2$}      & 188  & 45.7 & 1.49 \\
\multicolumn{1}{l|}{NES}               & Score & \multicolumn{1}{c|}{$\ell_2$}      & 1117 & 54.3 & 0.41 \\
\multicolumn{1}{l|}{Simple}            & Score & \multicolumn{1}{c|}{$\ell_2$}      & 1607 & 65.1 & 0.09 \\
\multicolumn{1}{l|}{Square}            & Score & \multicolumn{1}{c|}{$\ell_2$}      & 31   & 13.9 & 3.19 \\
\multicolumn{1}{l|}{ZOSignSGD}         & Score & \multicolumn{1}{c|}{$\ell_2$}      & 1964 & 58.3 & 0.35 \\
\multicolumn{1}{l|}{Boundary}          & Label & \multicolumn{1}{c|}{$\ell_2$}      & 17   & 49.7 & 2.21 \\
\multicolumn{1}{l|}{GeoDA}             & Label & \multicolumn{1}{c|}{$\ell_2$}      & 118  & 52.3 & 2.00 \\
\multicolumn{1}{l|}{HSJ}               & Label & \multicolumn{1}{c|}{$\ell_2$}      & 301  & 51.9 & 2.19 \\
\multicolumn{1}{l|}{Opt}               & Label & \multicolumn{1}{c|}{$\ell_2$}      & 741  & 65.1 & 1.35 \\
\multicolumn{1}{l|}{SignOPT}           & Label & \multicolumn{1}{c|}{$\ell_2$}      & 399  & 56.7 & 1.76 \\
\multicolumn{1}{l|}{PointWise}         & Label & \multicolumn{1}{c|}{Optimized}  & 1884 & 55.9 & 1.20 \\
\multicolumn{1}{l|}{SparseEvo}         & Label & \multicolumn{1}{c|}{Optimized}  & 9364 & 61.2 & 1.03 \\ \hline
\multicolumn{1}{l|}{\textbf{CA (sssp)}} & Label & \multicolumn{1}{c|}{Optimized}  & 159  & 1.4  & 2.21 \\
\multicolumn{1}{l|}{\textbf{CA (bin search)}} &
  Label &
  \multicolumn{1}{c|}{Optimized} &
  459 &
  0.0 &
  2.43 \\ \hline
\end{tabular}%
}
\end{table}
\begin{table}[H]
\centering
\caption{Attack performance under RAND Pre-processing Defense on WRN and CIFAR100 (Clean Accuracy: $65.5\%$)}
\label{tab:cifar100_RAND_wrn}
\resizebox{0.47\textwidth}{!}{%
\begin{tabular}{lccccc}
\hline
Attack &
  \begin{tabular}[c]{@{}c@{}}Query\\ Type\end{tabular} &
  \begin{tabular}[c]{@{}c@{}}Perturbation\\ Type\end{tabular} &
  \# Query &
  Model Acc. &
  Dist. $\ell_2$ \\ \hline
\multicolumn{1}{l|}{Bandit}            & Score & \multicolumn{1}{c|}{$\ell_\infty$} & 12   & 11.1 & 3.52 \\
\multicolumn{1}{l|}{NES}               & Score & \multicolumn{1}{c|}{$\ell_\infty$} & 334  & 36.9 & 0.98 \\
\multicolumn{1}{l|}{Parsimonious}      & Score & \multicolumn{1}{c|}{$\ell_\infty$} & 282  & 43.2 & 3.60 \\
\multicolumn{1}{l|}{Sign}              & Score & \multicolumn{1}{c|}{$\ell_\infty$} & 144  & 43.9 & 3.51 \\
\multicolumn{1}{l|}{Square}            & Score & \multicolumn{1}{c|}{$\ell_\infty$} & 21   & 11.9 & 3.57 \\
\multicolumn{1}{l|}{ZOSignSGD}         & Score & \multicolumn{1}{c|}{$\ell_\infty$} & 383  & 36.9 & 0.99 \\
\multicolumn{1}{l|}{GeoDA}             & Label & \multicolumn{1}{c|}{$\ell_\infty$} & 136  & 52.1 & 2.21 \\
\multicolumn{1}{l|}{HSJ}               & Label & \multicolumn{1}{c|}{$\ell_\infty$} & 312  & 52.1 & 2.09 \\
\multicolumn{1}{l|}{Opt}               & Label & \multicolumn{1}{c|}{$\ell_\infty$} & 866  & 74.4 & 0.37 \\
\multicolumn{1}{l|}{RayS}              & Label & \multicolumn{1}{c|}{$\ell_\infty$} & 212  & 56.1 & 4.07 \\
\multicolumn{1}{l|}{SignFlip}          & Label & \multicolumn{1}{c|}{$\ell_\infty$} & 146  & 50.0 & 2.47 \\
\multicolumn{1}{l|}{SignOPT}           & Label & \multicolumn{1}{c|}{$\ell_\infty$} & 929  & 69.6 & 0.39 \\
\multicolumn{1}{l|}{Bandit}            & Score & \multicolumn{1}{c|}{$\ell_2$}      & 175  & 50.2 & 1.53 \\
\multicolumn{1}{l|}{NES}               & Score & \multicolumn{1}{c|}{$\ell_2$}      & 1131 & 55.3 & 0.41 \\
\multicolumn{1}{l|}{Simple}            & Score & \multicolumn{1}{c|}{$\ell_2$}      & 1268 & 65.2 & 0.09 \\
\multicolumn{1}{l|}{Square}            & Score & \multicolumn{1}{c|}{$\ell_2$}      & 20   & 13.1 & 3.19 \\
\multicolumn{1}{l|}{ZOSignSGD}         & Score & \multicolumn{1}{c|}{$\ell_2$}      & 1702 & 57.2 & 0.34 \\
\multicolumn{1}{l|}{Boundary}          & Label & \multicolumn{1}{c|}{$\ell_2$}      & 19   & 48.1 & 2.49 \\
\multicolumn{1}{l|}{GeoDA}             & Label & \multicolumn{1}{c|}{$\ell_2$}      & 171  & 50.7 & 2.17 \\
\multicolumn{1}{l|}{HSJ}               & Label & \multicolumn{1}{c|}{$\ell_2$}      & 250  & 50.3 & 2.34 \\
\multicolumn{1}{l|}{Opt}               & Label & \multicolumn{1}{c|}{$\ell_2$}      & 753  & 65.7 & 1.39 \\
\multicolumn{1}{l|}{SignOPT}           & Label & \multicolumn{1}{c|}{$\ell_2$}      & 395  & 56.0 & 1.81 \\
\multicolumn{1}{l|}{PointWise}         & Label & \multicolumn{1}{c|}{Optimized}  & 1617 & 55.1 & 1.07 \\
\multicolumn{1}{l|}{SparseEvo}         & Label & \multicolumn{1}{c|}{Optimized}  & 9157 & 59.1 & 0.85 \\ \hline
\multicolumn{1}{l|}{\textbf{CA (sssp)}} & Label & \multicolumn{1}{c|}{Optimized}  & 180  & 1.2  & 2.37 \\
\multicolumn{1}{l|}{\textbf{CA (bin search)}} &
  Label &
  \multicolumn{1}{c|}{Optimized} &
  457 &
  0.0 &
  2.49 \\ \hline
\end{tabular}%
}
\end{table}
\clearpage
\begin{table}[H]
\centering
\caption{Attack performance under RAND Post-processing Defense on VGG and CIFAR10 (Clean Accuracy: $90.5\%$)}
\label{tab:cifar10_post_RAND_vgg}
\resizebox{0.47\textwidth}{!}{%
\begin{tabular}{lccccc}
\hline
Attack &
  \begin{tabular}[c]{@{}c@{}}Query\\ Type\end{tabular} &
  \begin{tabular}[c]{@{}c@{}}Perturbation\\ Type\end{tabular} &
  \# Query &
  Model Acc. &
  Dist. $\ell_2$ \\ \hline
\multicolumn{1}{l|}{Bandit}       & Score & \multicolumn{1}{c|}{$\ell_\infty$} & 192  & 10.9 & 4.83 \\
\multicolumn{1}{l|}{NES}          & Score & \multicolumn{1}{c|}{$\ell_\infty$} & 364  & 0.0  & 1.43 \\
\multicolumn{1}{l|}{Parsimonious} & Score & \multicolumn{1}{c|}{$\ell_\infty$} & 215  & 8.8  & 4.90 \\
\multicolumn{1}{l|}{Sign}         & Score & \multicolumn{1}{c|}{$\ell_\infty$} & 134  & 1.2  & 4.88 \\
\multicolumn{1}{l|}{Square}       & Score & \multicolumn{1}{c|}{$\ell_\infty$} & 37   & 0.4  & 4.89 \\
\multicolumn{1}{l|}{ZOSignSGD}    & Score & \multicolumn{1}{c|}{$\ell_\infty$} & 389  & 0.1  & 1.40 \\
\multicolumn{1}{l|}{GeoDA}        & Label & \multicolumn{1}{c|}{$\ell_\infty$} & 371  & 52.9 & 2.69 \\
\multicolumn{1}{l|}{HSJ}          & Label & \multicolumn{1}{c|}{$\ell_\infty$} & 518  & 53.7 & 2.43 \\
\multicolumn{1}{l|}{Opt}          & Label & \multicolumn{1}{c|}{$\ell_\infty$} & 970  & 78.0 & 0.31 \\
\multicolumn{1}{l|}{RayS}         & Label & \multicolumn{1}{c|}{$\ell_\infty$} & 235  & 45.0 & 4.77 \\
\multicolumn{1}{l|}{SignFlip}     & Label & \multicolumn{1}{c|}{$\ell_\infty$} & 76   & 17.2 & 3.91 \\
\multicolumn{1}{l|}{SignOPT}      & Label & \multicolumn{1}{c|}{$\ell_\infty$} & 1177 & 34.2 & 1.13 \\
\multicolumn{1}{l|}{Bandit}       & Score & \multicolumn{1}{c|}{$\ell_2$}      & 999  & 25.6 & 3.78 \\
\multicolumn{1}{l|}{NES}          & Score & \multicolumn{1}{c|}{$\ell_2$}      & 1137 & 0.1  & 0.59 \\
\multicolumn{1}{l|}{Simple}       & Score & \multicolumn{1}{c|}{$\ell_2$}      & 3138 & 67.1 & 0.21 \\
\multicolumn{1}{l|}{Square}       & Score & \multicolumn{1}{c|}{$\ell_2$}      & 41   & 0.2  & 4.47 \\
\multicolumn{1}{l|}{ZOSignSGD}    & Score & \multicolumn{1}{c|}{$\ell_2$}      & 1955 & 1.6  & 0.53 \\
\multicolumn{1}{l|}{Boundary}     & Label & \multicolumn{1}{c|}{$\ell_2$}      & 57   & 52.3 & 2.14 \\
\multicolumn{1}{l|}{GeoDA}        & Label & \multicolumn{1}{c|}{$\ell_2$}      & 545  & 52.7 & 2.69 \\
\multicolumn{1}{l|}{HSJ}          & Label & \multicolumn{1}{c|}{$\ell_2$}      & 234  & 45.4 & 2.90 \\
\multicolumn{1}{l|}{Opt}          & Label & \multicolumn{1}{c|}{$\ell_2$}      & 641  & 53.3 & 2.02 \\
\multicolumn{1}{l|}{SignOPT}      & Label & \multicolumn{1}{c|}{$\ell_2$}      & 504  & 34.8 & 2.00 \\
\multicolumn{1}{l|}{PointWise}    & Label & \multicolumn{1}{c|}{Optimized}  & 1290 & 89.8 & 2.02 \\
\multicolumn{1}{l|}{SparseEvo}    & Label & \multicolumn{1}{c|}{Optimized}  & 9425 & 38.3 & 2.27 \\ \hline
\multicolumn{1}{l|}{\textbf{CA (sssp)}} &
  Label &
  \multicolumn{1}{c|}{Optimized} &
  400 &
  5.7 &
  3.75 \\
\multicolumn{1}{l|}{\textbf{CA (bin search)}} &
  Label &
  \multicolumn{1}{c|}{Optimized} &
  473 &
  0.0 &
  5.19 \\ \hline
\end{tabular}%
}
\end{table}
\begin{table}[H]
\centering
\caption{Attack performance under RAND Post-processing Defense on ResNet and CIFAR10 (Clean Accuracy: $92.0\%$)}
\label{tab:cifar10_post_RAND_resnet}
\resizebox{0.47\textwidth}{!}{%
\begin{tabular}{lccccc}
\hline
Attack &
  \begin{tabular}[c]{@{}c@{}}Query\\ Type\end{tabular} &
  \begin{tabular}[c]{@{}c@{}}Perturbation\\ Type\end{tabular} &
  \# Query &
  Model Acc. &
  Dist. $\ell_2$ \\ \hline
\multicolumn{1}{l|}{Bandit}       & Score & \multicolumn{1}{c|}{$\ell_\infty$} & 153  & 10.3 & 4.93 \\
\multicolumn{1}{l|}{NES}          & Score & \multicolumn{1}{c|}{$\ell_\infty$} & 277  & 0.0  & 1.39 \\
\multicolumn{1}{l|}{Parsimonious} & Score & \multicolumn{1}{c|}{$\ell_\infty$} & 104  & 0.2  & 5.01 \\
\multicolumn{1}{l|}{Sign}         & Score & \multicolumn{1}{c|}{$\ell_\infty$} & 144  & 0.7  & 4.98 \\
\multicolumn{1}{l|}{Square}       & Score & \multicolumn{1}{c|}{$\ell_\infty$} & 18   & 0.0  & 4.99 \\
\multicolumn{1}{l|}{ZOSignSGD}    & Score & \multicolumn{1}{c|}{$\ell_\infty$} & 274  & 0.0  & 1.35 \\
\multicolumn{1}{l|}{GeoDA}        & Label & \multicolumn{1}{c|}{$\ell_\infty$} & 380  & 60.2 & 2.71 \\
\multicolumn{1}{l|}{HSJ}          & Label & \multicolumn{1}{c|}{$\ell_\infty$} & 792  & 60.7 & 2.21 \\
\multicolumn{1}{l|}{Opt}          & Label & \multicolumn{1}{c|}{$\ell_\infty$} & 658  & 84.9 & 0.21 \\
\multicolumn{1}{l|}{RayS}         & Label & \multicolumn{1}{c|}{$\ell_\infty$} & 252  & 46.8 & 4.79 \\
\multicolumn{1}{l|}{SignFlip}     & Label & \multicolumn{1}{c|}{$\ell_\infty$} & 442  & 17.0 & 4.26 \\
\multicolumn{1}{l|}{SignOPT}      & Label & \multicolumn{1}{c|}{$\ell_\infty$} & 1095 & 42.0 & 1.00 \\
\multicolumn{1}{l|}{Bandit}       & Score & \multicolumn{1}{c|}{$\ell_2$}      & 673  & 20.7 & 3.76 \\
\multicolumn{1}{l|}{NES}          & Score & \multicolumn{1}{c|}{$\ell_2$}      & 840  & 0.0  & 0.54 \\
\multicolumn{1}{l|}{Simple}       & Score & \multicolumn{1}{c|}{$\ell_2$}      & 5021 & 64.4 & 0.30 \\
\multicolumn{1}{l|}{Square}       & Score & \multicolumn{1}{c|}{$\ell_2$}      & 19   & 0.0  & 4.55 \\
\multicolumn{1}{l|}{ZOSignSGD}    & Score & \multicolumn{1}{c|}{$\ell_2$}      & 1443 & 0.1  & 0.47 \\
\multicolumn{1}{l|}{Boundary}     & Label & \multicolumn{1}{c|}{$\ell_2$}      & 85   & 64.4 & 1.68 \\
\multicolumn{1}{l|}{GeoDA}        & Label & \multicolumn{1}{c|}{$\ell_2$}      & 545  & 57.6 & 2.54 \\
\multicolumn{1}{l|}{HSJ}          & Label & \multicolumn{1}{c|}{$\ell_2$}      & 558  & 48.5 & 3.32 \\
\multicolumn{1}{l|}{Opt}          & Label & \multicolumn{1}{c|}{$\ell_2$}      & 593  & 59.4 & 1.98 \\
\multicolumn{1}{l|}{SignOPT}      & Label & \multicolumn{1}{c|}{$\ell_2$}      & 397  & 41.3 & 1.90 \\
\multicolumn{1}{l|}{PointWise}    & Label & \multicolumn{1}{c|}{unrestricted}  & 1735 & 91.1 & 1.90 \\
\multicolumn{1}{l|}{SparseEvo}    & Label & \multicolumn{1}{c|}{unrestricted}  & 8665 & 63.2 & 2.41 \\ \hline
\multicolumn{1}{l|}{\textbf{CA(sssp)}} &
  Label &
  \multicolumn{1}{c|}{unrestricted} &
  425 &
  9.1 &
  3.90 \\
\multicolumn{1}{l|}{\textbf{CA(bin search)}} &
  Label &
  \multicolumn{1}{c|}{unrestricted} &
  421 &
  10.7 &
  4.09 \\ \hline
\end{tabular}%
}
\end{table}
\begin{table}[H]
\centering
\caption{Attack performance under RAND Post-processing Defense on ResNeXt and CIFAR10 (Clean Accuracy: $94.8\%$)}
\label{tab:cifar10_post_RAND_resnext}
\resizebox{0.47\textwidth}{!}{%
\begin{tabular}{lccccc}
\hline
Attack &
  \begin{tabular}[c]{@{}c@{}}Query\\ Type\end{tabular} &
  \begin{tabular}[c]{@{}c@{}}Perturbation\\ Type\end{tabular} &
  \# Query &
  Model Acc. &
  Dist. $\ell_2$ \\ \hline
\multicolumn{1}{l|}{Bandit}       & Score & \multicolumn{1}{c|}{$\ell_\infty$} & 141  & 4.7  & 5.08 \\
\multicolumn{1}{l|}{NES}          & Score & \multicolumn{1}{c|}{$\ell_\infty$} & 237  & 0.0  & 1.32 \\
\multicolumn{1}{l|}{Parsimonious} & Score & \multicolumn{1}{c|}{$\ell_\infty$} & 221  & 2.1  & 5.15 \\
\multicolumn{1}{l|}{Sign}         & Score & \multicolumn{1}{c|}{$\ell_\infty$} & 133  & 0.0  & 5.14 \\
\multicolumn{1}{l|}{Square}       & Score & \multicolumn{1}{c|}{$\ell_\infty$} & 23   & 0.4  & 5.14 \\
\multicolumn{1}{l|}{ZOSignSGD}    & Score & \multicolumn{1}{c|}{$\ell_\infty$} & 249  & 0.0  & 1.31 \\
\multicolumn{1}{l|}{GeoDA}        & Label & \multicolumn{1}{c|}{$\ell_\infty$} & 282  & 53.1 & 2.52 \\
\multicolumn{1}{l|}{HSJ}          & Label & \multicolumn{1}{c|}{$\ell_\infty$} & 417  & 57.4 & 2.41 \\
\multicolumn{1}{l|}{Opt}          & Label & \multicolumn{1}{c|}{$\ell_\infty$} & 683  & 81.6 & 0.36 \\
\multicolumn{1}{l|}{RayS}         & Label & \multicolumn{1}{c|}{$\ell_\infty$} & 484  & 52.5 & 4.72 \\
\multicolumn{1}{l|}{SignFlip}     & Label & \multicolumn{1}{c|}{$\ell_\infty$} & 287  & 28.1 & 3.35 \\
\multicolumn{1}{l|}{SignOPT}      & Label & \multicolumn{1}{c|}{$\ell_\infty$} & 760  & 45.3 & 0.91 \\
\multicolumn{1}{l|}{Bandit}       & Score & \multicolumn{1}{c|}{$\ell_2$}      & 574  & 14.2 & 3.69 \\
\multicolumn{1}{l|}{NES}          & Score & \multicolumn{1}{c|}{$\ell_2$}      & 694  & 0.0  & 0.50 \\
\multicolumn{1}{l|}{Simple}       & Score & \multicolumn{1}{c|}{$\ell_2$}      & 4075 & 61.7 & 0.29 \\
\multicolumn{1}{l|}{Square}       & Score & \multicolumn{1}{c|}{$\ell_2$}      & 22   & 0.0  & 4.68 \\
\multicolumn{1}{l|}{ZOSignSGD}    & Score & \multicolumn{1}{c|}{$\ell_2$}      & 1459 & 0.6  & 0.47 \\
\multicolumn{1}{l|}{Boundary}     & Label & \multicolumn{1}{c|}{$\ell_2$}      & 20   & 51.1 & 2.36 \\
\multicolumn{1}{l|}{GeoDA}        & Label & \multicolumn{1}{c|}{$\ell_2$}      & 206  & 54.5 & 2.48 \\
\multicolumn{1}{l|}{HSJ}          & Label & \multicolumn{1}{c|}{$\ell_2$}      & 414  & 48.6 & 2.84 \\
\multicolumn{1}{l|}{Opt}          & Label & \multicolumn{1}{c|}{$\ell_2$}      & 542  & 61.2 & 1.70 \\
\multicolumn{1}{l|}{SignOPT}      & Label & \multicolumn{1}{c|}{$\ell_2$}      & 357  & 44.2 & 1.50 \\
\multicolumn{1}{l|}{PointWise}    & Label & \multicolumn{1}{c|}{Optimized}  & 2519 & 94.6 & 2.05 \\
\multicolumn{1}{l|}{SparseEvo}    & Label & \multicolumn{1}{c|}{Optimized}  & 9537 & 78.4 & 2.85 \\ \hline
\multicolumn{1}{l|}{\textbf{CA (sssp)}} &
  Label &
  \multicolumn{1}{c|}{Optimized} &
  276 &
  9.3 &
  2.92 \\
\multicolumn{1}{l|}{\textbf{CA (bin search)}} &
  Label &
  \multicolumn{1}{c|}{Optimized} &
  434 &
  10.0 &
  3.05 \\ \hline
\end{tabular}%
}
\end{table}
\begin{table}[H]
\centering
\caption{Attack performance under RAND Post-processing Defense on WRN and CIFAR10 (Clean Accuracy: $96.1\%$)}
\label{tab:cifar10_post_RAND_wrn}
\resizebox{0.47\textwidth}{!}{%
\begin{tabular}{lccccc}
\hline
Attack &
  \begin{tabular}[c]{@{}c@{}}Query\\ Type\end{tabular} &
  \begin{tabular}[c]{@{}c@{}}Perturbation\\ Type\end{tabular} &
  \# Query &
  Model Acc. &
  Dist. $\ell_2$ \\ \hline
\multicolumn{1}{l|}{Bandit}            & Score & \multicolumn{1}{c|}{$\ell_\infty$} & 157  & 6.2  & 5.12 \\
\multicolumn{1}{l|}{NES}               & Score & \multicolumn{1}{c|}{$\ell_\infty$} & 464  & 0.2  & 1.59 \\
\multicolumn{1}{l|}{Parsimonious}      & Score & \multicolumn{1}{c|}{$\ell_\infty$} & 390  & 21.1 & 5.22 \\
\multicolumn{1}{l|}{Sign}              & Score & \multicolumn{1}{c|}{$\ell_\infty$} & 330  & 12.4 & 5.02 \\
\multicolumn{1}{l|}{Square}            & Score & \multicolumn{1}{c|}{$\ell_\infty$} & 36   & 0.5  & 5.21 \\
\multicolumn{1}{l|}{ZOSignSGD}         & Score & \multicolumn{1}{c|}{$\ell_\infty$} & 464  & 0.4  & 1.56 \\
\multicolumn{1}{l|}{GeoDA}             & Label & \multicolumn{1}{c|}{$\ell_\infty$} & 186  & 56.9 & 2.58 \\
\multicolumn{1}{l|}{HSJ}               & Label & \multicolumn{1}{c|}{$\ell_\infty$} & 364  & 56.0 & 2.37 \\
\multicolumn{1}{l|}{Opt}               & Label & \multicolumn{1}{c|}{$\ell_\infty$} & 1022 & 84.8 & 0.31 \\
\multicolumn{1}{l|}{RayS}              & Label & \multicolumn{1}{c|}{$\ell_\infty$} & 334  & 51.5 & 4.81 \\
\multicolumn{1}{l|}{SignFlip}          & Label & \multicolumn{1}{c|}{$\ell_\infty$} & 164  & 22.8 & 3.72 \\
\multicolumn{1}{l|}{SignOPT}           & Label & \multicolumn{1}{c|}{$\ell_\infty$} & 968  & 39.2 & 0.98 \\
\multicolumn{1}{l|}{Bandit}            & Score & \multicolumn{1}{c|}{$\ell_2$}      & 727  & 17.5 & 3.81 \\
\multicolumn{1}{l|}{NES}               & Score & \multicolumn{1}{c|}{$\ell_2$}      & 1328 & 1.6  & 0.68 \\
\multicolumn{1}{l|}{Simple}            & Score & \multicolumn{1}{c|}{$\ell_2$}      & 2536 & 74.7 & 0.21 \\
\multicolumn{1}{l|}{Square}            & Score & \multicolumn{1}{c|}{$\ell_2$}      & 26   & 0.4  & 4.75 \\
\multicolumn{1}{l|}{ZOSignSGD}         & Score & \multicolumn{1}{c|}{$\ell_2$}      & 2143 & 5.5  & 0.60 \\
\multicolumn{1}{l|}{Boundary}          & Label & \multicolumn{1}{c|}{$\ell_2$}      & 35   & 54.0 & 2.35 \\
\multicolumn{1}{l|}{GeoDA}             & Label & \multicolumn{1}{c|}{$\ell_2$}      & 267  & 58.3 & 2.53 \\
\multicolumn{1}{l|}{HSJ}               & Label & \multicolumn{1}{c|}{$\ell_2$}      & 360  & 48.8 & 2.82 \\
\multicolumn{1}{l|}{Opt}               & Label & \multicolumn{1}{c|}{$\ell_2$}      & 620  & 59.7 & 1.89 \\
\multicolumn{1}{l|}{SignOPT}           & Label & \multicolumn{1}{c|}{$\ell_2$}      & 476  & 43.4 & 1.65 \\
\multicolumn{1}{l|}{PointWise}         & Label & \multicolumn{1}{c|}{Optimized}  & 1727 & 95.8 & 1.89 \\
\multicolumn{1}{l|}{SparseEvo}         & Label & \multicolumn{1}{c|}{Optimized}  & 9720 & 59.9 & 2.67 \\ \hline
\multicolumn{1}{l|}{\textbf{CA (sssp)}} & Label & \multicolumn{1}{c|}{Optimized}  & 399  & 7.3  & 3.65 \\
\multicolumn{1}{l|}{\textbf{CA (bin search)}} &
  Label &
  \multicolumn{1}{c|}{Optimized} &
  460 &
  0.0 &
  4.94 \\ \hline
\end{tabular}%
}
\end{table}
\clearpage
\begin{table}[H]
\centering
\caption{Attack performance under RAND Post-processing Defense on VGG and CIFAR100 (Clean Accuracy: $68.5\%$)}
\label{tab:cifar100_post_RAND_vgg}
\resizebox{0.47\textwidth}{!}{%
\begin{tabular}{lccccc}
\hline
Attack &
  \begin{tabular}[c]{@{}c@{}}Query\\ Type\end{tabular} &
  \begin{tabular}[c]{@{}c@{}}Perturbation\\ Type\end{tabular} &
  \# Query &
  Model Acc. &
  Dist. $\ell_2$ \\ \hline
\multicolumn{1}{l|}{Bandit}       & Score & \multicolumn{1}{c|}{$\ell_\infty$} & 35   & 1.0  & 3.67 \\
\multicolumn{1}{l|}{NES}          & Score & \multicolumn{1}{c|}{$\ell_\infty$} & 190  & 0.2  & 0.83 \\
\multicolumn{1}{l|}{Parsimonious} & Score & \multicolumn{1}{c|}{$\ell_\infty$} & 103  & 2.0  & 3.71 \\
\multicolumn{1}{l|}{Sign}         & Score & \multicolumn{1}{c|}{$\ell_\infty$} & 72   & 0.7  & 3.66 \\
\multicolumn{1}{l|}{Square}       & Score & \multicolumn{1}{c|}{$\ell_\infty$} & 8    & 0.1  & 3.66 \\
\multicolumn{1}{l|}{ZOSignSGD}    & Score & \multicolumn{1}{c|}{$\ell_\infty$} & 203  & 0.1  & 0.82 \\
\multicolumn{1}{l|}{GeoDA}        & Label & \multicolumn{1}{c|}{$\ell_\infty$} & 177  & 35.6 & 2.31 \\
\multicolumn{1}{l|}{HSJ}          & Label & \multicolumn{1}{c|}{$\ell_\infty$} & 235  & 36.7 & 2.19 \\
\multicolumn{1}{l|}{Opt}          & Label & \multicolumn{1}{c|}{$\ell_\infty$} & 756  & 53.3 & 0.48 \\
\multicolumn{1}{l|}{RayS}         & Label & \multicolumn{1}{c|}{$\ell_\infty$} & 116  & 32.2 & 4.14 \\
\multicolumn{1}{l|}{SignFlip}     & Label & \multicolumn{1}{c|}{$\ell_\infty$} & 40   & 14.4 & 2.89 \\
\multicolumn{1}{l|}{SignOPT}      & Label & \multicolumn{1}{c|}{$\ell_\infty$} & 724  & 29.4 & 0.93 \\
\multicolumn{1}{l|}{Bandit}       & Score & \multicolumn{1}{c|}{$\ell_2$}      & 139  & 2.3  & 1.86 \\
\multicolumn{1}{l|}{NES}          & Score & \multicolumn{1}{c|}{$\ell_2$}      & 650  & 0.1  & 0.33 \\
\multicolumn{1}{l|}{Simple}       & Score & \multicolumn{1}{c|}{$\ell_2$}      & 2570 & 34.7 & 0.18 \\
\multicolumn{1}{l|}{Square}       & Score & \multicolumn{1}{c|}{$\ell_2$}      & 6    & 0.1  & 3.35 \\
\multicolumn{1}{l|}{ZOSignSGD}    & Score & \multicolumn{1}{c|}{$\ell_2$}      & 1225 & 0.5  & 0.30 \\
\multicolumn{1}{l|}{Boundary}     & Label & \multicolumn{1}{c|}{$\ell_2$}      & 40   & 28.8 & 2.31 \\
\multicolumn{1}{l|}{GeoDA}        & Label & \multicolumn{1}{c|}{$\ell_2$}      & 205  & 36.3 & 2.31 \\
\multicolumn{1}{l|}{HSJ}          & Label & \multicolumn{1}{c|}{$\ell_2$}      & 169  & 32.8 & 2.11 \\
\multicolumn{1}{l|}{Opt}          & Label & \multicolumn{1}{c|}{$\ell_2$}      & 581  & 37.7 & 1.54 \\
\multicolumn{1}{l|}{SignOPT}      & Label & \multicolumn{1}{c|}{$\ell_2$}      & 409  & 27.1 & 1.32 \\
\multicolumn{1}{l|}{PointWise}    & Label & \multicolumn{1}{c|}{Optimized}  & 494  & 67.5 & 1.38 \\
\multicolumn{1}{l|}{SparseEvo}    & Label & \multicolumn{1}{c|}{Optimized}  & 8502 & 21.1 & 1.75 \\ \hline
\multicolumn{1}{l|}{\textbf{CA (sssp)}} &
  Label &
  \multicolumn{1}{c|}{Optimized} &
  186 &
  1.8 &
  2.35 \\
\multicolumn{1}{l|}{\textbf{CA (bin search)}} &
  Label &
  \multicolumn{1}{c|}{Optimized} &
  582 &
  0.0 &
  2.67 \\ \hline
\end{tabular}%
}
\end{table}
\begin{table}[H]
\centering
\caption{Attack performance under RAND Post-processing Defense on ResNet and CIFAR100 (Clean Accuracy: $67.5\%$)}
\label{tab:cifar100_post_RAND_resnet}
\resizebox{0.47\textwidth}{!}{%
\begin{tabular}{lccccc}
\hline
Attack &
  \begin{tabular}[c]{@{}c@{}}Query\\ Type\end{tabular} &
  \begin{tabular}[c]{@{}c@{}}Perturbation\\ Type\end{tabular} &
  \# Query &
  Model Acc. &
  Dist. $\ell_2$ \\ \hline
\multicolumn{1}{l|}{Bandit}       & Score & \multicolumn{1}{c|}{$\ell_\infty$} & 46   & 1.0  & 3.57 \\
\multicolumn{1}{l|}{NES}          & Score & \multicolumn{1}{c|}{$\ell_\infty$} & 152  & 0.0  & 0.75 \\
\multicolumn{1}{l|}{Parsimonious} & Score & \multicolumn{1}{c|}{$\ell_\infty$} & 56   & 0.0  & 3.63 \\
\multicolumn{1}{l|}{Sign}         & Score & \multicolumn{1}{c|}{$\ell_\infty$} & 39   & 0.0  & 3.57 \\
\multicolumn{1}{l|}{Square}       & Score & \multicolumn{1}{c|}{$\ell_\infty$} & 7    & 0.0  & 3.61 \\
\multicolumn{1}{l|}{ZOSignSGD}    & Score & \multicolumn{1}{c|}{$\ell_\infty$} & 159  & 0.0  & 0.72 \\
\multicolumn{1}{l|}{GeoDA}        & Label & \multicolumn{1}{c|}{$\ell_\infty$} & 245  & 38.7 & 2.49 \\
\multicolumn{1}{l|}{HSJ}          & Label & \multicolumn{1}{c|}{$\ell_\infty$} & 407  & 38.2 & 2.28 \\
\multicolumn{1}{l|}{Opt}          & Label & \multicolumn{1}{c|}{$\ell_\infty$} & 762  & 51.6 & 0.47 \\
\multicolumn{1}{l|}{RayS}         & Label & \multicolumn{1}{c|}{$\ell_\infty$} & 99   & 32.7 & 4.03 \\
\multicolumn{1}{l|}{SignFlip}     & Label & \multicolumn{1}{c|}{$\ell_\infty$} & 47   & 17.5 & 3.25 \\
\multicolumn{1}{l|}{SignOPT}      & Label & \multicolumn{1}{c|}{$\ell_\infty$} & 839  & 28.2 & 0.97 \\
\multicolumn{1}{l|}{Bandit}       & Score & \multicolumn{1}{c|}{$\ell_2$}      & 211  & 3.2  & 1.86 \\
\multicolumn{1}{l|}{NES}          & Score & \multicolumn{1}{c|}{$\ell_2$}      & 431  & 0.0  & 0.28 \\
\multicolumn{1}{l|}{Simple}       & Score & \multicolumn{1}{c|}{$\ell_2$}      & 3593 & 36.0 & 0.20 \\
\multicolumn{1}{l|}{Square}       & Score & \multicolumn{1}{c|}{$\ell_2$}      & 6    & 0.0  & 3.28 \\
\multicolumn{1}{l|}{ZOSignSGD}    & Score & \multicolumn{1}{c|}{$\ell_2$}      & 791  & 0.0  & 0.24 \\
\multicolumn{1}{l|}{Boundary}     & Label & \multicolumn{1}{c|}{$\ell_2$}      & 56   & 34.2 & 2.35 \\
\multicolumn{1}{l|}{GeoDA}        & Label & \multicolumn{1}{c|}{$\ell_2$}      & 387  & 35.6 & 2.38 \\
\multicolumn{1}{l|}{HSJ}          & Label & \multicolumn{1}{c|}{$\ell_2$}      & 197  & 33.2 & 2.40 \\
\multicolumn{1}{l|}{Opt}          & Label & \multicolumn{1}{c|}{$\ell_2$}      & 633  & 40.2 & 1.60 \\
\multicolumn{1}{l|}{SignOPT}      & Label & \multicolumn{1}{c|}{$\ell_2$}      & 413  & 27.0 & 1.53 \\
\multicolumn{1}{l|}{PointWise}    & Label & \multicolumn{1}{c|}{Optimized}  & 922  & 65.7 & 1.41 \\
\multicolumn{1}{l|}{SparseEvo}    & Label & \multicolumn{1}{c|}{Optimized}  & 9028 & 41.0 & 2.02 \\ \hline
\multicolumn{1}{l|}{\textbf{CA (sssp)}} &
  Label &
  \multicolumn{1}{c|}{Optimized} &
  187 &
  1.8 &
  2.36 \\
\multicolumn{1}{l|}{\textbf{CA (bin search)}} &
  Label &
  \multicolumn{1}{c|}{Optimized} &
  458 &
  0.0 &
  3.04 \\ \hline
\end{tabular}%
}
\end{table}
\begin{table}[H]
\centering
\caption{Attack performance under RAND Post-processing Defense on ResNeXt and CIFAR100 (Clean Accuracy: $79.5\%$)}
\label{tab:cifar100_post_RAND_resnext}
\resizebox{0.47\textwidth}{!}{%
\begin{tabular}{lccccc}
\hline
Attack &
  \begin{tabular}[c]{@{}c@{}}Query\\ Type\end{tabular} &
  \begin{tabular}[c]{@{}c@{}}Perturbation\\ Type\end{tabular} &
  \# Query &
  Model Acc. &
  Dist. $\ell_2$ \\ \hline
\multicolumn{1}{l|}{Bandit}       & Score & \multicolumn{1}{c|}{$\ell_\infty$} & 46   & 1.3  & 4.28 \\
\multicolumn{1}{l|}{NES}          & Score & \multicolumn{1}{c|}{$\ell_\infty$} & 165  & 0.0  & 0.92 \\
\multicolumn{1}{l|}{Parsimonious} & Score & \multicolumn{1}{c|}{$\ell_\infty$} & 117  & 0.5  & 4.33 \\
\multicolumn{1}{l|}{Sign}         & Score & \multicolumn{1}{c|}{$\ell_\infty$} & 124  & 0.1  & 4.29 \\
\multicolumn{1}{l|}{Square}       & Score & \multicolumn{1}{c|}{$\ell_\infty$} & 8    & 0.1  & 4.32 \\
\multicolumn{1}{l|}{ZOSignSGD}    & Score & \multicolumn{1}{c|}{$\ell_\infty$} & 179  & 0.0  & 0.93 \\
\multicolumn{1}{l|}{GeoDA}        & Label & \multicolumn{1}{c|}{$\ell_\infty$} & 147  & 40.7 & 2.04 \\
\multicolumn{1}{l|}{HSJ}          & Label & \multicolumn{1}{c|}{$\ell_\infty$} & 201  & 41.4 & 1.91 \\
\multicolumn{1}{l|}{Opt}          & Label & \multicolumn{1}{c|}{$\ell_\infty$} & 744  & 64.2 & 0.52 \\
\multicolumn{1}{l|}{RayS}         & Label & \multicolumn{1}{c|}{$\ell_\infty$} & 179  & 45.7 & 4.32 \\
\multicolumn{1}{l|}{SignFlip}     & Label & \multicolumn{1}{c|}{$\ell_\infty$} & 98   & 29.2 & 2.47 \\
\multicolumn{1}{l|}{SignOPT}      & Label & \multicolumn{1}{c|}{$\ell_\infty$} & 844  & 43.8 & 0.86 \\
\multicolumn{1}{l|}{Bandit}       & Score & \multicolumn{1}{c|}{$\ell_2$}      & 163  & 3.1  & 2.03 \\
\multicolumn{1}{l|}{NES}          & Score & \multicolumn{1}{c|}{$\ell_2$}      & 461  & 0.0  & 0.34 \\
\multicolumn{1}{l|}{Simple}       & Score & \multicolumn{1}{c|}{$\ell_2$}      & 3805 & 45.2 & 0.23 \\
\multicolumn{1}{l|}{Square}       & Score & \multicolumn{1}{c|}{$\ell_2$}      & 10   & 0.0  & 3.90 \\
\multicolumn{1}{l|}{ZOSignSGD}    & Score & \multicolumn{1}{c|}{$\ell_2$}      & 1054 & 0.0  & 0.33 \\
\multicolumn{1}{l|}{Boundary}     & Label & \multicolumn{1}{c|}{$\ell_2$}      & 20   & 36.6 & 2.15 \\
\multicolumn{1}{l|}{GeoDA}        & Label & \multicolumn{1}{c|}{$\ell_2$}      & 143  & 44.3 & 2.02 \\
\multicolumn{1}{l|}{HSJ}          & Label & \multicolumn{1}{c|}{$\ell_2$}      & 256  & 40.6 & 2.13 \\
\multicolumn{1}{l|}{Opt}          & Label & \multicolumn{1}{c|}{$\ell_2$}      & 590  & 49.5 & 1.39 \\
\multicolumn{1}{l|}{SignOPT}      & Label & \multicolumn{1}{c|}{$\ell_2$}      & 371  & 38.0 & 1.56 \\
\multicolumn{1}{l|}{PointWise}    & Label & \multicolumn{1}{c|}{Optimized}  & 1811 & 78.7 & 1.75 \\
\multicolumn{1}{l|}{SparseEvo}    & Label & \multicolumn{1}{c|}{Optimized}  & 9476 & 62.6 & 2.20 \\ \hline
\multicolumn{1}{l|}{\textbf{CA (sssp)}} &
  Label &
  \multicolumn{1}{c|}{Optimized} &
  179 &
  2.0 &
  2.33 \\
\multicolumn{1}{l|}{\textbf{CA (bin search)}} &
  Label &
  \multicolumn{1}{c|}{Optimized} &
  458 &
  0.0 &
  2.60 \\ \hline
\end{tabular}%
}
\end{table}
\begin{table}[H]
\centering
\caption{Attack performance under RAND Post-processing Defense on WRN and CIFAR100 (Clean Accuracy: $79.4\%$)}
\label{tab:cifar100_post_RAND_wrn}
\resizebox{0.47\textwidth}{!}{%
\begin{tabular}{lccccc}
\hline
Attack &
  \begin{tabular}[c]{@{}c@{}}Query\\ Type\end{tabular} &
  \begin{tabular}[c]{@{}c@{}}Perturbation\\ Type\end{tabular} &
  \# Query &
  Model Acc. &
  Dist. $\ell_2$ \\ \hline
\multicolumn{1}{l|}{Bandit}       & Score & \multicolumn{1}{c|}{$\ell_\infty$} & 35   & 1.8  & 4.26 \\
\multicolumn{1}{l|}{NES}          & Score & \multicolumn{1}{c|}{$\ell_\infty$} & 170  & 0.0  & 0.89 \\
\multicolumn{1}{l|}{Parsimonious} & Score & \multicolumn{1}{c|}{$\ell_\infty$} & 134  & 1.1  & 4.26 \\
\multicolumn{1}{l|}{Sign}         & Score & \multicolumn{1}{c|}{$\ell_\infty$} & 81   & 0.6  & 4.24 \\
\multicolumn{1}{l|}{Square}       & Score & \multicolumn{1}{c|}{$\ell_\infty$} & 10   & 0.0  & 4.23 \\
\multicolumn{1}{l|}{ZOSignSGD}    & Score & \multicolumn{1}{c|}{$\ell_\infty$} & 178  & 0.1  & 0.89 \\
\multicolumn{1}{l|}{GeoDA}        & Label & \multicolumn{1}{c|}{$\ell_\infty$} & 146  & 40.9 & 2.22 \\
\multicolumn{1}{l|}{HSJ}          & Label & \multicolumn{1}{c|}{$\ell_\infty$} & 224  & 42.3 & 2.03 \\
\multicolumn{1}{l|}{Opt}          & Label & \multicolumn{1}{c|}{$\ell_\infty$} & 738  & 62.1 & 0.46 \\
\multicolumn{1}{l|}{RayS}         & Label & \multicolumn{1}{c|}{$\ell_\infty$} & 168  & 41.9 & 4.22 \\
\multicolumn{1}{l|}{SignFlip}     & Label & \multicolumn{1}{c|}{$\ell_\infty$} & 50   & 24.1 & 2.70 \\
\multicolumn{1}{l|}{SignOPT}      & Label & \multicolumn{1}{c|}{$\ell_\infty$} & 761  & 36.2 & 0.89 \\
\multicolumn{1}{l|}{Bandit}       & Score & \multicolumn{1}{c|}{$\ell_2$}      & 172  & 3.6  & 2.10 \\
\multicolumn{1}{l|}{NES}          & Score & \multicolumn{1}{c|}{$\ell_2$}      & 523  & 0.1  & 0.34 \\
\multicolumn{1}{l|}{Simple}       & Score & \multicolumn{1}{c|}{$\ell_2$}      & 2954 & 37.9 & 0.21 \\
\multicolumn{1}{l|}{Square}       & Score & \multicolumn{1}{c|}{$\ell_2$}      & 15   & 0.0  & 3.89 \\
\multicolumn{1}{l|}{ZOSignSGD}    & Score & \multicolumn{1}{c|}{$\ell_2$}      & 1009 & 0.7  & 0.32 \\
\multicolumn{1}{l|}{Boundary}     & Label & \multicolumn{1}{c|}{$\ell_2$}      & 25   & 35.6 & 2.34 \\
\multicolumn{1}{l|}{GeoDA}        & Label & \multicolumn{1}{c|}{$\ell_2$}      & 179  & 42.8 & 2.21 \\
\multicolumn{1}{l|}{HSJ}          & Label & \multicolumn{1}{c|}{$\ell_2$}      & 214  & 38.0 & 2.18 \\
\multicolumn{1}{l|}{Opt}          & Label & \multicolumn{1}{c|}{$\ell_2$}      & 597  & 46.7 & 1.41 \\
\multicolumn{1}{l|}{SignOPT}      & Label & \multicolumn{1}{c|}{$\ell_2$}      & 397  & 35.0 & 1.37 \\
\multicolumn{1}{l|}{PointWise}    & Label & \multicolumn{1}{c|}{Optimized}  & 1145 & 77.1 & 1.50 \\
\multicolumn{1}{l|}{SparseEvo}    & Label & \multicolumn{1}{c|}{Optimized}  & 9145 & 49.3 & 1.94 \\ \hline
\multicolumn{1}{l|}{\textbf{CA (sssp)}} &
  Label &
  \multicolumn{1}{c|}{Optimized} &
  210 &
  1.6 &
  2.53 \\
\multicolumn{1}{l|}{\textbf{CA (bin search)}} &
  Label &
  \multicolumn{1}{c|}{Optimized} &
  457 &
  0.0 &
  2.65 \\ \hline
\end{tabular}%
}
\end{table}
\clearpage

\subsection{Experiments on Audio Classification task}
\label{apd:audio}
\noindent \textbf{Dataset}. The speaker verification task focuses on determining if a given voice sample belongs to a specific individual or ascribed identity~\cite{campbell1997speaker}. The process for this task entails comparing two voice samples using a speaker verification model and making a decision. We utilize the large-scale multi-speaker corpus LibriSpeech (``train-clean-100'' set), which comprises over 100 hours of read English voices from 251 speakers encompassing various accents, occupations, and age groups. Within this corpus, each speaker has multiple voice samples spanning from several seconds to tens of seconds, sampled at a rate of 16kHz. 

\vspace{0.05in}

\noindent \textbf{Model and Setting}. For the speaker verification task, we use two SOTA speaker verification models: 
ECAPA-TDNN~\cite{desplanques2020ecapa} pre-trained by Speechbrain~\cite{speechbrain} as the target model for the model owner and the X-vector model~\cite{garcia2019x} pre-trained by Speechbrain~\cite{speechbrain} as the feature extractor. In the experiments, we utilize the pre-trained SOTA model, which is trained on the VoxCeleb dataset~\cite{Nagrani17} and VoxCeleb2 dataset~\cite{Chung18b}, to perform the task of verifying if two voice samples are from the same speaker. We evaluate the performance of the model on 500 pairs of voice samples randomly selected from the LibriSpeech ``train-clean-100'' set. Each sample pair consists of two voice samples from a single speaker.

\begin{table}[!h]
 \small
  \caption{Performance of certifiable attack with Gaussian Noise on Audio Dataset. ($p=90\%$)}\vspace{-0.1in}
    \small
    \centering
    \resizebox{0.47\textwidth}{!}{%
    \begin{tabular}{c| c c c c}
    \hline
          $\sigma$ & Dist. $\ell_2$ &Mean Dist. $\ell_2$ & \# RPQ & Certified Acc.  \\
    \hline
                 0.05 & 18.35 & 12.67 & 5.42 & 100.00\% \\
                 0.1 & 26.71 & 12.70 & 3.72 & 100.00\% \\
                 0.15 & 35.82 & 12.63 & 3.12 & 100.00\% \\
    \hline

    \end{tabular}}
    \vspace{-.2in}
    \label{tab:diff variance audio}
\end{table}

\begin{table}[!h]
\small
  \caption{Performance of certifiable attack with Gaussian noise  $\sigma=0.25$ on LibriSpeech}\vspace{-0.1in}\small
    \centering
    \resizebox{0.47\textwidth}{!}{%
    \begin{tabular}{c|c c c c}
    \hline
          p & Dist. $\ell_2$ &Mean Dist. $\ell_2$ & \# RPQ & Certified Acc.  \\
    \hline
                50\% & 26.64 & 12.55 & 5.31 & 100.00\% \\
                60\% & 26.65 & 12.57 & 5.08 & 100.00\% \\
                70\% & 26.66 & 12.60 & 4.87 & 100.00\% \\
                80\% & 26.68 & 12.63 & 4.49 & 100.00\% \\
                90\% & 26.71 & 12.70 & 3.72 & 100.00\% \\
                95\% & 26.74 & 12.78 & 3.05 & 100.00\% \\
                                                     
    \hline
    \end{tabular}}
  
    \label{tab:diff p audio}
\end{table}

\begin{table}[H]
    \centering
        \caption{Attack performance of different localization/refinement algorithms on LibriSpeec ($\sigma=0.25$, $p=90\%$).}
    \small
   \resizebox{0.47\textwidth}{!}{%
    \begin{tabular}{c |c| c c c c}
    \hline
        Localization & Shifting & Dist. $\ell_2$ &Mean Dist. $\ell_2$ & \# RPQ & Cert. Acc.\\
    \hline
        random & none & 165.56 & 164.59 & 1.00 & 100.00\% \\
        random & geo. & 159.21 & 157.53 & 73.88 & 100.00\% \\
        SSSP & none & 26.72 & 12.74 & 1.32 & 100.00\% \\
        SSSP & geo. & 26.71 & 12.70 & 3.72 & 100.00\% \\
    \hline
    \end{tabular}}

    \label{tab:ablation study audio}
\end{table}

\end{document}